\definecolor{fhcolor}{rgb}{0.523, 0.235, 0.625}
\theoremstyle{plain}
\newtheorem{theorem}{Theorem}
\newtheorem{Proposition}{Proposition}
\newtheorem{lemma}{Lemma}
\theoremstyle{definition}
\newtheorem{definition}{Definition}
\newtheorem{assumption}{Assumption}
\theoremstyle{remark}
\crefname{section}{Section}{Section}
\crefname{equation}{Eq.}{Eq.}
\providecommand{\realnum}{\mathbb{R}}
\definecolor{mygreen}{RGB}{30, 180, 50}
\icmltitlerunning{Benign Overfitting in Deep Neural Networks under Lazy Training}
\begin{document}

\etocdepthtag.toc{mtchapter}
\etocsettagdepth{mtchapter}{subsection}
\etocsettagdepth{mtappendix}{none}

\twocolumn[
\icmltitle{Benign Overfitting in Deep Neural Networks under Lazy Training}

\begin{icmlauthorlist}
\icmlauthor{Zhenyu Zhu}{EPFL}
\icmlauthor{Fanghui Liu}{EPFL}
\icmlauthor{Grigorios G Chrysos}{EPFL}
\icmlauthor{Francesco Locatello}{AWS}
\icmlauthor{Volkan Cevher}{EPFL}
\end{icmlauthorlist}

\icmlaffiliation{EPFL}{Laboratory for Information and Inference Systems, \'{E}cole Polytechnique F\'{e}d\'{e}rale de Lausanne (EPFL), Switzerland}
\icmlaffiliation{AWS}{Amazon Web Services (Work done outside of Amazon)}

\icmlcorrespondingauthor{Zhenyu Zhu}{zhenyu.zhu@epfl.ch}
\icmlcorrespondingauthor{Fanghui Liu}{fanghui.liu@epfl.ch}

\icmlkeywords{Machine Learning, ICML}

\vskip 0.3in
]

\printAffiliationsAndNotice{}

\begin{abstract}
This paper focuses on over-parameterized deep neural networks (DNNs) with ReLU activation functions and proves that when the data distribution is well-separated, DNNs can achieve \textit{Bayes-optimal} test error for classification while obtaining (nearly) zero-training error under the lazy training regime. For this purpose, we unify three interrelated concepts of overparameterization, benign overfitting, and the Lipschitz constant of DNNs. Our results indicate that interpolating with smoother functions leads to better generalization. Furthermore, we investigate the special case where interpolating smooth ground-truth functions is performed by DNNs under the Neural Tangent Kernel (NTK) regime for generalization. Our result demonstrates that the generalization error converges to a constant order that only depends on label noise and initialization noise, which theoretically verifies benign overfitting. Our analysis provides a tight lower bound on the normalized margin under non-smooth activation functions, as well as the minimum eigenvalue of NTK under high-dimensional settings, which has its own interest in learning theory. 
\end{abstract}

\section{Introduction}
\label{sec:introduction}
\looseness=-1Benign overfitting has attracted significant research interest recently in an effort to understand why predictors with zero training loss can still achieve counter-intuitively good generalization performance even in the presence of noise~\citep{koehler2021uniform, zou2021benign,chatterji2022foolish,wang2022tight,mei2022generalization}. 
Current efforts on benign overfitting mainly focus on the finite sample behavior under linear regression~\citep{Bartlett_2020, chatterji2021interplay,zou2021benign}, kernel-based estimators~\citep{mei2022generalization,Liang2019MultipleDescent}, and logistic regression~\citep{Montanari2019generalization,wang2021benign}.

To our knowledge, results on neural networks (NNs) are restricted to two-layer neural networks \cite{tsigler2020benign, ju2021generalization, frei2022benign, cao2022benign} and three-layer neural networks but only the last layer is trained \cite{ju2022on}. The extension from shallow NNs to deep neural networks (DNNs) is non-trivial:
\emph{under what conditions does benign overfitting occur in deep neural networks?} and \emph{what makes them special?} are still open problems in both statistical learning theory and deep learning theory.

In this work, we address this open question in benign overfitting of deep ReLU NNs for binary classification under the lazy training regime.
We assume the network is trained by stochastic gradient descent (SGD) on well-separated data under adversarially corrupted labels, following the standard problem setting of \citet{frei2022benign}.
We prove that the  ReLU DNN exhibits benign overfitting, i.e., obtaining \textit{Bayes-optimal} test error while obtaining zero training error under the lazy training regime.

Our results establish a rigorous connection between the Lipschitz constants of DNNs and benign overfitting. We demonstrate that interpolating with (Lipschitz) smoother functions leads to a faster convergence rate on the generalization guarantees. 
Accordingly, for a better understanding of how the estimator by DNNs interpolates the ground-truth function, we also consider a regression task for DNNs from an approximation theory view \cite{cucker2007learning}, interpolating the smooth ground-truth function by DNNs under the neural tangent kernel (NTK) regime~\citep{Bach2017,jacot2018neural}.

Overall, we expect our results to foster a refined analysis of the generalization guarantees for large dimensional machine learning models, especially on DNNs.

\subsection{Contributions and technical challenges}
In this paper, we consider a finite sample behavior, in which the input dimension $d$ can be large but fixed, or comparably large with the number of training data $n$ and model parameters to obtain a dimension-free bound~\citep{Bartlett_2020,ju2022on,li2021towards}. Our main contributions are summarized below:

\begin{itemize}
    \item We adhere to the standard data setting with label noise that has been previously explored in two-layer networks~\citep{frei2022benign}, along with the model setting for multi-layer fully connected neural networks~\citep{cao2019generalization, allen2019convergence}. Building upon the proof concept of~\citep{frei2022benign}, we extend our results to deep ReLU neural networks, which presents a considerable challenge in connecting the Bayes-optimal test error to the training dynamics of deep neural networks. \cref{thm:main_thm} for binary classification shows that, under the lazy training regime, even though training on noisy data, DNNs can still obtain the \textit{Bayes-optimal} test error, i.e., the error rate is less than the proportion of incorrect labels in the training set plus a generalization term that converges to zero. We also demonstrate that this term is positively correlated with the Lipschitz constants of DNNs, which implies that interpolating more smooth functions leads to a faster convergence rate.
    \item \cref{thm:min_eigen_NTK} provides the first lower bound on the minimum eigenvalues of the NTK matrix of DNNs in the high-dimensional setting and demonstrates its phase transition under different tendencies of the number of training data and input dimension. We believe that it has its own interest in learning theory.
    \item \cref{thm:kernel} builds the generalization guarantees of over-parameterized neural networks under the NTK regime in the high-dimensional setting to learn a ground-truth function in RKHS. 
    Our result exhibits a phase transition on the excess risk (related to generalization performance) between the $n<d$ and $n>d$ case.
    It implies that the excess risk finally converges to a constant order only relying on the label noise and initialization noise, which theoretically verifies the benign overfitting.
\end{itemize}

{\bf Technical challenges.} 
The main technical challenge of this paper is how to derive the lower bound of the non-smooth function in deep ReLU neural networks for the normalized margin on test points. In the context of lazy training~\citep{NEURIPS2019_ae614c55}, the function of a neural network is nearly linear during the initial stages of training. By analyzing the accumulation of weights for each training step, we can establish a lower bound for the normalized margin on test points. By doing so, we transform the \textit{Bayes-optimal} test error to the expected risk and Lipschitz constant of DNNs.

When compared with~\citet{frei2022benign} on shallow neural networks with smooth activation functions for binary classification, we extend their results to our deep ReLU neural networks, not limited to high-dimensional settings, and obtain a faster convergence rate with the number of data. 
The key difficulty lies in how to build the relationship between the \textit{Bayes-optimal} test error and the training dynamics of DNNs. When compared to the generalization results of deep neural networks (DNNs) in the over-parameterized regime~\citep{cao2019generalization}, our results focus on overfitted models that are trained by noisy data and achieve a faster convergence rate. Besides, \citet{ju2022on} present generalization guarantees on three-layer neural networks (only the last layer is training) for regression, which has the closed-form min $L^2$-norm solution.
However, this nice property is invalid in our DNN setting.
In this case, we build the connection between DNNs and kernel methods (e.g., NTK) in high dimensional settings for benign overfitting.

\subsection{Related work}
{\bf Benign overfitting:} There has been a significant amount of research devoted to understanding the phenomenon of benign overfitting, with a particular emphasis on linear models, e.g., linear regression~\citep{Bartlett_2020, chatterji2021interplay,zou2021benign}, sparse linear regression~\citep{chatterji2022foolish, koehler2021uniform,wang2022tight}, logistic regression~\citep{Montanari2019generalization,wang2021benign}, ridge regression~\citep{tsigler2020benign} and kernel-based estimators~\citep{mei2022generalization,Liang2019MultipleDescent}.
Furthermore, the concept of benign overfitting can be extended to tempered or catastrophic based on various spectra of the kernel (ridge) regression \cite{mallinar2022benign}. 

For nonlinear models,~\citet{li2021towards} study the benign overfitting phenomenon of random feature models. \citet{frei2022benign} prove that a two-layer fully connected neural network exhibits benign overfitting under certain conditions, e.g., well-separated log-concave distribution and smooth activation function. Then~\citet{Xingyu2023Benign} extends the previous results to the non-smooth case. Similarly, \citet{cao2022benign} focus on the benign overfitting of two-layer convolutional neural networks (CNN). 

\citet{mallinar2022benign} argue that many true interpolation methods (such as neural networks) for noisy data are not benign but tempered overfitting, and even catastrophic under various model capacities.

{\bf Generalization of NNs and Neural Tangent Kernel (NTK):} The generalization ability of neural networks has been a core problem in machine learning theory. \citet{brutzkus2017sgd} show that SGD can learn an over-parameterized two-layer neural network with good generalization ability. \citet{allen2019learning} study the generalization performance of SGD for $2$- and $3$-layer networks.~\citet{cao2019generalization} study the training and generalization of deep neural networks (DNNs) in the over-parameterized regime. Besides,~\citet{arora2019fine,cao2020generalization,E_2020} provide the algorithm-dependent generalization bounds for different settings. 

The NTK~\citep{jacot2018neural} is a powerful tool for deep neural network analysis. Specifically, NTKs establish equivalence between the training dynamics of gradient-based algorithms for DNNs and kernel regression under specific initialization, so it can be considered as an intermediate step between simple linear models and DNNs~\citep{allen2019convergence, du2019gradient, chen2020much}. Besides, the convergence rate~\citep{arora2019exact} and generalization bound~\citep{cao2019generalization,zhu2022generalization,nguyen2021tight,bombari2022memorization} can be linked to the minimum eigenvalue of the NTK matrix. 

One can see that studying benign overfitting for DNNs is missing and it appears possible to borrow some ideas from deep learning theory, e.g., NTK. Nevertheless, we need to tackle the noisy training as well as the high-dimensional setting for benign overfitting, which is our main interest. 

\section{Problem Settings}
\label{sec:preliminaries}
In this section, we detail the problem setting for a deep ReLU neural network trained by SGD from the perspective of notations, neural network architecture, initialization schemes, and optimization algorithms. 

\subsection{Notation}
In this paper, we use the shorthand $ [n]:= \{1,2,\dots, n \}$ for a positive integer $n$.
We denote by $a(n) \gtrsim b(n)$: the inequality $a(n) \geq c b(n)$ that hides a positive constant $c$ that is independent of $n$. Vectors (matrices) are denoted by boldface, lower-case (upper-case) letters.
The standard Gaussian distribution is $\mathcal{N}(0, 1)$ with the zero-mean and the identity variance. 
We use the $\text{Lip}_{f}$ to represent the Lipschitz constant of the function $f$.
We follow the standard Bachmann–Landau notation in complexity theory e.g., $\mathcal{O}$, $o$, $\Omega$, and $\Theta$ for order notation.

\subsection{Network}
\label{ssec:network}

Here we introduce the formulation of DNNs.
We focus on the typical depth-$L$ fully-connected ReLU neural networks with scalar output, width $m$ on the hidden layers and $n$ training data, $\forall i\in[n]$:

\begin{equation}
\begin{matrix}
\bm{h}_{i,0} = \bm{x}_i; \\
\\
\bm{h}_{i,l} = \phi(\bm{W}_l\bm{h}_{i,l-1}); \quad \forall l\in[L-1]; \\
\\
f(\bm x_i; \bm W) = \bm{W}_L\bm{h}_{i,L-1};\\
\\
\end{matrix}
\label{eq:deep_network}
\end{equation}

where $\bm{x}_i\in \mathbb{R}^d$ is the input, $f(\bm{x}_i; \bm W) \in \mathbb{R}$ is the neural network output, and $\phi = \max(0,x)$ is the ReLU activation function. The neural network parameters formulate the tuple of weight matrices $\bm{W} := \{ \bm{W}_l \}_{l=1}^L \in  \{ \mathbb{R}^{m\times d} \times (\mathbb{R}^{m\times m})^{L-2}\times \mathbb{R}^{1 \times m} \}$.

{\bf Initialization:} 
We follow the standard Neural Tangent Kernel (NTK) initialization~\citep{allen2019convergence}:
\begin{equation}
\begin{matrix}
[\bm{W}_1]_{i,j}\sim \mathcal{N}(0,\frac{2}{m}); \quad \forall i, j \in [m] \times [d]; \\
\\
[\bm{W}_l]_{i,j}\sim \mathcal{N}(0,\frac{2}{m}); \quad \forall i, j \in [m] \quad\text{and}\quad l \in [L-2]+1; \\
\\
[\bm{W}_L]_{i,j}\sim \mathcal{N}(0,1); \quad \forall i, j \in [1] \times [m]. \\
\end{matrix}
\label{eq:initialization}
\end{equation}

The related Neural Tangent Kernel (NTK)~\citep{jacot2018neural} matrix of neural network $f$ can be expressed as:
\begin{equation}
K_{\tt NTK}(\bm{x},\widetilde{\bm{x}}) := \mathbb{E}_{\bm{W}}\left \langle \frac{\partial f(\bm{x};\bm{W})}{\partial\bm{W}},\frac{\partial f(\widetilde{\bm{x}};\bm{W})}{\partial\bm{W}}  \right \rangle\,.
\label{eq:NTK}
\end{equation}

By virtue of $\phi(x) = x\phi^{\prime}(x)$ of ReLU, we have $\bm{h}_{i,l} = \bm{D}_{i,l}\bm{W}_l\bm{h}_{i,l-1}$, where $\bm{D}_{i,l}$ is a diagonal matrix under the ReLU activation function defined as below.

\begin{definition}[Diagonal sign matrix]
\label{def:diagonal_sign_matrix}
For each $i \in [n]$, $l\in [L-1]$ and $k \in [m]$, the diagonal sign matrix $\bm{D}_{i,l}$ is defined as: $(\bm{D}_{i,l})_{k,k} = 1\left \{ (\bm{W}_l\bm{h}_{i,l-1})_k \geq 0 \right \} $.
\end{definition}

In addition, we define $\omega $-neighborhood to describe the difference between two matrices.

For any $\bm{W} \in \mathcal{W} $, we define its $\omega $-neighborhood as follows:

\begin{definition}[$\omega $-neighborhood]
\label{def:omega_neighborhood}
\begin{equation*}
\mathcal{B} (\bm{W},\omega ) :=\left \{ \bm{W}'\in \mathcal{W}: \left \| \bm{W}'_l - \bm{W}_l \right \|_{\mathrm{F}} \leq \omega , l \in [L]   \right \}\,.
\end{equation*}
\end{definition}

\subsection{Optimization algorithm}
In our work, a deep ReLU neural network is trained by SGD on the training data $\{ (\bm x_i, y_i) \}_{i=1}^n$ sampled from a joint distribution $P$.
The data generation process is deferred to \cref{ssec:data} for binary classification and \cref{sec:kernel} for regression.
We employ the logistic loss for classification, which is defined as $\ell (z) = \log(1+\exp(-z))$, and denote 
$g(z) := -\ell'(z) = \frac{1}{1+e^{z}}$ for notational simplicity.

The expected risk is defined as $ \mathbb{E}_{(\bm {x},y)\sim P}~ \ell \left(y f(\bm {x};\bm W)\right)$.
Denote the empirical risks under $\ell$ by: $\hat{L}(\bm{W}):= \frac{1}{n} \sum_{i=1}^{n}\ell \left(y_i f(\bm {x}_i;\bm W)\right)$, we employ SGD to minimize $\hat{L}(\bm{W})$ initialized at $\bm{W}^{(0)}$ with fixed step-size $\alpha > 0$, as shown in~\cref{alg:algorithm_SGD}. 

For notational simplicity, at step $t$, the neural network output is denoted as $f_i^{(t)} = f(\bm {x}_i;\bm{W}^{(t)})$ and the derivative of the loss function is realted to $g_i^{(t)} := g(y_i f_i^{(t)}) = g(y_i f(\bm {x}_i;\bm{W}^{(t)}))$.

\begin{algorithm}[t]
\caption{SGD for training DNNs}
\label{alg:algorithm_SGD}
\begin{algorithmic}
\STATE {\bfseries Input:} training data $\{ (\bm x_i, y_i) \sim P\}_{i=1}^n $ and step size $\alpha$.\\
\STATE   Gaussian initialization: $\bm{W}_l^{(0)} \sim \mathcal{N}(0,2/m)$, $l \in [L-1]$.\\
\STATE   Gaussian initialization: $\bm{W}_L^{(0)} \sim \mathcal{N}(0,1)$.\\
\FOR{$i=1$ {\bfseries to} $n$}
\STATE{
Draw $(\bm {x}_i, y_i  )$ from $\{ (\bm x_i, y_i)_{i=1}^n \}$.\\
$\bm{W}^{(i)} = \bm{W}^{(i-1)}- \alpha\cdot \nabla_{\bm{W}} \ell \big( y_i f(\bm x_i; \bm{W}^{(i-1)})  \big)\,.$} 
\ENDFOR \\
\textbf{Output}  $\bm{W}^{(n)}$ for the final network $f(\bm {x};\bm{W}^{(n)})$.
\end{algorithmic}
\end{algorithm}
\section{Main Results on Binary Classification}
\label{sec:main_result}

In this section, we present our main result on benign overfitting of a ReLU DNN for binary classification under the lazy training regime.
The data generation process is introduced in \cref{ssec:data}, and the related assumptions that are given in \cref{sec:assudata}.
Our main theory and proof sketch are presented in \cref{sec:theoryclass} and \cref{sec:proofsketchclass}, respectively. We use NTK initialization~\citep{allen2019convergence} in this section, but the main result can be easily extended to more initializations, such as He~\citep{he2015delving} and LeCun~\citep{lecun2012efficient}.

\subsection{Data generation process}
\label{ssec:data}

We consider a standard mixture model setting \cite{JMLR_v22_20_974, frei2022benign} in benign overfitting for binary classification, where a joint distribution $P$ is defined over $(\bm{x}, y) \in \mathbb{R}^{d} \times \left \{ \pm 1 \right \}$ and samples from this distribution can have noisy labels. Following~\citet{frei2022benign}, we first define the clean distribution $\widetilde{P}$ and then define the true distribution $P$ based on $\widetilde{P}$: 

\begin{enumerate}
    \item Sample a clean label $\widetilde{y}$ uniformly at random, $\widetilde{y} \sim \text{Uniform}(\left \{ + 1, -1 \right \})$.
    \item Sample $\bm{z} \sim P_{\text{clust}}$ that satisfy:
    \begin{itemize}
        \item $P_{\text{clust}} = P_{\text{clust}}^{(1)} \times \cdots \times P_{\text{clust}}^{(d)}$ is a product distribution whose marginals are all mean-zero with the sub-Gaussian norm at most one;
        \item $P_{\text{clust}}$ is a $\lambda$-strongly log-concave distribution over $\mathbb{R}^{d}$ for some $\lambda>0$;
        \item For some $\kappa$, it holds that $\mathbb{E}_{\bm{z}\sim P_{\text{clust}}}(\left \| \bm{z} \right \|^2) > \kappa d$.
    \end{itemize}
    \item Generate $\widetilde{\bm{x}} = \bm{z} + \widetilde{y}\bm{\mu}$.
    \item Then, given a noise rate $\eta \in [0, \frac{1}{2})$, $P$ is any distribution over $ \mathbb{R}^{d} \times \left \{ \pm 1 \right \}$ such that the marginal distribution of the features for $P$ and $\widetilde{P}$ coincide, and the total variation distance between the two distributions satisfies $d_{\text{TV}}(\widetilde{P}, P) \leq \eta$. Specifically, $P$ has the same marginal distribution over $\bm{x}$ as $\widetilde{P}$, but a sample $(\bm{x}, y)\sim P$ has a label equal to $\widetilde{y}$ with probability $1-\eta$ and has a label equal to $-\widetilde{y}$ with probability $\eta$. That is, the labels are flipped with $\eta$ ratio.
\end{enumerate}

We denote by $\mathcal{C} \subset [n]$ the set of indices corresponding to samples with clean labels, and $\mathcal{C}'$ as the set of indices corresponding to noisy labels so that $i \in \mathcal{C}'$ implies $(\bm{x}_i, y_i)\sim P$ is such that $y_i = -\widetilde{y}_i$ using the notation above.

\subsection{Assumptions}
\label{sec:assudata}
We make two assumptions about the data distribution.
\begin{assumption}[\citet{du2019gradient, allen2019convergence}]
\label{assumption:distribution_1}
We assume that the data is bounded, i.e. there is a constant $C_{\text{norm}}$ that satisfies $\| \bm x \|_{2} \leq C_{\text{norm}}$.
\end{assumption}

\begin{assumption}
\label{assumption:kernel}
For two different data sample $(\bm{x}_1,\widetilde{y}_1), (\bm{x}_2,\widetilde{y}_2) \sim \widetilde{P}$, The NTK kernel defined in~\cref{eq:NTK} satisfies that:
\begin{equation*}
\begin{split}
    & \mathbb{E}_{(\bm{x}_1,\widetilde{y}_1), (\bm{x}_2,\widetilde{y}_2) \sim \widetilde{P}}\big[K_{\tt NTK}(\bm{x}_1,\bm{x}_2)|\widetilde{y}_1 = \widetilde{y}_2\big] \\
    - & \mathbb{E}_{(\bm{x}_1,\widetilde{y}_1), (\bm{x}_2,\widetilde{y}_2) \sim \widetilde{P}}\big[K_{\tt NTK}(\bm{x}_1,\bm{x}_2)|\widetilde{y}_1 \neq \widetilde{y}_2\big]\\
    \geq & C_{N} > 0\,.
\end{split}
\end{equation*}
\end{assumption}
{\bf Remark:} This assumption states that the NTK value for data points belonging to the same class is larger than that for a different class, in expectation. 
This makes sense in practice, since, as a kernel, the NTK is able to evaluate the similarity of two data points \cite{scholkopf2002learning}: if they are from the same class, the similarity value is large and vice versa.
To verify this assumption, we give an example of the two-layer NTK over a uniform distribution inside a multidimensional sphere such that $C_N = \Theta(1/\sqrt{d})$, refer to \cref{sec:example_of_assumption}.

Besides, we also empirically verify our assumption on MNIST~\citep{726791} with ten digits from 0 to 9. We randomly sample $1,000$ data for each digit and calculate the empirical mean (to approximate the expectation) of the two-layer NTK kernel value over these digit pairs. The experimental result is shown in~\cref{fig:experiemnt}.
We can see that the confusion matrix usually has a larger diagonal element than its non-diagonal element, which implies that the kernel value on the same class is often larger than that of different classes.
This verifies the justification of our assumption.

\begin{figure}
    \centering 
    \includegraphics[width=1.1\linewidth]{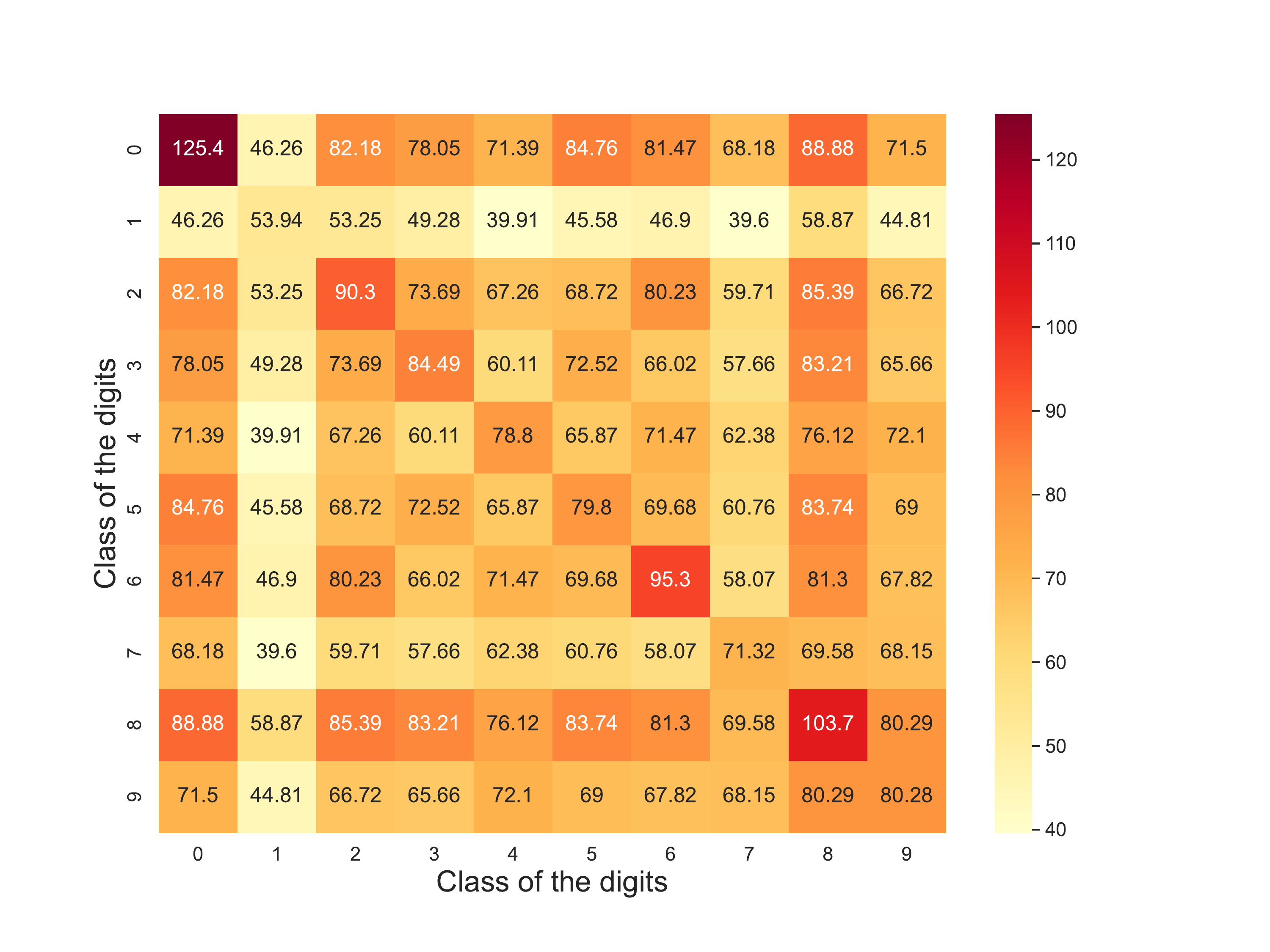}
\caption{Averaged kernel values among 10 classes in MNIST, where a larger kernel value indicates a higher similarity between two data pairs.}
\label{fig:experiemnt}
\end{figure}

\subsection{Theoretical guarantees}
\label{sec:theoryclass}
Based on our assumption, we are ready to present our theoretical result that the test error of a ReLU DNN is close to the Bayes-optimal (noise rate).

\begin{theorem}
\label{thm:main_thm}
    Given a DNN defined by~\cref{eq:deep_network} and trained by~\cref{alg:algorithm_SGD} with a step size 
    $\alpha  \gtrsim L^{-2}(\log m)^{-5/2}$. Then under \cref{assumption:distribution_1} and~\ref{assumption:kernel}, for $\omega \leq \mathcal{O}(L^{-9/2}(\log m)^{-3})$ and $\lambda > 0$, with probability at least $1 - \mathcal{O}(nL^2)\exp(-\Omega(m\omega^{2/3}L))$, we have:
    \begin{equation*}
    \begin{split}
        & \mathbb{P}_{(\bm{x},y)\sim P}(y \neq \mathrm{sgn}(f(\bm{x};\bm{W}^{(n)})))\\
        \leq & \eta + \exp\bigg(-\lambda\Theta\bigg( \frac{n\alpha(1-2\eta)C_{N}}{\text{Lip}_{f(\bm{x};\bm{W}^{(n)})}} \bigg)^2 \bigg)\,,
    \end{split}
    \end{equation*}

    where the $\eta$ is the noise rate defined in~\cref{ssec:data} and $C_N$ is defined in~\cref{assumption:kernel}.
\end{theorem}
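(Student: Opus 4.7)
The plan is to bound the test error by $\eta$ plus a margin-concentration term, then lift the margin lower bound from the linearized NTK model back to the nonlinear DNN via lazy training. First I would peel off the label noise using $d_{\mathrm{TV}}(\widetilde{P}, P) \leq \eta$, which gives
\[
\mathbb{P}_{P}\bigl(y \neq \mathrm{sgn}(f(\bm{x}; \bm{W}^{(n)}))\bigr) \leq \eta + \mathbb{P}_{\widetilde{P}}\bigl(\widetilde{y}\, f(\bm{x}; \bm{W}^{(n)}) \leq 0\bigr),
\]
so the remaining task is a high-probability lower bound on the signed margin $\widetilde{y}\, f(\bm{x}; \bm{W}^{(n)})$ under the clean distribution $\widetilde{P}$.

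To evaluate that margin I would exploit lazy training: \cref{alg:algorithm_SGD} keeps the iterates in $\mathcal{B}(\bm{W}^{(0)}, \omega)$ for $\omega \leq \mathcal{O}(L^{-9/2}(\log m)^{-3})$, during which the ReLU sign matrices $\bm{D}_{i,l}$ of \cref{def:diagonal_sign_matrix} are almost frozen and $\nabla_{\bm{W}} f(\bm{x}; \cdot)$ stays close to its value at initialization (standard Allen-Zhu--Li--Song / Cao--Gu perturbation estimates for deep ReLU nets give exactly the failure probability $\mathcal{O}(nL^{2})\exp(-\Omega(m\omega^{2/3}L))$ in the statement). Telescoping along the SGD trajectory with the update $\bm{W}^{(i)} - \bm{W}^{(i-1)} = \alpha g_i^{(i-1)} y_i \nabla_{\bm{W}} f(\bm{x}_i; \bm{W}^{(i-1)})$ then yields
\[
\widetilde{y}\, f(\bm{x}; \bm{W}^{(n)}) = \widetilde{y}\, f(\bm{x}; \bm{W}^{(0)}) + \alpha \sum_{i=1}^{n} g_i^{(i-1)}\, \widetilde{y}\, y_i\, K_{\tt NTK}(\bm{x}, \bm{x}_i) + \epsilon,
\]
where $\epsilon$ collects the linearization and empirical-kernel errors and is controlled by $\omega$ and $m$. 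Splitting the sum according to $i \in \mathcal{C}$ versus $i \in \mathcal{C}'$, taking expectation over $(\bm{x}, \bm{x}_i)$, and invoking \cref{assumption:kernel} produces a partial cancellation with net signal of order $n\alpha(1-2\eta)C_N$. The step-size condition $\alpha \gtrsim L^{-2}(\log m)^{-5/2}$ combined with the $\omega$-neighborhood of \cref{def:omega_neighborhood} then lets me certify $g_i^{(i-1)} = \Theta(1)$ uniformly in $i$, so the signal prefactor does not degenerate.

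To turn this expected margin into a pointwise bound I would use that $\bm{z} \sim P_{\mathrm{clust}}$ is $\lambda$-strongly log-concave, so by standard concentration for strongly log-concave measures any $\text{Lip}_F$-Lipschitz function $F(\bm{z})$ satisfies
\[
\mathbb{P}\bigl(|F(\bm{z}) - \mathbb{E}F| \geq t\bigr) \leq 2\exp\!\left(-\Omega(\lambda)\, t^{2}/\text{Lip}_F^{2}\right).
\]
Applying this to $F(\bm{z}) = \widetilde{y}\, f(\bm{z} + \widetilde{y}\bm{\mu};\, \bm{W}^{(n)})$, whose Lipschitz constant in $\bm{z}$ equals $\text{Lip}_{f(\bm{x};\bm{W}^{(n)})}$, with mean at least $\Theta(n\alpha(1-2\eta)C_N)$, the margin is negative only on a Gaussian-like tail, producing the claimed $\exp\bigl(-\lambda\,\Theta(n\alpha(1-2\eta)C_N/\text{Lip}_{f(\bm{x};\bm{W}^{(n)})})^{2}\bigr)$ bound on $\mathbb{P}_{\widetilde{P}}(\widetilde{y}\, f \leq 0)$.

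The hard part is unquestionably the second step: deriving the linearization with a sharp perturbation error for \emph{deep} ReLU networks. Unlike the smooth two-layer setting of \citet{frei2022benign}, ReLU is non-differentiable, so there is no clean Taylor expansion; I would instead use the diagonal-sign-matrix formalism of \cref{def:diagonal_sign_matrix} and the $\omega$-neighborhood regularity of \cref{def:omega_neighborhood} to argue that only a small fraction of neurons flip sign and that $\nabla_{\bm{W}} f(\bm{x}; \bm{W}^{(i-1)})$ differs from $\nabla_{\bm{W}} f(\bm{x}; \bm{W}^{(0)})$ by at most $\mathrm{poly}(L)\,\omega^{1/3}$ in Frobenius norm. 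Secondary obstacles include showing that the empirical kernel $\langle \nabla f(\bm{x}; \bm{W}^{(0)}), \nabla f(\bm{x}_i; \bm{W}^{(0)})\rangle$ concentrates around $K_{\tt NTK}$ uniformly in $\bm{x}$ at an $L$-dependent rate, and verifying that $g_i^{(i-1)}$ stays $\Theta(1)$ throughout SGD so the accumulated signal indeed scales linearly in $n$ rather than collapsing.
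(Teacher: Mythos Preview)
Your proposal is correct and follows essentially the same route as the paper: peel off $\eta$ via total variation, telescope the SGD updates using near-linearity of $f$ in $\mathcal{B}(\bm{W}^{(0)},\omega)$ (the paper's \cref{lemma:almost_linear} and \cref{lemma:change_every_step}), invoke \cref{assumption:kernel} after the clean/noisy split to obtain $\mathbb{E}_{\widetilde{P}}[\widetilde{y}f(\bm{x};\bm{W}^{(n)})]\geq \Theta(n\alpha(1-2\eta)C_N)$ (\cref{lemma:normalized_margin}), and finish with log-concave Lipschitz concentration (\cref{lemma:lipschitz_concentration}). One small correction: the condition $\alpha \gtrsim L^{-2}(\log m)^{-5/2}$ is not what certifies $g_i^{(i-1)}=\Theta(1)$---that follows solely from the $\omega$-neighborhood bound $|f(\bm{x}_i;\bm{W}^{(t)})|\leq \tfrac{3}{2}C_{\mathrm{norm}}$; rather, the step-size lower bound is used so that the per-step signal $\Theta(\alpha(1-2\eta)C_N)$ dominates the telescoping linearization error $\varepsilon=\mathcal{O}(\sqrt{\omega^{2}L^{5}\log m})=\mathcal{O}(L^{-2}(\log m)^{-5/2})$.
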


{\bf Remark:}~\cref{thm:main_thm} provides the upper bound on the test error rate, including two parts. The first part is the proportion of the wrong labels in the training data.
The second part exponentially decreases with the square of the number of training samples $n$. 
Also, this term is positively correlated with the Lipschitz constants of DNNs after training, which implies that interpolating more smooth functions leads to a faster convergence rate. We take a closer look at this phenomenon in~\cref{sec:kernel}, analyzing how neural networks interpolate target functions in a regression setting. Overall, this bound shows that the models overfit the wrong or noisy data on the training set, but still achieve good generalization error on the testing set. This is consistent with previous work on broader settings of benign overfitting that are not limited to classification problems with label noise. For example, various regression problems~\citep{Bartlett_2020, zou2021benign, chatterji2022foolish, koehler2021uniform, tsigler2020benign}, classification problems of 2-layer networks~\citep{frei2022benign, cao2022benign}, 2-layer and 3-layer NTK networks~\citep{ju2021generalization,ju2022on}.


Here we discuss the (nearly) zero-training loss and how the Lipschitz constant affects our error bounds.

{\bf SGD can obtain arbitrarily (nearly) zero-training errors on the training set:} A lot of work has shown that deep neural networks trained with SGD can obtain zero training error on the training set and perfectly fit any training label in both classification and regression problems with mean squared loss or logistic loss~\citep{du2018gradient, du2019gradient,NEURIPS2019_ae614c55,zou2020gradient}. These results for empirical loss need to be over-parameterized by the condition that $m = \text{poly}(n, L)$.
In~\cref{sec:optimization}, we provide proof that the loss can be arbitrarily small on the training set under the setting of~\cref{sec:preliminaries}.
This indicates that when the training data has label noise, the neural network will learn all the noise, that is, overfitting. Combined with the bound of the test error rate in ~\cref{thm:main_thm}, we can say that the deep neural network has a benign overfitting phenomenon.

{\bf Lipschitz constant of the deep neural network:}~\cref{thm:main_thm} shows that the convergence rate of the test error rate with the amount of data and is closely related to the Lipschitz constant of the neural network.
The Lipschitz constant of DNNs has been widely studied in \cite{bubeck2021law,wu2021wider,huang2021exploring,nguyen2021tight}.
For example, for ReLU DNNs, if we employ the result of~\citet[Theorem 6.2]{nguyen2021tight}: $\text{Lip}_{f} \lesssim \mathcal{O}\big((2\log m)^{L-1}\big)$, then our bound is 
\begin{equation*}
\begin{split}
   & \mathbb{P}_{(\bm{x},y)\sim P}(y \neq \mathrm{sgn}(f(\bm{x};\bm{W}^{(n)})))   \\
    & \qquad \lesssim \eta + \exp\left(-\lambda\Theta\Big( \frac{n}{(2\log m)^{L-1}} \Big)^2 \right)\,,
\end{split}
\end{equation*}
 which leads to a better convergence rate on generalization than the two-layer result~\citet{frei2022benign}.

\subsection{Proof sketch of \cref{thm:main_thm}}
\label{sec:proofsketchclass}
Let us first introduce a few relevant lemmas.

The first Lemma will follow by establishing a lower bound for the expected normalized margin on clean points, $\mathbb{E}_{(\bm{x},\widetilde{y})\sim \widetilde{P}}[\widetilde{y}f(\bm{x};\bm{W})]/\text{Lip}_{f(\bm{x};\bm{W}^{(t)})}$.
\begin{restatable}{lemma}{lipschitzconcentration}\label{lemma:lipschitz_concentration}
Given a DNN defined by~\cref{eq:deep_network} and trained by~\cref{alg:algorithm_SGD}. For any $t\geq 0$, assuming $\mathbb{E}_{(\bm{x},\widetilde{y})\sim \widetilde{P}}[\widetilde{y}f(\bm{x};\bm{W}^{(t)})] \geq 0$, then we have:
    \begin{equation*}
    \begin{split}
        & \mathbb{P}_{(\bm{x},y)\sim P}(y \neq \mathrm{sgn}(f(\bm{x};\bm{W}^{(t)}))) \\
        \leq & \eta + \exp\bigg(-\frac{\lambda}{4} \bigg( \frac{\mathbb{E}_{(\bm{x},\widetilde{y})\sim \widetilde{P}}[\widetilde{y}f(\bm{x};\bm{W}^{(t)})]}{\text{Lip}_{f(\bm{x};\bm{W}^{(t)})}} \bigg)^2 \bigg)\,.
    \end{split}
    \end{equation*}
\end{restatable}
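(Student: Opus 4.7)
The plan is to reduce the noisy-test misclassification probability to a clean-margin deviation event and then to control the latter via the sub-Gaussian concentration that $\lambda$-strong log-concavity of $P_{\text{clust}}$ grants to Lipschitz observables. First, the total-variation bound $d_{\text{TV}}(\widetilde{P},P)\le\eta$ built into the data-generation protocol yields $\mathbb{P}_P(A)\le\mathbb{P}_{\widetilde{P}}(A)+\eta$ for every measurable $A$. Specializing to $A=\{y\neq\mathrm{sgn}(f(\bm{x};\bm{W}^{(t)}))\}$ and noting that under $\widetilde{P}$ this misclassification event equals $\{\widetilde{y}f(\bm{x};\bm{W}^{(t)})\le 0\}$ reduces the task to upper-bounding
\begin{equation*}
\mathbb{P}_{\widetilde{P}}\big(\widetilde{y}f(\bm{x};\bm{W}^{(t)})\le 0\big)=\mathbb{P}_{\widetilde{P}}\big(\widetilde{y}f-\mu\le -\mu\big),
\end{equation*}
where $\mu:=\mathbb{E}_{\widetilde{P}}[\widetilde{y}f(\bm{x};\bm{W}^{(t)})]\ge 0$ by hypothesis.

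For the deviation I would invoke the standard sub-Gaussian concentration for Lipschitz observables of strongly log-concave measures: $\lambda$-strong log-concavity of $P_{\text{clust}}$ implies a log-Sobolev inequality with constant $1/\lambda$ and hence, via Herbst's argument, the dimension-free one-sided tail
\begin{equation*}
\mathbb{P}_{\bm{z}\sim P_{\text{clust}}}\big(h(\bm{z})-\mathbb{E}h\le -t\big)\le\exp\Big(-\frac{\lambda t^2}{2\,\text{Lip}_h^{\,2}}\Big)
\end{equation*}
for every Lipschitz $h:\mathbb{R}^d\to\mathbb{R}$. Applied conditionally on the label, the observable $\bm{z}\mapsto\widetilde{y}f(\bm{z}+\widetilde{y}\bm{\mu};\bm{W}^{(t)})$ is $\text{Lip}_{f(\bm{x};\bm{W}^{(t)})}$-Lipschitz in $\bm{z}$ (since $|\widetilde{y}|=1$), so this tail with $t=\mu$ together with a subsequent aggregation over $\widetilde{y}\sim\text{Uniform}(\{\pm 1\})$ by the tower property is designed to produce the advertised bound $\exp\big(-\lambda\mu^{2}/(4\,\text{Lip}_f^{\,2})\big)$; the $1/4$ rather than $1/2$ is exactly the price paid for averaging two conditional sub-Gaussian tails.

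The main obstacle is this last aggregation step: because the mixture $\widetilde{P}$ is a superposition of two translates of $P_{\text{clust}}$, it is not itself log-concave, so the tail above cannot be applied directly to $\widetilde{y}f$ on the joint space $\mathbb{R}^{d}\times\{\pm 1\}$. Writing the conditional margins $M_{\pm}:=\pm\,\mathbb{E}_{\bm{z}}[f(\bm{z}\pm\bm{\mu};\bm{W}^{(t)})]$ so that $\mu=(M_{+}+M_{-})/2$, the plan is to combine the symmetry of the mean-zero product distribution $P_{\text{clust}}$ about the origin with the Lipschitz control $|M_{+}-M_{-}|\le 2\,\text{Lip}_{f(\bm{x};\bm{W}^{(t)})}\|\bm{\mu}\|$ to push the averaged hypothesis $\mu\ge 0$ through to per-label sub-Gaussian control, and then to collapse the two conditional exponentials into a single tail in $\mu$ either by a Jensen-type argument on the moment-generating function or by a direct convexity argument on $x\mapsto\exp(-cx^{2})$ on the relevant range. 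Apart from this mixing step, the remaining ingredients---the TV coupling and the log-concavity concentration---are entirely standard.
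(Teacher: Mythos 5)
Your first step and your choice of concentration tool coincide with the paper's proof: the paper reduces to the clean distribution via the total-variation argument (citing the corresponding lemmas of Chatterji--Long and Frei et al.), then applies the Lipschitz concentration inequality for $\lambda$-strongly log-concave measures (Theorem 3.16 of Wainwright, whose two-sided form already carries the constant $\lambda/4$) to $\widetilde{y}f(\bm{x};\bm{W}^{(t)})$, sets the deviation level equal to $\mathbb{E}[\widetilde{y}f(\bm{x};\bm{W}^{(t)})]$, and drops the factor of $2$ by passing to the one-sided event $\{\widetilde{y}f<0\}$. In particular, the $1/4$ is not ``the price paid for averaging two conditional sub-Gaussian tails''; it is simply the constant in the cited inequality.

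The genuine gap is the aggregation step that you flag yourself and do not close. Conditioning on $\widetilde{y}$ produces two tails at the conditional means $M_{\pm}$, and neither of your proposed devices collapses them into a single tail at $\mu=(M_{+}+M_{-})/2$: convexity of $x\mapsto e^{-cx^{2}}$ (where it holds at all) gives $\tfrac{1}{2}\big(e^{-cM_{+}^{2}}+e^{-cM_{-}^{2}}\big)\ge e^{-c\mu^{2}}$, the wrong direction, and the function is concave near the origin in any case; worse, if $M_{-}<0$ the corresponding conditional tail bound is vacuous and that branch can contribute probability close to $1/2$ even though $\mu\ge 0$. The hypothesis $\mu\ge 0$ alone therefore cannot carry your route; what is actually needed is a per-class lower bound of the form $M_{+}\wedge M_{-}\gtrsim\mu$, i.e., the margin lower bound (Lemma 4) established conditionally on each label, not a post-hoc convexity or MGF trick. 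The paper sidesteps the issue entirely by applying the concentration inequality directly to $\widetilde{y}f(\bm{x};\bm{W}^{(t)})$ under $\widetilde{P}$, implicitly treating it as a Lipschitz observable of the log-concave cluster variable with a single mean --- so your observation that the two-component mixture is not itself log-concave is a fair criticism of the paper's write-up, but as a proof your attempt stalls exactly where you say it does, and the suggested fixes would not repair it.
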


We next introduce some structural results concerning the neural network optimization objective. The following lemma states that near initialization, the neural network function is almost linear in terms of its weights.

\begin{restatable}{lemma}{almostlinear}\label{lemma:almost_linear}
Let $\bm{W}, \bm{W}' \in \mathcal{B} (\bm{W}^{(0)},\omega )$  with $\omega = \mathcal{O} (L^{-9/2}(\log m)^{-3} )$, for any $\bm{x}\in \mathbb{R}^d$ that satisfy~\cref{assumption:distribution_1}, with probability at least $1- \exp(-\Omega(m\omega^2\log m))-\mathcal{O}(nL^2)\exp(-\Omega(m\omega^{2/3}L))$, we have:

\begin{equation*}
\begin{split}
    & |f(\bm{x}; \bm{W}) - f(\bm{x}; \bm{W}') - \left \langle \nabla f(\bm{x}; \bm{W}' ), \bm{W} - \bm{W}'\right \rangle| \\
    \leq & \mathcal{O}(\sqrt{\omega^2 L^3 m \log m})\sum_{l=1}^{L-1}\left \| \bm{W}_l-\bm{W}_l' \right \|_2\,.
\end{split}
\end{equation*}
\end{restatable}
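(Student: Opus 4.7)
The plan is to adapt the standard semi-smoothness / almost-linearity machinery for deep ReLU networks in the NTK regime (as in Allen-Zhu--Li--Song and Cao--Gu) to our setting, comparing perturbations around an arbitrary $\bm{W}'$ rather than around $\bm{W}^{(0)}$. The core observation is that the only source of nonlinearity in the forward pass is the change of the diagonal sign matrices $\bm{D}_l$ (from \cref{def:diagonal_sign_matrix}); once those are frozen, the network is linear in the weights and the linearization error vanishes exactly. So the entire argument reduces to controlling sign-flip counts under small perturbations and propagating those through $L$ layers.

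First, I would set up notation: let $\bm{h}_l, \bm{D}_l$ denote the hidden representations and sign matrices induced by $\bm{W}$, and $\bm{h}_l', \bm{D}_l'$ the ones induced by $\bm{W}'$. Under the Gaussian NTK initialization \eqref{eq:initialization}, standard concentration arguments give, with probability at least $1-\mathcal{O}(nL^2)\exp(-\Omega(m\omega^{2/3}L))$, the following ``near-initialization'' facts on the event I work on: (i) $\|\bm{h}_l^{(0)}\|_2 = \Theta(\|\bm{x}\|_2)$, (ii) for any $\bm{W} \in \mathcal{B}(\bm{W}^{(0)},\omega)$ the intermediate products $\bm{D}_l \bm{W}_l \cdots \bm{D}_{l'+1}\bm{W}_{l'+1}$ have operator norm $\mathcal{O}(\sqrt{L})$, and (iii) the sign difference $\bm{D}_l - \bm{D}_l'$ between two such perturbations has at most $\mathcal{O}(m\omega^{2/3}L)$ nonzero diagonal entries, as a consequence of Gaussian anti-concentration of the pre-activations. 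These three ingredients are the workhorses and essentially the same as those already in the literature; I would cite them rather than reprove them.

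Next, I would write out the telescoping identity
\begin{equation*}
f(\bm{x};\bm{W})-f(\bm{x};\bm{W}') = \sum_{l=1}^{L}\bm{W}_L \Big(\prod_{l'=L-1}^{l+1}\bm{D}_{l'}\bm{W}_{l'}\Big)\bm{D}_l(\bm{W}_l-\bm{W}_l')\bm{h}_{l-1}'\ +\ \mathrm{cross\ terms},
\end{equation*}
and the analogous decomposition for the linearization $\langle \nabla f(\bm{x};\bm{W}'),\bm{W}-\bm{W}'\rangle$, which has the identical form but with every $\bm{D}_{l'}$ replaced by $\bm{D}_{l'}'$. Subtracting the two, each nontrivial term involves at least one factor $(\bm{D}_{l'}-\bm{D}_{l'}')$. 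Using (iii) the rank/support of these factors is small, and sandwiching between the operator-norm bounds of (ii) gives that each such term is at most $\mathcal{O}(\sqrt{\omega^{2/3}L^{2}\cdot m\log m})$ times $\|\bm{W}_l-\bm{W}_l'\|_2$; summing the $L$ terms and using $\omega = \mathcal{O}(L^{-9/2}(\log m)^{-3})$ to absorb lower-order contributions yields the stated bound $\mathcal{O}(\sqrt{\omega^2 L^3 m\log m})\sum_l \|\bm{W}_l - \bm{W}_l'\|_2$.

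I expect the main obstacle to be bookkeeping in the telescoping decomposition rather than any deep new ingredient: there are $\Theta(L)$ mixed terms, each of which must be routed through a different ``backward product'', and one must keep careful track of which sign matrix gets replaced at each step so that the $\bm{D}_{l'}-\bm{D}_{l'}'$ sparsity can be exploited. A secondary difficulty is that the argument is usually phrased around $\bm{W}^{(0)}$, whereas here it is around an arbitrary $\bm{W}' \in \mathcal{B}(\bm{W}^{(0)},\omega)$; this is handled by noting that both $\bm{W}$ and $\bm{W}'$ are themselves in $\mathcal{B}(\bm{W}^{(0)},\omega)$, so all the concentration statements above apply to both parameter settings simultaneously, with a union bound absorbed into the failure probability.
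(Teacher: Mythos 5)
Your overall strategy is the same as the paper's: write the linearization error as a difference of forward/backward products, invoke the Allen-Zhu--Li--Song machinery (telescoping of $\bm{h}_{L-1}-\bm{h}_{L-1}'$ through sparse sign-flip matrices $\bm{D}''$ with $\|\bm{D}''\|_0 \leq \mathcal{O}(m\omega^{2/3}L)$, boundedness of $\|\bm{h}_{l}'\|_2$ from Cao--Gu, and operator-norm control of the perturbed backward products), and union-bound over the two parameter settings. Up to that point the proposal matches the paper's proof step for step.

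The genuine gap is in your last quantitative step. You bound each term by $\mathcal{O}(\sqrt{\omega^{2/3}L^{2}\,m\log m})\,\|\bm{W}_l-\bm{W}_l'\|_2$ and assert that the condition $\omega=\mathcal{O}(L^{-9/2}(\log m)^{-3})$ lets you ``absorb'' this into the target $\mathcal{O}(\sqrt{\omega^{2}L^{3}m\log m})$. The absorption goes in the wrong direction: the ratio of your prefactor to the claimed one is $\sqrt{\omega^{-4/3}L^{-1}}=\omega^{-2/3}L^{-1/2}$, which under the stated choice of $\omega$ equals $L^{5/2}(\log m)^{2}\gg 1$, so your bound is strictly weaker than the lemma's statement and cannot be tightened by shrinking $\omega$. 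The paper avoids this by not multiplying ``sparsity of $\bm{D}-\bm{D}'$'' against operator norms term by term; instead it applies Lemma 8.7 of Allen-Zhu et al.\ directly to the \emph{difference of the two entire backward-product operators}, which yields $\mathcal{O}(\sqrt{L^{3}s\log m+\omega^{2}L^{3}m})$ for a sparsity parameter $s$, and then sets $s:=\Theta(m\omega^{2})$ so that both terms equal $\mathcal{O}(\sqrt{\omega^{2}L^{3}m\log m})$ (this choice also produces the $\exp(-\Omega(m\omega^{2}\log m))$ term in the failure probability, which your route would not generate). To close your argument you would either need to reproduce that sharper operator-norm estimate, or accept the weaker $\omega^{1/3}$-rate, which would then propagate into Lemma~\ref{lemma:normalized_margin} and force a larger lower bound on the step size $\alpha$ than the one stated in Theorem~\ref{thm:main_thm}.
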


The following lemma describes the change of $yf(\bm{x}; \bm{W}^{(t+1)})$ from time $t$ to $t+1$.

\begin{restatable}{lemma}{changeeverystep}\label{lemma:change_every_step}
Given a DNN defined by~\cref{eq:deep_network} and trained by~\cref{alg:algorithm_SGD}. For any $t\geq 0$ and $(\bm{x},\widetilde{y})\sim \widetilde{P}$ that satisfy~\cref{assumption:distribution_1},  with $\omega = \mathcal{O} (L^{-9/2}(\log m)^{-3} )$, with probability at least $1- \exp(-\Omega(m\omega^2\log m))-\mathcal{O}(nL^2)\exp(-\Omega(m\omega^{2/3}L))$, we have:
\begin{equation*}
\begin{split}
    & \widetilde{y}  [f(\bm{x}; \bm{W}^{(t+1)}) - f(\bm{x}; \bm{W}^{(t)})]\\
    \geq & \alpha g_i^{(t)} \left \langle \widetilde{y} \nabla f(\bm{x}; \bm{W}^{(t)} ), y_i \nabla f(\bm{x}_i; \bm{W}^{(t)} ) \right \rangle\\
    - & \mathcal{O}(\sqrt{\omega^2 L^3 m \log m})\sum_{l=1}^{L-1}\left \| \bm{W}^{(t+1)}_l - \bm{W}^{(t)}_{l} \right \|_2\,,
\end{split}
\end{equation*}
where $(\bm {x}_i, y_i )$ is the random selected training sample at step $t + 1$.
\end{restatable}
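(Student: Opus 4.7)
The plan is to derive Lemma \ref{lemma:change_every_step} as a direct combination of (i) the almost-linear property of Lemma \ref{lemma:almost_linear} applied at the two successive iterates $\bm{W}^{(t)}$ and $\bm{W}^{(t+1)}$, and (ii) the explicit form of the SGD update from Algorithm \ref{alg:algorithm_SGD}. The statement is really a pointwise first-order Taylor expansion of $f(\bm{x};\cdot)$ along the SGD step, with the remainder controlled by the almost-linearity bound, then symmetrized by the label $\widetilde y$.

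First, I would invoke Lemma \ref{lemma:almost_linear} with $\bm{W}=\bm{W}^{(t+1)}$ and $\bm{W}'=\bm{W}^{(t)}$. This requires both iterates to lie in $\mathcal{B}(\bm{W}^{(0)},\omega)$; under the standing lazy-training regime and the choice of $\omega=\mathcal{O}(L^{-9/2}(\log m)^{-3})$, this holds on the high-probability event $1-\exp(-\Omega(m\omega^2\log m))-\mathcal{O}(nL^2)\exp(-\Omega(m\omega^{2/3}L))$ by a standard inductive argument on the SGD trajectory (and is in any case inherited from the statement of Lemma \ref{lemma:almost_linear}). This yields
\begin{equation*}
f(\bm{x};\bm{W}^{(t+1)})-f(\bm{x};\bm{W}^{(t)})\geq \langle \nabla f(\bm{x};\bm{W}^{(t)}),\bm{W}^{(t+1)}-\bm{W}^{(t)}\rangle-\mathcal{O}(\sqrt{\omega^2L^3m\log m})\sum_{l=1}^{L-1}\|\bm{W}^{(t+1)}_l-\bm{W}^{(t)}_l\|_2,
\end{equation*}
using the one-sided form of the absolute-value bound.

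Next, I would plug in the SGD update. By the chain rule, $\nabla_{\bm{W}}\ell(y_if(\bm{x}_i;\bm{W}^{(t)}))=\ell'(y_if_i^{(t)})\,y_i\nabla f(\bm{x}_i;\bm{W}^{(t)})=-g_i^{(t)}y_i\nabla f(\bm{x}_i;\bm{W}^{(t)})$, so Algorithm \ref{alg:algorithm_SGD} gives $\bm{W}^{(t+1)}-\bm{W}^{(t)}=\alpha g_i^{(t)}y_i\nabla f(\bm{x}_i;\bm{W}^{(t)})$. Substituting this into the inner product above turns the linearized term into $\alpha g_i^{(t)}\langle \nabla f(\bm{x};\bm{W}^{(t)}),y_i\nabla f(\bm{x}_i;\bm{W}^{(t)})\rangle$. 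Finally, multiplying both sides by $\widetilde y\in\{\pm 1\}$ (which preserves the inequality up to the sign of $\widetilde y$, and we use it in the $\widetilde y f$ direction that matches the stated lower bound), and moving $\widetilde y$ inside the inner product via bilinearity, yields exactly the claimed inequality. The perturbation term picks up a factor of $|\widetilde y|=1$ and so is unchanged.

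There is no substantial obstacle: the whole argument is a one-line Taylor expansion plus bookkeeping. The only nontrivial ingredient is the almost-linearity estimate itself (Lemma \ref{lemma:almost_linear}), which has already been established, and the verification that $\bm{W}^{(t+1)}\in\mathcal{B}(\bm{W}^{(0)},\omega)$, which is an inductive consequence of the bounded step-size assumption $\alpha\gtrsim L^{-2}(\log m)^{-5/2}$ together with the boundedness of $g_i^{(t)}\leq 1$ and of $\|\nabla f(\bm{x}_i;\bm{W}^{(t)})\|$ near initialization; this inductive control is standard in the NTK/lazy-training literature and is presumably maintained globally in the paper's trajectory analysis. Given these, the present lemma reduces to algebra.
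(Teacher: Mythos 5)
Your proposal is correct and follows essentially the same route as the paper: apply Lemma~\ref{lemma:almost_linear} at the consecutive iterates $\bm{W}^{(t+1)},\bm{W}^{(t)}$, substitute the SGD update $\bm{W}^{(t+1)}-\bm{W}^{(t)}=\alpha g_i^{(t)}y_i\nabla f(\bm{x}_i;\bm{W}^{(t)})$, and symmetrize by $\widetilde{y}\in\{\pm1\}$, using the two-sided (absolute-value) form of the remainder bound so the sign of $\widetilde{y}$ is immaterial. Your parenthetical about choosing the appropriate side of the absolute-value bound handles the only delicate point, and your remark that the iterates must stay in $\mathcal{B}(\bm{W}^{(0)},\omega)$ is the same implicit lazy-training hypothesis the paper relies on.
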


Based on the previous lemmas, we can now derive a lower bound on the normalized margin. Note that this lower bound on the normalized margin in conjunction with~\cref{lemma:lipschitz_concentration} results in the test error bound for the main theorem.

\begin{restatable}{lemma}{normalizedmarginbound}\label{lemma:normalized_margin}
Let us define a DNN using \cref{eq:deep_network} and trained by~\cref{alg:algorithm_SGD} with a step size $\alpha  \gtrsim L^{-2}(\log m)^{-5/2}$. Then under~\cref{assumption:distribution_1} and~\ref{assumption:kernel}, for any $t\geq 0$, $\omega \leq \mathcal{O}(L^{-9/2}(\log m)^{-3})$, with probability at least $1- \exp(-\Omega(m\omega^2\log m)) - \mathcal{O}(nL^2)\exp(-\Omega(m\omega^{2/3}L))$, we have:
\begin{equation*}
    \mathbb{E}_{(\bm{x},\widetilde{y})\sim \widetilde{P}}[\widetilde{y}f(\bm{x};\bm{W}^{(t)})] \geq \Theta\big(t\alpha(1-2\eta)C_{N}\big)\,.
\end{equation*}
\end{restatable}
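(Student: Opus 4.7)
The plan is to prove the lower bound by induction on $t$, telescoping the one-step improvement supplied by \cref{lemma:change_every_step} after taking expectation over the clean data distribution $\widetilde{P}$. At $t=0$, the bound is trivial because $\mathbb{E}[f(\bm{x};\bm{W}^{(0)})]=0$ by the zero-mean Gaussian initialization in \cref{eq:initialization}, so $\mathbb{E}_{(\bm{x},\widetilde{y})\sim\widetilde{P}}[\widetilde{y}f(\bm{x};\bm{W}^{(0)})]=0$. For the inductive step, assume the bound holds at step $t$ and let $(\bm{x}_i,y_i)$ denote the SGD sample at step $t+1$. Taking expectation over $(\bm{x},\widetilde{y})\sim\widetilde{P}$ in \cref{lemma:change_every_step} yields
\begin{equation*}
\begin{split}
& \mathbb{E}[\widetilde{y}f(\bm{x};\bm{W}^{(t+1)})] - \mathbb{E}[\widetilde{y}f(\bm{x};\bm{W}^{(t)})]\\
\geq\; & \alpha g_i^{(t)}\,\mathbb{E}_{(\bm{x},\widetilde{y})}\!\left[\left\langle \widetilde{y}\nabla f(\bm{x};\bm{W}^{(t)}),\, y_i\nabla f(\bm{x}_i;\bm{W}^{(t)})\right\rangle\right]\\
& - \mathcal{O}\!\left(\sqrt{\omega^2 L^3 m\log m}\right)\sum_{l=1}^{L-1}\left\|\bm{W}^{(t+1)}_l-\bm{W}^{(t)}_l\right\|_2.
\end{split}
\end{equation*}

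The key step is to replace the inner product of gradients by the NTK. Since $\bm{W}^{(t)}\in\mathcal{B}(\bm{W}^{(0)},\omega)$ throughout training (by the lazy training regime and~\cref{lemma:almost_linear}), standard NTK concentration gives $\langle \nabla f(\bm{x};\bm{W}^{(t)}),\nabla f(\bm{x}_i;\bm{W}^{(t)})\rangle = K_{\tt NTK}(\bm{x},\bm{x}_i) + o(1)$ with the stated high probability. Further taking expectation over the random training index $i$ and splitting on the event $\{i\in\mathcal{C}\}$ versus $\{i\in\mathcal{C}'\}$, which happen with probabilities $1-\eta$ and $\eta$ respectively, the clean case contributes $+\mathbb{E}[\widetilde{y}\widetilde{y}_i K_{\tt NTK}(\bm{x},\bm{x}_i)]$ and the noisy case contributes $-\mathbb{E}[\widetilde{y}\widetilde{y}_i K_{\tt NTK}(\bm{x},\bm{x}_i)]$. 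Invoking \cref{assumption:kernel} bounds this difference from below by $(1-2\eta)C_N$. Since the outputs $y_i f_i^{(t)}$ are controlled near zero in the lazy regime, $g_i^{(t)}=1/(1+e^{y_i f_i^{(t)}})$ stays above a universal constant, so the leading term is $\Theta(\alpha(1-2\eta)C_N)$.

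It remains to show that the error term is a lower-order correction. Each SGD update satisfies $\|\bm{W}^{(t+1)}_l-\bm{W}^{(t)}_l\|_2 \leq \alpha |g_i^{(t)}|\,\|\nabla_{\bm{W}_l}f(\bm{x}_i;\bm{W}^{(t)})\|_2$, and standard spectral bounds on the hidden representations under NTK initialization bound this by $\mathcal{O}(\alpha\sqrt{m})$ (up to polylogarithmic factors). Plugging this in, the error per step is $\mathcal{O}(\alpha \omega L^{5/2} m\sqrt{\log m})$, which is dominated by $\alpha(1-2\eta)C_N$ under the step-size condition $\alpha \gtrsim L^{-2}(\log m)^{-5/2}$ together with $\omega\leq \mathcal{O}(L^{-9/2}(\log m)^{-3})$. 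Telescoping over steps $1,\ldots,t$ then yields the claimed $\Theta(t\alpha(1-2\eta)C_N)$.

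The main obstacle I anticipate is the NTK concentration along the trajectory: I need to argue not only that $\langle \nabla f(\bm{x};\bm{W}^{(t)}),\nabla f(\bm{x}_i;\bm{W}^{(t)})\rangle$ is close to $K_{\tt NTK}(\bm{x},\bm{x}_i)$ at a fixed $t$, but also that the corresponding bias after averaging over the test distribution $\widetilde{P}$ remains at most a small constant fraction of $C_N$ uniformly across $t\leq n$. This uniform-in-$t$ control is what ties together the width $m$, the neighborhood radius $\omega$, and the allowed number of SGD steps $n$, and it is the step most sensitive to the specific constants appearing in the assumptions.
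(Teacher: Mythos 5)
Your overall route is the same as the paper's: telescope the one-step bound from \cref{lemma:change_every_step}, use the symmetry of the Gaussian initialization to get $\mathbb{E}[\widetilde{y}f(\bm{x};\bm{W}^{(0)})]=0$, split the gradient inner product on $\{i\in\mathcal{C}\}$ versus $\{i\in\mathcal{C}'\}$ to extract $(1-2\eta)C_N/2$ from \cref{assumption:kernel}, and lower-bound $g_i^{(t)}$ by a universal constant via the bound $|f(\bm{x}_i;\bm{W}^{(t)})|\leq \tfrac{3}{2}C_{\text{norm}}$. That part matches.

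The genuine gap is in your control of the error term. You bound the per-step weight movement by $\|\bm{W}^{(t+1)}_l-\bm{W}^{(t)}_l\|_2\leq \alpha|g_i^{(t)}|\,\|\nabla_{\bm{W}_l}f\|_2=\mathcal{O}(\alpha\sqrt{m})$, and multiplying by the prefactor $\mathcal{O}(\sqrt{\omega^2L^3m\log m})$ from \cref{lemma:change_every_step} you arrive at a per-step error of $\mathcal{O}(\alpha\,\omega L^{5/2} m\sqrt{\log m})$. You then claim this is dominated by $\alpha(1-2\eta)C_N$ using only $\alpha\gtrsim L^{-2}(\log m)^{-5/2}$ and $\omega\leq\mathcal{O}(L^{-9/2}(\log m)^{-3})$. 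That does not follow: the required inequality is $\omega L^{5/2}m\sqrt{\log m}\lesssim (1-2\eta)C_N$, i.e.\ $\omega\lesssim 1/m$ up to polylogarithmic factors, and the stated condition on $\omega$ is independent of $m$, so for large width your error term blows up rather than being lower order. The paper avoids this by invoking the lazy-training bound $\bm{W}^{(t+1)}\in\mathcal{B}(\bm{W}^{(t)},1/\sqrt{m})$, so that $\sum_l\|\bm{W}^{(t+1)}_l-\bm{W}^{(t)}_l\|_2=\mathcal{O}(L/\sqrt{m})$ cancels the $\sqrt{m}$ in the prefactor and leaves a per-step error $\varepsilon=\mathcal{O}(\sqrt{\omega^2L^5\log m})$ that is $m$-free; the step-size condition $\alpha\gtrsim L^{-2}(\log m)^{-5/2}\gtrsim\varepsilon$ then makes it a lower-order correction. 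If you want to keep your $\alpha$-dependent route, you must additionally impose (or prove) $\alpha\sqrt{m}\lesssim 1/\sqrt{m}$, which is a constraint the lemma statement does not give you.

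One further remark: the ``NTK concentration along the trajectory'' issue you flag as your main obstacle is real, but the paper's own proof does not resolve it either --- it applies \cref{assumption:kernel}, stated for the expected kernel $K_{\tt NTK}$ at initialization, directly to the empirical gradient inner product at step $t$ without an explicit concentration argument. So your caution there is warranted, but it is not a point on which your argument diverges from (or falls short of) the paper's.
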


Now, we can prove ~\cref{thm:main_thm}.

\begin{proof}
    According to~\cref{lemma:lipschitz_concentration}, choosing $t := n$, we have:

    \begin{equation*}
    \begin{split}
        & \mathbb{P}_{(\bm{x},y)\sim P}(y \neq \mathrm{sgn}(f(\bm{x};\bm{W}^{(n)}))) \\
        \leq & \eta + \exp\bigg(-\frac{\lambda}{4} \bigg( \frac{\mathbb{E}_{(\bm{x},\widetilde{y})\sim \widetilde{P}}[\widetilde{y}f(\bm{x};\bm{W}^{(n)})]}{\text{Lip}_{f(\bm{x};\bm{W}^{(n)})}} \bigg)^2 \bigg)\,.
    \end{split}
    \end{equation*}

    Then, by~\cref{lemma:normalized_margin}, choosing $t := n$ and $\alpha  \gtrsim L^{-2}(\log m)^{-5/2}$, for $\omega \leq \mathcal{O}(L^{-9/2}(\log m)^{-3})$, with probability at least $1 - \mathcal{O}(nL^2)\exp(-\Omega(m\omega^{2/3}L))$, we have:

    \begin{equation*}
    \mathbb{E}_{(\bm{x},\widetilde{y})\sim \widetilde{P}}[\widetilde{y}f(\bm{x};\bm{W}^{(n)})] \geq \Theta\big(n\alpha(1-2\eta)C_{N}\big)\,.
    \end{equation*}
    
    Combine the results, we have:
    \begin{equation*}
    \begin{split}
        & \mathbb{P}_{(\bm{x},y)\sim P}(y \neq \mathrm{sgn}(f(\bm{x};\bm{W}^{(n)})))\\
        \leq & \eta + \exp\bigg(-\lambda \Theta\bigg( \frac{n\alpha(1-2\eta)C_{N}}{\text{Lip}_{f(\bm{x};\bm{W}^{(n)})}} \bigg)^2 \bigg)\,.
    \end{split}
    \end{equation*}
    
\end{proof}
\section{Interpolating Smooth Function by NTK}
\label{sec:kernel}

In this section, we take a closer look at the phenomenon of the relationship between Lipschitz constants of DNNs and convergence rate in \cref{thm:main_thm}, and accordingly analyze how neural networks interpolate smooth ground-truth functions in a regression setting from an approximation theory view \cite{cucker2007learning}. In this section, we will also follow the NTK initialization~\citep{allen2019convergence}. For other different initialization~\citep{cao2019generalization, arora2019exact}, similar conclusions will apply.

To be specific, let $X \subseteq \mathbb{R}^{d}$ be an input space, and $Y \subseteq \mathbb{R}$ be the output space, $f_{\rho}:X \to Y $ be the ground-truth function, that is smooth in RKHS, described by the source condition in \cref{ssec:kernel_ps}.
We assume that the data $(\bm x, y)$ is sampled from an unknown distribution $\rho$, and $\rho_X$ is the marginal distribution of $\rho$ over $X$.
 The label is generated through $y = f_{\rho}(\bm{x})+\epsilon$, where $\epsilon$ is the noise. 
Accordingly, denote $L_{\rho_X}^2$ as the $\rho_X$ weighted $L^2$-space and its norm $\left \| f \right \|_{L_{\rho_X}^2}^{2}=\int_{X}\left | f(\bm{x}) \right |^2d\rho_{X}(\bm{x})$, we are interested in the excess risk $\| f(\bm x; \bm W^{(t)}) - f_{\rho} \|_{L_{\rho_X}^2}^{2}$, which describes how neural networks interpolate/approximate a smooth ground-truth function in a certain space \cite{cucker2007learning,Bach2017}. In this section, we use the standard NTK network and initialization, which is equivalent to~\citet{arora2019exact} using the initialization with standard normal distribution together with the scale factor after each layer for the training dynamics.

\subsection{Assumptions}
\label{ssec:kernel_ps}

We make the following assumptions:

\begin{assumption}[High dimensionality~\citep{liang2020just,liu2021kernel}]
\label{assumption:distribution_high_d}
There exists universal constants $c_1, c_2 \in (0, \infty)$ such that $c_1 \leq \frac{d}{n} \leq c_2$.
\end{assumption}

\begin{assumption}[Noise condition~\citep{liang2020just,liu2021kernel}]
\label{assumption:distribution_noise}
There exists a $\sigma_{\epsilon} > 0$ such that $\mathbb{E}[(f_{\rho}(\bm{x})-y)^2|\bm{x}]\leq \sigma_{\epsilon}^2$, almost surely.
\end{assumption}

\begin{assumption}[\citet{geifman2020similarity, chen2021deep}]
\label{assumption:distribution_sphere}
We assume that $\bm{x}_i, \forall i \in [n]$ are i.i.d. sampled from a uniform distribution on the $d$-dimensional unit sphere. i.e. $\bm{x} \sim \text{Unif}(\mathbb{S}^{d-1}(1)), \ \mathbb{S}^{d-1}(1):=\left \{ \bm{x}\in \mathbb{R}^d | \left \| \bm{x} \right \|_2 = 1 \right \} $.
\end{assumption}

{\bf Remark:} 
The i.i.d unit sphere data assumption implies that the data $\bm{x}$ is isotropic \emph{asymptotically} under our high-dimensional setting \cite{wainwright2019high}, i.e., $\mathbb{E} [\bm{xx}^{\!\top}] = \mathbb{I}_{d}/d$. 
In fact, there is an alternative way in our proof by directly assuming $\bm x$ is sub-Gaussian and $\mathbb{E} [\bm{xx}^{\!\top}] = \mathbb{I}_{d}/d$.

\begin{assumption}[Existence of $f_{\rho}$]
\label{assumption:distribution_5}
We assume the ground-truth function $f_{\rho} \in \mathcal{H}_{\text{NTK}}$, where $\mathcal{H}_{\text{NTK}}$ is the RKHS associated with the limiting NTK kernel. 
\end{assumption}
{\bf Remark:} This is a standard assumption in learning theory by assuming that the ground-truth function $f_{\rho}$ is indeed realizable \citep{cucker2007learning,rudi2017generalization,liu2021kernel}.
This assumption is a special case of the source condition \cite{cucker2007learning} by taking certain values and can be easily extended to non-RKHS spaces or teacher-student settings~\citep{hinton2015distilling}. For ease of analysis, we directly assume the ground-truth function in an RKHS.

\subsection{Kernel regression estimator}

Let $\bm{X} \in \mathbb{R}^{n \times d}$ be a matrix, each column of which is the input of one training sample, $\bm{\epsilon} \in \mathbb{R}^{n \times 1}$ be the noise in the output of training data. The empirical risk minimization (ERM) is defined with the squared loss:

\begin{equation}
    \hat{f}_{\bm{z}} = \arg \min_{f \in \mathcal{F}} \left \{ \frac{1}{2n}\sum_{i=1}^{n}(f(\bm x_i) - y_i)^2 \right \}\,,
\label{eq:ERM}
\end{equation}
where the hypothesis space $\mathcal{F}$ can be defined properly.
For example, if $\mathcal{F}$ is a RKHS $\mathcal{H}_{\bm{K}}$, \cref{eq:ERM} is formulated as a kernel regression. Denoting that $\bm{K}_{\tt ker}(\bm{x}, \bm{X}) = [\bm{K}_{\tt ker}(\bm{x}, \bm{x}_1), \bm{K}_{\tt ker}(\bm{x}, \bm{x}_1), \dots, \bm{K}_{\tt ker}(\bm{x}, \bm{x}_n) ]^{\top} \in \mathbb{R}^n$, the closed form of the kernel regression estimator to \cref{eq:ERM} is given by:
\begin{equation*}
    f_{\tt ker} = \bm{K}_{\tt ker}(\bm{x}, \bm{X})^{\top} \bm{K}_{\tt ker}^{-1}\bm{y}\,.
\end{equation*}

If we use a neural network as in~\cref{eq:deep_network} to solve \cref{eq:ERM}, the corresponding hypothesis space $\mathcal{F}_{\tt nn}$ is:
\begin{equation*}
\begin{split}   
\mathcal{F}_{\tt nn} & := \bigg\{f(\bm x; \bm W)~\mbox{admits Eq.~\eqref{eq:deep_network}}:  \bm{x} \sim \text{Unif}(\mathbb{S}^{d-1}(1)),
\\
& \bm{W} \in \mathbb{R}^{m\times d} \times (\mathbb{R}^{m\times m})^{L-2}\times \mathbb{R}^{1 \times m} \bigg\} \,,
\end{split}
\end{equation*}
which implies:
\begin{equation*}
    f_{\tt nn} = \arg \min_{f \in \mathcal{F}_{\tt nn}} \left \{ \frac{1}{2n}\sum_{i=1}^{n}(f(\bm x_i; \bm{W}) - y_i)^2 \right \}\,.
\end{equation*}

In addition to the neural tangent kernel mentioned earlier~\cref{eq:NTK}, we will present some examples of the positive definite kernels to be studied in this paper.

{\bf Dot product kernel}~\citep{ghosh2022the}:
The dot product kernels have the following forms:

\begin{equation*}
    K_{\tt dot}(\bm{x}, \widetilde{\bm{x}}) = k(\left \langle \bm{x}, \widetilde{\bm{x}} \right \rangle), \quad \forall \bm{x}, \widetilde{\bm{x}} \in \mathbb{S}^{d-1}(1)\,,
\end{equation*}

for some function $k:[-1,1] \to \mathbb{R}$.

{\bf Laplace kernel}~\citep{geifman2020similarity}: The Laplace kernel is defined as:
\begin{equation*}
    K_{\tt Laplace}(\bm{x}, \widetilde{\bm{x}}) = e^{-c\left \| \bm{x}-\widetilde{\bm{x}} \right \|_2 }, \quad c > 0\,.
\end{equation*}
According to~\cref{assumption:distribution_sphere}, we have:
\begin{equation}
    K_{\tt Laplace}(\bm{x}, \widetilde{\bm{x}}) = e^{-c\sqrt{2(1-\bm{x}^{\top}\widetilde{\bm{x}})}} = e^{-\widetilde{c}\sqrt{1-u}}\triangleq K_{\tt dot}(u)\,,
\label{eq:laplace_is_dot}
\end{equation}
where $u = \left \langle \bm{x}, \widetilde{\bm{x}} \right \rangle$.

\subsection{The minimum eigenvalue of NTK matrix under the high dimensional setting}
\label{ssec:min_eigen}

We are now ready to state the main result of a deep over-parameterized NTK network. We first provide the lower bounds of the minimum eigenvalue of NTK under the high dimensional setting.

Recall that the Neural Tangent Kernel (NTK)~\citep{jacot2018neural} matrix of neural network $f$ is defined in~\cref{eq:NTK}. When we focus on the infinite-width setting ($m\to \infty$), the NTK matrix for a neural network~\cref{eq:deep_network} is derived by the following regular chain rule.

\begin{lemma}[Adapted from Lemma 3.1 in~\citet{nguyen2021tight}]
\label{lemma:NTK_matrix_recursive_form}
For any $l \in [3, L]$ and $s \in [2, L]$, denote
\begin{equation*}
\small
    \begin{split}
        & \bm{G}^{(1)}=\bm{XX}^\top\,,\\
        & \bm{G}^{(2)}=2\mathbb{E}_{\bm{w} \sim \mathcal N(\bm 0,\mathbb{I}_{d})}[\sigma_1(\bm{Xw})\sigma_1(\bm{Xw})^\top]\,,\\
        & \bm{G}^{(l)}=2\mathbb{E}_{\bm{w} \sim \mathcal N(\bm 0,\mathbb{I}_{N})}[\sigma_{l-1}(\sqrt{\bm{G}^{(l-1)}} \bm{w})\sigma_{l-1}(\sqrt{\bm{G}^{(l-1)}} \bm{w})^\top]\,,\\
        & \dot{\bm{G}}^{(s)} = 2\mathbb{E}_{\bm{w} \sim \mathcal N(\bm{0},\mathbb{I}_{N})}[{\sigma}'_{s-1}(\sqrt{\bm{G}^{(s-1)}} \bm{w}){\sigma}'_{s-1}(\sqrt{\bm{G}^{(s-1)}} \bm{w})^\top]\,.
    \end{split}
\end{equation*}

Then, the NTK for a $L$-layer neural network defined in~\cref{eq:deep_network} can be written as
\begin{equation*}
    \bm{K}_{\tt NTK}=\bm{G}^{(L)} + \sum_{l=1}^{L-1}\bm{G}^{(l)}\circ \dot{\bm{G}}^{(l+1)} \circ \dot{\bm{G}}^{(l+2)}\circ \cdots \circ \dot{\bm{G}}^{(L)}\,,
\end{equation*}

where $\circ$ represents the element-wise Hadamard product.
\end{lemma}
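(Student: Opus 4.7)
The plan is to derive the recursion by expanding the NTK definition through the backpropagation chain rule and then taking the $m\to\infty$ limit to identify the dual kernels $\bm{G}^{(l)}$ and $\dot{\bm{G}}^{(l)}$. The overall structure mirrors the standard NTK derivation in \citet{jacot2018neural} and \citet{nguyen2021tight}; the only adaptation is in the indexing convention so that it matches the network~\cref{eq:deep_network} and the NTK initialization~\cref{eq:initialization} used here.

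First, I would write $\partial f(\bm x;\bm W)/\partial \bm W_l = \bm{b}_{l}(\bm x)\,\bm{h}_{l-1}(\bm x)^{\!\top}$ for every $l\le L$, where the back-propagated signal $\bm{b}_l$ satisfies the recursion $\bm{b}_l(\bm x) = \bm{D}_{l}(\bm x)\,\bm{W}_{l+1}^{\!\top}\,\bm{b}_{l+1}(\bm x)$ with terminal condition $\bm{b}_L(\bm x) = 1$ and with $\bm D_l$ the diagonal sign matrix of \cref{def:diagonal_sign_matrix}. Substituting into the definition~\cref{eq:NTK} and summing over layers gives
\begin{equation*}
K_{\tt NTK}(\bm x,\widetilde{\bm x}) \;=\; \sum_{l=1}^{L} \mathbb{E}_{\bm W}\!\Bigl[\bigl(\bm b_l(\bm x)^{\!\top}\bm b_l(\widetilde{\bm x})\bigr)\,\bigl(\bm h_{l-1}(\bm x)^{\!\top}\bm h_{l-1}(\widetilde{\bm x})\bigr)\Bigr],
\end{equation*}
a product of a forward Gram and a backward Gram at each layer.

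Second, I would identify the forward limit. Under the NTK initialization with i.i.d.\ Gaussian weights of variance $2/m$, a standard inductive concentration argument (a law of large numbers over the $m$ hidden units, combined with Gaussian conditioning) shows that, as $m\to\infty$, the forward Gram $\bm h_{l-1}(\bm x)^{\!\top}\bm h_{l-1}(\widetilde{\bm x})$ converges entrywise to the deterministic kernel $\bm G^{(l)}$ defined in the lemma via $\bm G^{(l)} = 2\,\mathbb{E}_{\bm w}\!\bigl[\sigma_{l-1}(\sqrt{\bm G^{(l-1)}}\bm w)\sigma_{l-1}(\sqrt{\bm G^{(l-1)}}\bm w)^{\!\top}\bigr]$; the base cases $\bm G^{(1)}=\bm X\bm X^{\!\top}$ and $\bm G^{(2)}$ come from the first-layer initialization. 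In parallel, exploiting the asymptotic independence of $\bm W_{l+1}^{\!\top}$ appearing in the backward pass from the forward activations, the backward Gram factorizes through layers as
\begin{equation*}
\bm b_l(\bm x)^{\!\top}\bm b_l(\widetilde{\bm x}) \;\longrightarrow\; \dot{\bm G}^{(l+1)}\circ \dot{\bm G}^{(l+2)}\circ \cdots \circ \dot{\bm G}^{(L)},
\end{equation*}
where each $\dot{\bm G}^{(s)}$ comes from the derivative $\sigma'_{s-1}$ evaluated at a Gaussian with covariance $\bm G^{(s-1)}$; the Hadamard (rather than matrix) product reflects that each diagonal mask $\bm D_s$ acts entrywise in the limit.

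Third, combining the forward and backward limits layerwise, the $l$-th summand converges to $\bm G^{(l)}\circ \dot{\bm G}^{(l+1)}\circ \cdots \circ \dot{\bm G}^{(L)}$ for $l<L$, while the $l=L$ summand contributes only $\bm G^{(L)}$ since $\bm b_L\equiv 1$. Summing over $l$ yields the claimed identity. The main obstacle is rigorously justifying the asymptotic independence between the forward activations $\bm h_l$ and the backward weights $\bm W_{l+1}^{\!\top}$ in the infinite-width limit: a careful treatment requires either the conditional-Gaussian "gradient independence" argument of \citet{jacot2018neural} or the tensor-program machinery of Yang. Since the statement is explicitly adapted from Lemma 3.1 of \citet{nguyen2021tight}, the cleanest route is to verify that~\cref{eq:deep_network}, the ReLU activation, and the $2/m$ scaling in~\cref{eq:initialization} all match their setting and then quote their result; the factor $2$ in each $\mathbb{E}$ exactly absorbs the variance scaling, which is the only bookkeeping difference.
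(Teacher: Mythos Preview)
The paper does not prove this lemma at all: it is stated as ``Adapted from Lemma 3.1 in \citet{nguyen2021tight}'' and used as a black box in the proof of \cref{thm:min_eigen_NTK}. Your sketch of the forward/backward Gram decomposition and the infinite-width limit is a correct outline of how such a result is established, and you yourself arrive at the same conclusion the paper implicitly adopts---namely that the cleanest route is to check that the architecture~\cref{eq:deep_network}, the ReLU activation, and the $2/m$ scaling in~\cref{eq:initialization} match the setting of \citet{nguyen2021tight} and then invoke their Lemma~3.1 directly. So your proposal is consistent with (indeed more detailed than) what the paper does.
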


Based on the formulation of NTK, we are ready to present the estimation of the minimum eigenvalue of NTK.

\begin{theorem}[Minimum eigenvalue of NTK matrix]
\label{thm:min_eigen_NTK}
For a DNN defined by~\cref{eq:deep_network}, let $\bm{K}_{\tt NTK}$ be the limiting NTK recursively defined in~\cref{lemma:NTK_matrix_recursive_form} and let $\lambda_0$ be the minimum eigenvalue of $\bm{K}_{\tt NTK}$. Then, under~\cref{assumption:distribution_high_d} and~\ref{assumption:distribution_sphere}, with probability at least $1-2e^{-n}$, we obtain that:
\begin{equation*}
\lambda_0 \geq \begin{cases}
  & 2\mu_{1}^{2}\frac{n}{d}\bigg(\frac{3}{4}-\frac{c}{4}\sqrt{\frac{d}{n}}\bigg)^2, \quad \text{if} \quad n \geq d, \\
  & 2\mu_{1}^{2}\frac{n}{d}\bigg(\sqrt{\frac{d}{n}}-\frac{c+6}{4}\bigg)^2, \quad \text{if} \quad n < d\,,
\end{cases}
\end{equation*}
where we have an absolute constant $c = 2^{3.5}\sqrt{\log(9)} \approx 16.77$ and $\mu_1$ is the $1$-st Hermite coefficient of the ReLU activation function.
\end{theorem}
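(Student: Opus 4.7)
The plan is to lower bound $\lambda_0 = \lambda_{\min}(\bm{K}_{\tt NTK})$ by reducing to a random matrix concentration question on $\bm{X}\bm{X}^\top$. The strategy has three ingredients: a positive semidefinite (PSD) decomposition that isolates a useful summand, a Hermite expansion on the unit sphere that extracts the $2\mu_1^2$ prefactor, and sub-Gaussian random matrix concentration that produces the $n/d$-dependence in the two regimes.

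First, by \cref{lemma:NTK_matrix_recursive_form} each term in the sum defining $\bm{K}_{\tt NTK}$ is PSD: every $\bm{G}^{(l)}$ and $\dot{\bm{G}}^{(l)}$ is a Gram matrix, and Hadamard products of PSD matrices are PSD by Schur's theorem. We may therefore drop all but the $l=1$ summand to obtain $\bm{K}_{\tt NTK} \succeq \bm{G}^{(1)} \circ \dot{\bm{G}}^{(2)} \circ \cdots \circ \dot{\bm{G}}^{(L)}$, which exposes $\bm{G}^{(1)} = \bm{X}\bm{X}^\top$. Under \cref{assumption:distribution_sphere}, the diagonals of every $\bm{G}^{(l)}$ and $\dot{\bm{G}}^{(l)}$ equal $1$ (using $\mathbb{E}[\sigma(Z)^2] = \mathbb{E}[\sigma'(Z)^2] = 1/2$ for ReLU with standard Gaussian $Z$, together with the factor of $2$ in the definitions), so the Hadamard inequality $\bm{A}\circ \bm{B} \succeq \lambda_{\min}(\bm{A}) \cdot \mathrm{diag}(\bm{B})$ propagates the spectrum of $\bm{X}\bm{X}^\top$ through all depths without a depth-dependent multiplicative decay. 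Next, the Hermite expansion of ReLU on the unit sphere gives $\bm{G}^{(l)} = 2\sum_{k \geq 0} \mu_k^2 (\bm{G}^{(l-1)})^{\circ k}$; keeping only the linear $k=1$ term (the rest is PSD by Schur) isolates the prefactor $2\mu_1^2$ that appears in the final bound.

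The heart of the proof is then the random matrix estimate on $\bm{X}\bm{X}^\top$, obtained via $\epsilon$-net arguments on the sphere. For $n \geq d$, the non-zero eigenvalues of $\bm{X}\bm{X}^\top$ coincide with those of $\bm{X}^\top \bm{X}$; an $\epsilon = 1/4$ net on $\mathbb{S}^{d-1}$ of cardinality at most $9^d$ (which supplies the $\sqrt{\log 9}$) combined with a Hoeffding-type tail on $\langle \bm{x}_i, \bm{v}\rangle$ for unit $\bm{v}$ yields $\sigma_{\min}(\bm{X})^2/d \geq (n/d)(3/4 - (c/4)\sqrt{d/n})^2$ with probability at least $1 - 2e^{-n}$, where the constant $c = 2^{3.5}\sqrt{\log 9}$ reflects the sub-Gaussian tail constant for squared inner products on $\mathbb{S}^{d-1}$. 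For $n < d$, I would instead write $\bm{X}\bm{X}^\top = \mathbb{I}_n + \bm{E}$ with $\bm{E}$ the off-diagonal part, and bound $\|\bm{E}\|_{\mathrm{op}}$ via a covering on $\mathbb{S}^{n-1}$ together with a Bernstein-type estimate on $\langle \bm{x}_i, \bm{x}_j\rangle$, which after algebra yields the $(n/d)(\sqrt{d/n} - (c+6)/4)^2$ form (the extra $6$ arising from a second union bound specific to this regime). Combining with the $2\mu_1^2$ factor from the previous step completes the theorem.

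The main obstacle is pinning down the exact explicit constants $3/4$, $c$, and $c+6$ with the demanded $1 - 2e^{-n}$ failure probability: a generic application of Vershynin's matrix deviation inequality captures the correct $n/d$ scaling and the correct tail form but with loose absolute constants, and sharpening them requires committing to the specific covering radius ($\epsilon = 1/4$, net size $9^d$) and the specific sub-Gaussian tail constant (supplying the $2^{3.5}$). A secondary subtlety is ensuring that the Hadamard products in the first step preserve the spectral information of $\bm{X}\bm{X}^\top$ in a depth-free manner, which crucially uses the unit-diagonal property of $\dot{\bm{G}}^{(l)}$ for ReLU; this is what prevents a depth-dependent $(2\mu_1^2)^{L}$ decay from appearing in the final bound.
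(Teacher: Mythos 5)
Your proposal takes essentially the same route as the paper: the paper also reduces the claim to $\lambda_0 \ge 2\mu_1^2\,\lambda_{\min}(\bm{X}\bm{X}^{\top})$ (by citing Theorem~1 of \citet{zhu2022generalization}, whose content is precisely the Schur-product/unit-diagonal/Hermite argument you sketch) and then invokes a sub-Gaussian covariance concentration bound (\cref{lemma:min_eigen_XXT}, specialized with $\lambda_{\min}(\bm{\Sigma})=1/d$ and the equality of the non-zero spectra of $\bm{X}\bm{X}^{\top}$ and $\bm{X}^{\top}\bm{X}$) to obtain the two regimes with probability $1-2e^{-n}$, so you are simply opening up the two black boxes the paper cites. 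One small remark: once you keep the $l=1$ summand $\bm{G}^{(1)}\circ\dot{\bm{G}}^{(2)}\circ\cdots\circ\dot{\bm{G}}^{(L)}$ and use the unit diagonals, you already get $\lambda_0\ge\lambda_{\min}(\bm{X}\bm{X}^{\top})$, which is stronger than the stated bound (since $2\mu_1^2=1/2$) and makes your Hermite step redundant in that branch -- the $2\mu_1^2$ prefactor only arises if one instead minorizes $\bm{G}^{(2)}$ by its linear Hermite term, which is what the cited result does.
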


{\bf Remark:} 
This theorem provides the upper bound of the minimum eigenvalue of the NTK matrix of the infinite-width neural network under the high-dimensional setting and can be easily extended to the finite-width setting.
Note that our result under the high dimensional setting is different from previous work under the fixed $d$ setting in~\citet{zhu2022generalization}. If we fix $d$ and vary $n$ from small to large, there exists a phase transition when $n$ increases, see \cref{tab:trend}. 
If $n \ll d$, the lower bound of the minimum eigenvalue of NTK $\lambda_0 \geqslant \Omega(1)$, then when $n$ increases, we can see that $\lambda_0$ decreases to a bottom and then increases until $n:=d$. In the $n \geq d$ regime, there exists a similar trend to that of the $n \leq d$ regime: firstly decreasing and then increasing. When $n \gg d$, we have $\lambda_0 \geqslant \Omega(n)$.  
\begin{table}[H]
\centering
\caption{The trend of the bound with respect to $n$ under different range of $n$ values and a fixed $d$.} 
\begin{tabular}{l@{\hspace{0.25cm}} c@{\hspace{0.25cm}}c@{\hspace{0.25cm}}c@{\hspace{0.2cm}} c} 
    \toprule
    Range of $n$ & Trend w.r.t. $n$ & Limit bound \\
    \midrule
    $n \ll d$ & - & $2\mu_{1}^{2}$ \\
    \midrule
    $0 \leq n \leq (\frac{4}{c+6})^2 d$ & $\searrow$ & - \\
    \midrule
    $(\frac{4}{c+6})^2 \leq n \leq d$ & $\nearrow$ & - \\
    \midrule
    $d \leq n \leq \frac{c^2}{9}d$ & $\searrow$ & - \\
    \midrule
    $\frac{c^2}{9}d \leq n$ & $\nearrow$ & - \\
    \midrule
    $n \gg d$ & - & $\frac{9}{8}\mu_{1}^{2}\frac{n}{d}$ \\
    \midrule
\end{tabular}
\label{tab:trend}
\end{table}

\subsection{Generalization error bound}
\label{ssec:kernel_results}

Based on the aforementioned upper and lower bounds of the minimum eigenvalue of NTK under the high dimensional setting, we establish the relationship between the minimum eigenvalue of NTK and the generalization error of DNNs. We provide a bound on the norm of the difference between the network output and ground truth function under the weighted $L^2_{\rho_X}$ space.

\begin{theorem}[An upper bound on the generalization error for deep over-parameterized NTK network]
\label{thm:kernel}

Let $\theta \in (0,1/2]$, $\delta$ and $c$ are some non-negative constants, the ground-truth function $f_{\rho}$ lies in a RKHS by~\cref{assumption:distribution_5} and $d$ large enough, under~\cref{assumption:distribution_high_d},~\ref{assumption:distribution_noise} and~\ref{assumption:distribution_sphere}, suppose that, $\omega \leq poly(1/n, \lambda_0, 1/L, 1/\log(m), \epsilon, 1/\log(1/\delta'), \kappa)$, $m\geq poly(1/\omega)$ and $\kappa = \mathcal{O}(\frac{\epsilon }{\log(n/\delta')})$. then for any given $\varepsilon > 0$, with high probability, we have:

\begin{equation*}
\begin{split}
     \mathbb{E} & \left \| f_{\tt nn} -f_{\rho} \right \|_{L_{\rho_X}^2}^{2} 
    \lesssim \mathcal{O} \bigg(n^{-\theta}\log^4 (\frac{2}{\delta})  + \frac{\sigma_{\epsilon}^2}{d}\mathcal{N}_{\widetilde{\bm{X}}} \\
    + & \frac{\sigma_{\epsilon}^2 \log ^{2+4\varepsilon}d}{d^{4\theta-1}} + \epsilon^2 + \frac{n}{\lambda_0^2}\omega^{2/3}L^5 m \log m+\frac{n^3}{\lambda_0^6\kappa^2}\bigg)\,,
\end{split}
\end{equation*}

where the $\lambda_0$ satisfies \cref{thm:min_eigen_NTK} and the effective dimension $\mathcal{N}_{\widetilde{\bm{X}}}$ is defined as:
\begin{equation*}
    \mathcal{N}_{\widetilde{\bm{X}}} := \sum_{i=0}^{n-1} \frac{\lambda_i(\widetilde{\bm{X}})}{(\lambda_i(\widetilde{\bm{X}} +\gamma ))^2}\,,
\end{equation*}
with $\widetilde{\bm{X}} := \beta \bm{X}\bm{X}^{\top}/d + \alpha \bm{1}\bm{1}^{\top}$ for some non-negative constants $\alpha$, $\beta$, $\gamma$.

\end{theorem}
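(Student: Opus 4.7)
The plan is to bound the excess risk through a three-term decomposition and then invoke the minimum-eigenvalue estimate from \cref{thm:min_eigen_NTK}. First I would write
\begin{equation*}
\bigl\|f_{\tt nn}-f_{\rho}\bigr\|_{L_{\rho_X}^2}^{2} \;\le\; 2\bigl\|f_{\tt nn}-f_{\tt ker}\bigr\|_{L_{\rho_X}^2}^{2} \;+\; 2\bigl\|f_{\tt ker}-f_{\rho}\bigr\|_{L_{\rho_X}^2}^{2},
\end{equation*}
where $f_{\tt ker}$ is the NTK kernel regression estimator associated with the limiting NTK. The first piece is a pure optimization/lazy-training gap, and the second piece is a kernel regression generalization bound under the source condition \cref{assumption:distribution_5}.

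\textbf{Step 1: NN-to-kernel gap.} For the first term I would work in the NTK regime with an $\omega$-neighborhood of the initialization. Using \cref{lemma:almost_linear}, the trained network $f(\bm x;\bm W^{(t)})$ is almost linear in its parameters with deviation $\mathcal O(\sqrt{\omega^2 L^3 m\log m})$. Combining this with the NTK convergence of SGD (standard in the over-parameterized literature, where convergence is driven by the minimum eigenvalue $\lambda_0$ of the NTK Gram matrix) yields
\begin{equation*}
\bigl\|f_{\tt nn}-f_{\tt ker}\bigr\|_{L_{\rho_X}^2}^{2} \;\lesssim\; \frac{n}{\lambda_0^{2}}\,\omega^{2/3}L^{5}m\log m.
\end{equation*}
The initialization output and the label randomization/noise feed into an additional $n^{3}/(\lambda_0^{6}\kappa^{2})$ perturbation term via the small-ball condition on $\bm W_L^{(0)}$ controlled by $\kappa$, which captures how imperfect alignment between the finite-width network's feature map and the limiting NTK propagates through inversion of a near-singular kernel matrix.

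\textbf{Step 2: kernel generalization under high dimensions.} For the kernel regression term, I would follow an integral-operator argument (Caponnetto--De Vito style) adapted to the high-dimensional regime of \cref{assumption:distribution_high_d}. Because $\bm x\sim\mathrm{Unif}(\mathbb S^{d-1})$, the NTK behaves as a dot-product kernel and, by a Hermite expansion (as in \citet{ghosh2022the,liu2021kernel}), admits the decomposition into a linear component plus a low-rank/nonlinear component. Concretely, the Gram matrix satisfies $\bm K_{\tt NTK}\approx \widetilde{\bm X}+\bm R$ with $\widetilde{\bm X}=\beta\bm X\bm X^{\top}/d+\alpha\bm 1\bm 1^{\top}$ and a residual $\bm R$ whose operator norm I would control by concentration on the sphere. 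Plugging the bias/variance split into the closed-form $f_{\tt ker}$ and using \cref{assumption:distribution_5} to bound the bias by $\mathcal O(n^{-\theta}\log^{4}(2/\delta))$, and \cref{assumption:distribution_noise} to bound the variance by $(\sigma_{\epsilon}^{2}/d)\mathcal N_{\widetilde{\bm X}}$, produces the first two kernel-error terms. The tail term $\sigma_{\epsilon}^{2}\log^{2+4\varepsilon}(d)/d^{4\theta-1}$ comes from truncating the Hermite/polynomial expansion at a level tuned to $\theta$ and bounding the residual via spherical harmonic concentration; the $\epsilon^{2}$ term absorbs finite-width NTK approximation error.

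\textbf{Step 3: assembly.} Finally I would union-bound the probabilistic events (the NTK concentration event of \cref{thm:min_eigen_NTK}, the lazy-training/activation-pattern events implicit in \cref{lemma:almost_linear}, and the kernel-regression event), impose $\omega\le\mathrm{poly}(1/n,\lambda_0,1/L,1/\log m,\epsilon,1/\log(1/\delta'),\kappa)$ and $m\ge\mathrm{poly}(1/\omega)$ so the NN-to-kernel perturbation is subdominant, and collect the bounds.

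\textbf{Main obstacle.} The hardest piece is \emph{not} the kernel regression half, which is relatively standard once the kernel decomposition $\widetilde{\bm X}+\bm R$ is in place, but rather controlling the NN-to-kernel gap uniformly in $t\in\{0,\dots,n\}$ while (i) the ReLU activation patterns drift, (ii) the empirical NTK deviates from its limit, and (iii) the inversion of the Gram matrix amplifies errors by $1/\lambda_0$. Handling this cleanly requires using \cref{thm:min_eigen_NTK} to lower-bound $\lambda_0$ in the high-dimensional regime so that the $n/\lambda_0^{2}$ and $n^{3}/(\lambda_0^{6}\kappa^{2})$ factors remain small, and ensuring the perturbation $\omega$ is chosen simultaneously small enough for \cref{lemma:almost_linear} to apply and large enough to accommodate the SGD trajectory, which ultimately dictates the $m\ge\mathrm{poly}(1/\omega)$ overparameterization requirement.
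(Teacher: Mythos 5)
Your proposal follows essentially the same route as the paper: the identical triangle-inequality split into an NN-to-NTK-regression gap (controlled through lazy training, kernel perturbation, and Gram-matrix inversion amplified by $1/\lambda_0$, which is exactly where the $\frac{n}{\lambda_0^2}\omega^{2/3}L^5 m\log m$ and $\frac{n^3}{\lambda_0^6\kappa^2}$ terms arise — the latter from maximizing the $\epsilon_{\bm H}\log(n/(\epsilon_{\bm H}\lambda_0\kappa))$ factor in the output-perturbation lemma of \citet{arora2019exact}, rather than from a small-ball condition on $\bm W_L^{(0)}$ as you suggest) plus a high-dimensional kernel-regression bound, assembled with \cref{thm:min_eigen_NTK}. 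The only real difference is in the kernel half: you propose to linearize the NTK directly as a dot-product kernel via a Hermite expansion, whereas the paper first invokes the RKHS equivalence $\mathcal{H}_{\tt Lap}=\mathcal{H}_{\tt NTK}$ of \citet{chen2021deep} to identify $f_{\tt NTK}$ with the Laplace-kernel estimator and then imports Theorem~2 of \citet{liu2021kernel} verbatim — a citation shortcut rather than a substantive divergence.
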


{\bf Remark:} 

This theorem builds a connection between DNNs and kernel methods in benign overfitting and gives the upper bound of the generalization error of the NTK network in the high-dimensional setting. To be specific,the first term is the upper bound of the bias of NTK regression, which decreases as the number of data increases. The second and the third terms jointly form the upper bound of variance of NTK regression, which is mainly affected by the effective dimension (eigenvalue decay) of the data. The fourth term is the error introduced by the initialization of the NTK neural network. The fifth and the sixth term reflect the difference between the finite-width NTK network and the infinite-width NTK network (neural tangent kernel regression), which decreases with the increase of the minimum eigenvalue of the NTK network we provide in~\cref{thm:min_eigen_NTK}.

Under refined assumptions, e.g., source condition, capacity condition \citep{cucker2007learning}, we can achieve $\theta=1$ for a better convergence rate. Regarding the convergence properties, we need to make the following discussion.

The three non-negative constants $\alpha$, $\beta$, and $\gamma$ are related to the linearization of the kernel matrix in the high dimension setting, refer to \citet{liu2021kernel} for details. Here we give the following discussion on $\mathcal{N}_{\widetilde{\bm{X}}}$ under three typical eigenvalue decay of $\bm X \bm X^{\!\top}$ cases, and then discuss our generalization bound.
Note that when $n > d$, the sample matrix $\bm X \bm X^{\!\top}$ has at most $d$ eigenvalues, so we can directly have $\mathcal{N}_{\widetilde{\bm{X}}} \leq \mathcal{O}(d)$.
Accordingly, here we present the results on the $n<d$ case.

\begin{itemize}
    \item {\bf Harmonic decay:} $\lambda_i(\widetilde{\bm{X}}) \propto n/i, \forall i \in \left \{ 1,2,\dots, r_{\star} \right \}$ and $\lambda_i(\widetilde{\bm{X}}) = 0, \forall i \in \left \{ r_{\star}+1, \dots, n \right \}$.

    We have: $\mathcal{N}_{\bm{X}} = \mathcal{O}(n)$, then the term $\frac{\sigma_{\epsilon}^2}{d}\mathcal{N}_{\widetilde{\bm{X}}}  \leq \mathcal{O}(\frac{\sigma_{\epsilon}^2}{d}n)$.
    \item {\bf Polynomial decay:} $\lambda_i(\widetilde{\bm{X}}) \propto ni^{-2a}$ with $a>1/2, \forall i \in \left \{ 1,2,\dots, r_{\star} \right \}$ and $\lambda_i(\widetilde{\bm{X}}) = 0, \forall i \in \left \{ r_{\star}+1, \dots, n \right \}$.

    We have: $\mathcal{N}_{\bm{X}} = \mathcal{O}(n^{1/2a})$, then the term $\frac{\sigma_{\epsilon}^2}{d}\mathcal{N}_{\widetilde{\bm{X}}} \leq  \mathcal{O}(\frac{\sigma_{\epsilon}^2}{d}n^{1/2a}) \leq \mathcal{O}(\frac{\sigma_{\epsilon}^2}{d}n)$.

    \item {\bf Exponential decay:} $\lambda_i(\widetilde{\bm{X}}) \propto ne^{-ai}$ with $a>0, \forall i \in \left \{ 1,2,\dots, r_{\star} \right \}$ and $\lambda_i(\widetilde{\bm{X}}) = 0, \forall i \in \left \{ r_{\star}+1, \dots, n \right \}$.

    We have: $\mathcal{N}_{\widetilde{\bm{X}}} = \frac{1}{a}\big(\frac{1}{\gamma + n\exp(-a(r_{\star}+1))}-\frac{1}{\gamma+n\exp(-a)}\big)$, then the term $\frac{\sigma_{\epsilon}^2}{d}\mathcal{N}_{\widetilde{\bm{X}}} \leq \mathcal{O}(\frac{\sigma_{\epsilon}^2}{d}\frac{e^{ar_{\star}}}{n})$.
\end{itemize}

Based on our discussion on the eigenvalue decay, we are ready to discuss our generalization bound in \cref{thm:kernel}.
When $n,d$ are comparably large enough, e.g., $n \geq \frac{c^2}{9} d$, in this case, we have $\mathcal{N}_{\widetilde{\bm{X}}} \leq \mathcal{O}(d)$,  according to~\cref{thm:min_eigen_NTK} for $\lambda_0$, three terms $n^{-\theta}\log^4 (\frac{2}{\delta})$, $\frac{n}{\lambda_0^2}\omega^{2/3}L^5 m \log m$ and $\frac{n^3}{\lambda_0^6\kappa^2}$ convergence to $0$. 
The term $\frac{\sigma_{\epsilon}^2 \log ^{2+4\varepsilon}d}{d^{4\theta-1}}$ also converges to $0$ for a large enough $d$.
Accordingly, for large enough $n$ and $d$, we have 
\begin{equation*}
     \mathbb{E} \left \| f_{\tt nn} -f_{\rho} \right \|_{L_{\rho_X}^2}^{2} 
    \lesssim \mathcal{O} \big(\sigma_{\epsilon}^2 + \epsilon^2\big)\,, w.h.p\,,
\end{equation*}
which show that the bound only depends on the noise and random initialization term, and thus coincide with previous work on benign overfitting~\citep{frei2022benign, cao2022benign, ju2022on, arora2019exact}.

Besides, we can also find that the phase transition exists in the minimum eigenvalue $\lambda_0$ and the effective dimension $\mathcal{N}_{\widetilde{\bm{X}}}$ from $n<d$ and $n>d$.
This also leads to a phase transition on the excess risk.
Roughly speaking, the excess risk firstly increases with $n$ until $n:=d$ and then decreases with $n$ when $n > d$.
\section{Conclusion and limitations}
\label{sec:conclusion}

In this work, we present a theoretical analysis of benign overfitting for deep ReLU NNs. 
For binary classification, our results demonstrate that DNNs under the lazy training regime obtain the \textit{Bayes-optimal} test error with a better convergence rate than \citet{frei2022benign}.
For regression, our results exhibit a phase transition on the excess risk from $n<d$ to $n>d$, of which the excess risk converges to a constant order $\mathcal{O}(1)$ that only depends on label noise and initialization noise. 
The above two results theoretically validate the benign overfitting of DNNs.

We need to mention that, our results are only applicable to lazy training regimes and appear difficult to be extended to the non-lazy training regime, commonly used in practice. This is because DNN cannot be linearly approximated well under the non-lazy training regime. We leave this as a future work. Besides, an interesting direction is, extending our data-generating distribution assumption from log-concave distribution to a general one, as we require it to ensure the output of neural networks is sub-Gaussian.

\section*{Acknowledgements}
\label{sec:acks}

We are thankful to the reviewers for providing constructive feedback. This work was supported by Hasler Foundation Program: Hasler Responsible AI (project number 21043). This work was supported by SNF project – Deep Optimisation of the Swiss National Science Foundation (SNSF) under grant number 200021\_205011. This work was supported by Zeiss. This project has received funding from the European Research Council (ERC) under the European Union's Horizon 2020 research and innovation programme (grant agreement n° 725594 - time-data).

\bibliography{literature}

\begin{thebibliography}{56}
\providecommand{\natexlab}[1]{#1}
\providecommand{\url}[1]{\texttt{#1}}
\expandafter\ifx\csname urlstyle\endcsname\relax
  \providecommand{\doi}[1]{doi: #1}\else
  \providecommand{\doi}{doi: \begingroup \urlstyle{rm}\Url}\fi

\bibitem[Allen-Zhu et~al.(2019{\natexlab{a}})Allen-Zhu, Li, and
  Liang]{allen2019learning}
Z.~Allen-Zhu, Y.~Li, and Y.~Liang.
\newblock Learning and generalization in overparameterized neural networks,
  going beyond two layers.
\newblock In \emph{Advances in neural information processing systems
  (NeurIPS)}, 2019{\natexlab{a}}.

\bibitem[Allen-Zhu et~al.(2019{\natexlab{b}})Allen-Zhu, Li, and
  Song]{allen2019convergence}
Z.~Allen-Zhu, Y.~Li, and Z.~Song.
\newblock A convergence theory for deep learning via over-parameterization.
\newblock In \emph{International Conference on Machine Learning (ICML)},
  2019{\natexlab{b}}.

\bibitem[Arora et~al.(2019{\natexlab{a}})Arora, Du, Hu, Li, and
  Wang]{arora2019fine}
S.~Arora, S.~Du, W.~Hu, Z.~Li, and R.~Wang.
\newblock Fine-grained analysis of optimization and generalization for
  overparameterized two-layer neural networks.
\newblock In \emph{International Conference on Machine Learning (ICML)},
  2019{\natexlab{a}}.

\bibitem[Arora et~al.(2019{\natexlab{b}})Arora, Du, Hu, Li, Salakhutdinov, and
  Wang]{arora2019exact}
S.~Arora, S.~S. Du, W.~Hu, Z.~Li, R.~R. Salakhutdinov, and R.~Wang.
\newblock On exact computation with an infinitely wide neural net.
\newblock In \emph{Advances in neural information processing systems
  (NeurIPS)}, 2019{\natexlab{b}}.

\bibitem[Bach(2017)]{Bach2017}
F.~Bach.
\newblock Breaking the curse of dimensionality with convex neural networks.
\newblock \emph{Journal of Machine Learning Research}, 2017.

\bibitem[Bartlett et~al.(2020)Bartlett, Long, Lugosi, and
  Tsigler]{Bartlett_2020}
P.~L. Bartlett, P.~M. Long, G.~Lugosi, and A.~Tsigler.
\newblock Benign overfitting in linear regression.
\newblock \emph{Proceedings of the National Academy of Sciences}, 2020.

\bibitem[Bietti and Mairal(2019)]{bietti2019inductive}
A.~Bietti and J.~Mairal.
\newblock On the inductive bias of neural tangent kernels.
\newblock In \emph{Advances in neural information processing systems
  (NeurIPS)}, 2019.

\bibitem[Bombari et~al.(2022)Bombari, Amani, and
  Mondelli]{bombari2022memorization}
S.~Bombari, M.~H. Amani, and M.~Mondelli.
\newblock Memorization and optimization in deep neural networks with minimum
  over-parameterization.
\newblock In \emph{Advances in neural information processing systems
  (NeurIPS)}, 2022.

\bibitem[Brutzkus et~al.(2017)Brutzkus, Globerson, Malach, and
  Shalev-Shwartz]{brutzkus2017sgd}
A.~Brutzkus, A.~Globerson, E.~Malach, and S.~Shalev-Shwartz.
\newblock Sgd learns over-parameterized networks that provably generalize on
  linearly separable data.
\newblock In \emph{International Conference on Learning Representations
  (ICLR)}, 2017.

\bibitem[Bubeck et~al.(2021)Bubeck, Li, and Nagaraj]{bubeck2021law}
S.~Bubeck, Y.~Li, and D.~M. Nagaraj.
\newblock A law of robustness for two-layers neural networks.
\newblock In \emph{Conference on Learning Theory}, 2021.

\bibitem[Cai et~al.(2013)Cai, Fan, and Jiang]{JMLRcai13a}
T.~Cai, J.~Fan, and T.~Jiang.
\newblock Distributions of angles in random packing on spheres.
\newblock \emph{Journal of Machine Learning Research}, 2013.

\bibitem[Cao and Gu(2019)]{cao2019generalization}
Y.~Cao and Q.~Gu.
\newblock Generalization bounds of stochastic gradient descent for wide and
  deep neural networks.
\newblock In \emph{Advances in neural information processing systems
  (NeurIPS)}, 2019.

\bibitem[Cao and Gu(2020)]{cao2020generalization}
Y.~Cao and Q.~Gu.
\newblock Generalization error bounds of gradient descent for learning
  over-parameterized deep relu networks.
\newblock In \emph{AAAI Conference on Artificial Intelligence}, 2020.

\bibitem[Cao et~al.(2022)Cao, Chen, Belkin, and Gu]{cao2022benign}
Y.~Cao, Z.~Chen, M.~Belkin, and Q.~Gu.
\newblock Benign overfitting in two-layer convolutional neural networks, 2022.

\bibitem[Chatterji and Long(2021)]{JMLR_v22_20_974}
N.~S. Chatterji and P.~M. Long.
\newblock Finite-sample analysis of interpolating linear classifiers in the
  overparameterized regime.
\newblock \emph{Journal of Machine Learning Research}, 2021.

\bibitem[Chatterji and Long(2022)]{chatterji2022foolish}
N.~S. Chatterji and P.~M. Long.
\newblock Foolish crowds support benign overfitting.
\newblock \emph{Journal of Machine Learning Research}, 2022.

\bibitem[Chatterji et~al.(2021)Chatterji, Long, and
  Bartlett]{chatterji2021interplay}
N.~S. Chatterji, P.~M. Long, and P.~L. Bartlett.
\newblock The interplay between implicit bias and benign overfitting in
  two-layer linear networks, 2021.

\bibitem[Chen and Xu(2021)]{chen2021deep}
L.~Chen and S.~Xu.
\newblock Deep neural tangent kernel and laplace kernel have the same
  {\{}rkhs{\}}.
\newblock In \emph{International Conference on Learning Representations
  (ICLR)}, 2021.

\bibitem[Chen et~al.(2020)Chen, Cao, Zou, and Gu]{chen2020much}
Z.~Chen, Y.~Cao, D.~Zou, and Q.~Gu.
\newblock How much over-parameterization is sufficient to learn deep {ReLU}
  networks?
\newblock In \emph{International Conference on Learning Representations
  (ICLR)}, 2020.

\bibitem[Chizat et~al.(2019)Chizat, Oyallon, and Bach]{NEURIPS2019_ae614c55}
L.~Chizat, E.~Oyallon, and F.~Bach.
\newblock On lazy training in differentiable programming.
\newblock In \emph{Advances in neural information processing systems
  (NeurIPS)}, 2019.

\bibitem[Cucker and Zhou(2007)]{cucker2007learning}
F.~Cucker and D.~X. Zhou.
\newblock \emph{Learning theory: an approximation theory viewpoint}.
\newblock Cambridge University Press, 2007.

\bibitem[Du et~al.(2019{\natexlab{a}})Du, Lee, Li, Wang, and
  Zhai]{du2019gradient}
S.~Du, J.~Lee, H.~Li, L.~Wang, and X.~Zhai.
\newblock Gradient descent finds global minima of deep neural networks.
\newblock In \emph{International Conference on Machine Learning (ICML)},
  2019{\natexlab{a}}.

\bibitem[Du et~al.(2019{\natexlab{b}})Du, Zhai, Poczos, and
  Singh]{du2018gradient}
S.~S. Du, X.~Zhai, B.~Poczos, and A.~Singh.
\newblock Gradient descent provably optimizes over-parameterized neural
  networks.
\newblock In \emph{International Conference on Learning Representations
  (ICLR)}, 2019{\natexlab{b}}.

\bibitem[E et~al.(2020)E, Ma, and Wu]{E_2020}
W.~E, C.~Ma, and L.~Wu.
\newblock A comparative analysis of optimization and generalization properties
  of two-layer neural network and random feature models under gradient descent
  dynamics.
\newblock \emph{Science China Mathematics}, 2020.

\bibitem[Frei et~al.(2022)Frei, Chatterji, and Bartlett]{frei2022benign}
S.~Frei, N.~S. Chatterji, and P.~Bartlett.
\newblock Benign overfitting without linearity: Neural network classifiers
  trained by gradient descent for noisy linear data.
\newblock In \emph{Conference on Learning Theory}, 2022.

\bibitem[Geifman et~al.(2020)Geifman, Yadav, Kasten, Galun, Jacobs, and
  Ronen]{geifman2020similarity}
A.~Geifman, A.~Yadav, Y.~Kasten, M.~Galun, D.~Jacobs, and B.~Ronen.
\newblock On the similarity between the laplace and neural tangent kernels.
\newblock In \emph{Advances in neural information processing systems
  (NeurIPS)}, 2020.

\bibitem[Ghosh et~al.(2022)Ghosh, Mei, and Yu]{ghosh2022the}
N.~Ghosh, S.~Mei, and B.~Yu.
\newblock The three stages of learning dynamics in high-dimensional kernel
  methods.
\newblock In \emph{International Conference on Learning Representations
  (ICLR)}, 2022.

\bibitem[He et~al.(2015)He, Zhang, Ren, and Sun]{he2015delving}
K.~He, X.~Zhang, S.~Ren, and J.~Sun.
\newblock Delving deep into rectifiers: Surpassing human-level performance on
  imagenet classification.
\newblock In \emph{International Conference on Computer Vision (ICCV)}, pages
  1026--1034, 2015.

\bibitem[Hinton et~al.(2015)Hinton, Vinyals, and Dean]{hinton2015distilling}
G.~Hinton, O.~Vinyals, and J.~Dean.
\newblock Distilling the knowledge in a neural network, 2015.

\bibitem[Huang et~al.(2021)Huang, Wang, Erfani, Gu, Bailey, and
  Ma]{huang2021exploring}
H.~Huang, Y.~Wang, S.~M. Erfani, Q.~Gu, J.~Bailey, and X.~Ma.
\newblock Exploring architectural ingredients of adversarially robust deep
  neural networks.
\newblock In \emph{Advances in neural information processing systems
  (NeurIPS)}, 2021.

\bibitem[Jacot et~al.(2018)Jacot, Gabriel, and Hongler]{jacot2018neural}
A.~Jacot, F.~Gabriel, and C.~Hongler.
\newblock Neural tangent kernel: Convergence and generalization in neural
  networks.
\newblock In \emph{Advances in neural information processing systems
  (NeurIPS)}, 2018.

\bibitem[Ju et~al.(2021)Ju, Lin, and Shroff]{ju2021generalization}
P.~Ju, X.~Lin, and N.~Shroff.
\newblock On the generalization power of overfitted two-layer neural tangent
  kernel models.
\newblock In \emph{International Conference on Machine Learning (ICML)}, 2021.

\bibitem[Ju et~al.(2022)Ju, Lin, and Shroff]{ju2022on}
P.~Ju, X.~Lin, and N.~Shroff.
\newblock On the generalization power of the overfitted three-layer neural
  tangent kernel model.
\newblock In \emph{Advances in neural information processing systems
  (NeurIPS)}, 2022.

\bibitem[Koehler et~al.(2021)Koehler, Zhou, Sutherland, and
  Srebro]{koehler2021uniform}
F.~Koehler, L.~Zhou, D.~J. Sutherland, and N.~Srebro.
\newblock Uniform convergence of interpolators: Gaussian width, norm bounds and
  benign overfitting.
\newblock In \emph{Advances in neural information processing systems
  (NeurIPS)}, 2021.

\bibitem[Kuzborskij et~al.(2021)Kuzborskij, Szepesvari, Rivasplata,
  Rannen-Triki, and Pascanu]{NEURIPS2021_f7541864}
I.~Kuzborskij, C.~Szepesvari, O.~Rivasplata, A.~Rannen-Triki, and R.~Pascanu.
\newblock On the role of optimization in double descent: A least squares study.
\newblock In \emph{Advances in neural information processing systems
  (NeurIPS)}, 2021.

\bibitem[Lecun et~al.(1998)Lecun, Bottou, Bengio, and Haffner]{726791}
Y.~Lecun, L.~Bottou, Y.~Bengio, and P.~Haffner.
\newblock Gradient-based learning applied to document recognition.
\newblock \emph{Proceedings of the IEEE}, 1998.

\bibitem[LeCun et~al.(2012)LeCun, Bottou, Orr, and
  M{\"u}ller]{lecun2012efficient}
Y.~A. LeCun, L.~Bottou, G.~B. Orr, and K.-R. M{\"u}ller.
\newblock Efficient backprop.
\newblock In \emph{Neural networks: Tricks of the trade}, pages 9--48.
  Springer, 2012.

\bibitem[Li et~al.(2021)Li, Zhou, and Gretton]{li2021towards}
Z.~Li, Z.-H. Zhou, and A.~Gretton.
\newblock Towards an understanding of benign overfitting in neural networks,
  2021.

\bibitem[Liang and Rakhlin(2020)]{liang2020just}
T.~Liang and A.~Rakhlin.
\newblock Just interpolate: Kernel “ridgeless” regression can generalize.
\newblock \emph{The Annals of Statistics}, 2020.

\bibitem[Liang et~al.(2019)Liang, Rakhlin, and Zhai]{Liang2019MultipleDescent}
T.~Liang, A.~Rakhlin, and X.~Zhai.
\newblock On the multiple descent of minimum-norm interpolants and restricted
  lower isometry of kernels, 2019.

\bibitem[Liu et~al.(2021)Liu, Liao, and Suykens]{liu2021kernel}
F.~Liu, Z.~Liao, and J.~Suykens.
\newblock Kernel regression in high dimensions: Refined analysis beyond double
  descent.
\newblock In \emph{International Conference on Artificial Intelligence and
  Statistics (AISTATS)}, 2021.

\bibitem[Mallinar et~al.(2022)Mallinar, Simon, Abedsoltan, Pandit, Belkin, and
  Nakkiran]{mallinar2022benign}
N.~R. Mallinar, J.~B. Simon, A.~Abedsoltan, P.~Pandit, M.~Belkin, and
  P.~Nakkiran.
\newblock Benign, tempered, or catastrophic: Toward a refined taxonomy of
  overfitting.
\newblock In \emph{Advances in neural information processing systems
  (NeurIPS)}, 2022.

\bibitem[Mei and Montanari(2022)]{mei2022generalization}
S.~Mei and A.~Montanari.
\newblock The generalization error of random features regression: Precise
  asymptotics and the double descent curve.
\newblock \emph{Communications on Pure and Applied Mathematics}, 2022.

\bibitem[Montanari et~al.(2019)Montanari, Ruan, Sohn, and
  Yan]{Montanari2019generalization}
A.~Montanari, F.~Ruan, Y.~Sohn, and J.~Yan.
\newblock The generalization error of max-margin linear classifiers:
  High-dimensional asymptotics in the overparametrized regime, 2019.

\bibitem[Nguyen et~al.(2021)Nguyen, Mondelli, and Montufar]{nguyen2021tight}
Q.~Nguyen, M.~Mondelli, and G.~F. Montufar.
\newblock Tight bounds on the smallest eigenvalue of the neural tangent kernel
  for deep relu networks.
\newblock In \emph{International Conference on Machine Learning (ICML)}, 2021.

\bibitem[Rudi and Rosasco(2017)]{rudi2017generalization}
A.~Rudi and L.~Rosasco.
\newblock Generalization properties of learning with random features.
\newblock In \emph{Advances in neural information processing systems
  (NeurIPS)}, 2017.

\bibitem[Sch{\"o}lkopf et~al.(2002)Sch{\"o}lkopf, Smola, Bach,
  et~al.]{scholkopf2002learning}
B.~Sch{\"o}lkopf, A.~J. Smola, F.~Bach, et~al.
\newblock \emph{Learning with kernels: support vector machines, regularization,
  optimization, and beyond}.
\newblock MIT press, 2002.

\bibitem[Tsigler and Bartlett(2020)]{tsigler2020benign}
A.~Tsigler and P.~L. Bartlett.
\newblock Benign overfitting in ridge regression, 2020.

\bibitem[Wainwright(2019)]{wainwright2019high}
M.~J. Wainwright.
\newblock \emph{High-dimensional statistics: A non-asymptotic viewpoint}.
\newblock Cambridge University Press, 2019.

\bibitem[Wang et~al.(2022)Wang, Donhauser, and Yang]{wang2022tight}
G.~Wang, K.~Donhauser, and F.~Yang.
\newblock Tight bounds for minimum $ell\_1 $-norm interpolation of noisy data.
\newblock In \emph{International Conference on Artificial Intelligence and
  Statistics (AISTATS)}, 2022.

\bibitem[Wang et~al.(2021)Wang, Muthukumar, and Thrampoulidis]{wang2021benign}
K.~Wang, V.~Muthukumar, and C.~Thrampoulidis.
\newblock Benign overfitting in multiclass classification: All roads lead to
  interpolation.
\newblock In \emph{Advances in neural information processing systems
  (NeurIPS)}, 2021.

\bibitem[Wu et~al.(2021)Wu, Chen, Cai, He, and Gu]{wu2021wider}
B.~Wu, J.~Chen, D.~Cai, X.~He, and Q.~Gu.
\newblock Do wider neural networks really help adversarial robustness?
\newblock In \emph{Advances in neural information processing systems
  (NeurIPS)}, 2021.

\bibitem[Xu and Gu(2023)]{Xingyu2023Benign}
X.~Xu and Y.~Gu.
\newblock Benign overfitting of non-smooth neural networks beyond lazy
  training.
\newblock In \emph{International Conference on Artificial Intelligence and
  Statistics (AISTATS)}, 2023.

\bibitem[Zhu et~al.(2022)Zhu, Liu, Chrysos, and Cevher]{zhu2022generalization}
Z.~Zhu, F.~Liu, G.~Chrysos, and V.~Cevher.
\newblock Generalization properties of {NAS} under activation and skip
  connection search.
\newblock In \emph{Advances in neural information processing systems
  (NeurIPS)}, 2022.

\bibitem[Zou et~al.(2020)Zou, Cao, Zhou, and Gu]{zou2020gradient}
D.~Zou, Y.~Cao, D.~Zhou, and Q.~Gu.
\newblock Gradient descent optimizes over-parameterized deep relu networks.
\newblock \emph{Machine learning}, 2020.

\bibitem[Zou et~al.(2021)Zou, Wu, Braverman, Gu, and Kakade]{zou2021benign}
D.~Zou, J.~Wu, V.~Braverman, Q.~Gu, and S.~Kakade.
\newblock Benign overfitting of constant-stepsize sgd for linear regression.
\newblock In \emph{Conference on Learning Theory}, 2021.

\end{thebibliography}
\bibliographystyle{abbrvnat}
\newpage
\appendix
\onecolumn
\section*{Appendix introduction} 
\label{sec:appendix_intro}

The Appendix is organized as follows:
\begin{itemize}
    \item In~\cref{sec:symbols_and_notations}, we state the symbols and notation used in this paper.
    \item In~\cref{sec:example_of_assumption}, we provide a example to verify the~\cref{assumption:kernel}.
    \item In~\cref{sec:proof_thm1}, we provide the proof for the lemmas in~\cref{sec:proofsketchclass}.
    \item In~\cref{sec:optimization}, we provide the theorem and its proof of the optimization result for the classification problem.
    \item In~\cref{sec:proofthm2}, we provide the proof for the~\cref{thm:min_eigen_NTK}.
    \item In~\cref{sec:proofsec4}, we provide the proof for the~\cref{thm:kernel}.
\end{itemize}
\newpage

\section{Symbols and Notation}
\label{sec:symbols_and_notations}

In the paper, vectors are indicated with bold small letters and matrices with bold capital letters. To facilitate the understanding of our work, we include some core symbols and notation in \cref{table:symbols_and_notations}. 

\begin{table}[ht]
\caption{Core symbols and notations used in this project.}
\label{table:symbols_and_notations}
\footnotesize
\centering
\begin{tabular}{c | c | c}
\toprule
Symbol & Dimension(s) & Definition \\
\midrule
$\mathcal{N}(\mu,\sigma) $ & - & Gaussian distribution of mean $\mu$ and variance $\sigma$ \\
$\mathcal{B} (\bm{W}, \cdot )$ & - & Neighborhood of matrix $\bm{W}$\\
\midrule
$\lambda(\bm{M})$ & - & Eigenvalues of matrices $\bm{M}$ \\
$\lambda_{\min}(\bm{M})$ & - & Minimum eigenvalue of matrices $\bm{M}$ \\
$\lambda_0$ & - & Minimum eigenvalue NTK matrix \\
\midrule
$\phi(x) = \max(0, x)$ & - & ReLU activation function for scalar \\
$\phi(\bm{v}) = (\phi(v_1), \dots, \phi(v_m))$ & - & ReLU activation function for vectors \\
$1\left \{A\right \}$ & - & Indicator function for event $A$\\
$\mathrm{sgn}(\cdot)$ & - & Sign function\\
$\text{Lip}_{(\cdot)}$ & - & Lipschitz constant of a function\\
\midrule
$n$ & - & Size of the dataset \\
$d$ & - & Input size of the network \\
$L$ & - & Depth of the network \\
$m$ & - & Width of intermediate layer\\
\midrule
$\bm{x}_i$ & $\realnum^{d}$ & The $i$-th data point \\
$y_i$ & $\{\pm 1\}$ & The $i$-th clean label \\
$\widetilde{y}_i$ & $\{\pm 1\}$ & The $i$-th training label \\
$P$ & - & Clean data distribution that $(\bm{x}_i, y_i) \sim P$ \\
$\widetilde{P}$ & - & Training data distribution that $(\bm{x}_i, \widetilde{y}_i) \sim \widetilde{P}$ \\
$P_{\text{clust}}$ & - & Cluster distribution for generate data \\
$\mathcal{C}$ & - & A subset of training data for clean labels \\
$\mathcal{C}'$ & - & A subset of training data for noisy labels \\
\midrule
$\alpha$ & - & Step size of SGD \\
$\eta$ & - & Noise rate \\
$\omega$ & - & Lazy training rate \\
$\lambda$ & - & Strongly log-concave rate of distribution $P_{\text{clust}}$ \\
\midrule
$\ell$ & - &  Logistic loss function \\
$\hat{L}$ & - & Empirical risks \\
$g_i^{(t)}$ & - & Value of $g$ for input $\bm{x}_i$ at time $t$, where $g(z) := -\ell'(z)$ \\
$f_i^{(t)}$ & - & Output of neural network for input $\bm{x}_i$ at time $t$ \\
\midrule
$f_{\rho}$ & - & Ground-truth function \\
$\bm{K}_{\tt NTK}$, $\bm{K}_{\tt Laplace}$, $\bm{K}_{\tt dot}$ & $\realnum^{n \times n}$ & Three different kernel matrices\\
$\mathcal{H}_{\tt NTK}$, $\mathcal{H}_{\tt Laplace}$, $\mathcal{H}_{\tt dot}$ & - & RKHS of the kernel\\
\midrule
$\bm{W}_1$ & $\realnum^{m \times d}$ & Weight matrix for the input layer \\
$\bm{W}_l$ & $\realnum^{m \times m}$ & Weight matrix for the $l$-th hidden layer \\
$\bm{W}_L$ & $\realnum^{1 \times m}$ & Weight matrix for
the output layer \\
$\bm{h}_{i,l}$ & $\realnum^{m}$ & The $l$-th layer activation for input $\bm{x}_i$\\
$\bm{D}_{i,l}$ & $\realnum^{m \times m}$ & Diagonal sign matrix of $l$-th layer input $\bm{x}_i$\\
\midrule
\end{tabular}
\end{table}

\section{A example of~\cref{assumption:kernel}}
\label{sec:example_of_assumption}

\begin{Proposition}
For two different data samples $\bm{x}_1, \bm{x}_2 \in \mathbb{R}^{d} \sim \widetilde{P}:=\text{Unif}(\mathbb{S}^{d-1}(C_{\text{norm}}))$, $y=\begin{cases}
1  & \text{ if } x_i > 0,\forall i \in [d],\\
-1  & \text{ if } x_i \leq 0,\forall i \in [d],
\end{cases}$ the $2$-layer NTK kernel defined in~\cref{eq:NTK} with $L=2$ satisfy that:
\begin{equation*}
    \mathbb{E}_{(\bm{x}_1,\widetilde{y}_1), (\bm{x}_2,\widetilde{y}_2) \sim \widetilde{P}}\big[K_{\tt NTK}(\bm{x}_1,\bm{x}_2)|\widetilde{y}_1 = \widetilde{y}_2\big]-  \mathbb{E}_{(\bm{x}_1,\widetilde{y}_1), (\bm{x}_2,\widetilde{y}_2) \sim \widetilde{P}}\big[K_{\tt NTK}(\bm{x}_1,\bm{x}_2)|\widetilde{y}_1 \neq \widetilde{y}_2\big]\geq C_{N} = \Theta(1)\,.
\end{equation*}
\end{Proposition}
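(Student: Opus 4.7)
The plan is to collapse the proposition to a one-dimensional question about the angle $\theta=\angle(\bm{x}_1,\bm{x}_2)$. Because both points lie on the sphere $\mathbb{S}^{d-1}(C_{\text{norm}})$ and the two-layer ReLU NTK is rotationally invariant, the kernel depends on the pair only through $\theta$; combining the arc-cosine kernels of orders $0$ and $1$ (arising from differentiating with respect to the inner and outer weights of a two-layer ReLU network under the initialization in~\cref{eq:initialization}), one obtains a closed-form expression
\begin{equation*}
    K_{\tt NTK}(\bm{x}_1,\bm{x}_2)=\frac{C_{\text{norm}}^2}{2\pi}\,h(\theta),\qquad h(\theta)=a(\pi-\theta)\cos\theta+b\sin\theta,
\end{equation*}
for explicit positive constants $a,b$.

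\textbf{Symmetry and algebraic cancellation.} The distribution $\widetilde{P}$ is supported on the two orthants $\mathcal{O}_{+}=\mathbb{S}^{d-1}(C_{\text{norm}})\cap\mathbb{R}^d_{>0}$ and $\mathcal{O}_{-}=-\mathcal{O}_{+}$, on which the labels are well-defined. The antipodal map $\bm{x}\mapsto-\bm{x}$ interchanges $\mathcal{O}_{+}$ and $\mathcal{O}_{-}$ and sends $\theta\mapsto\pi-\theta$; hence the conditional law of $\theta$ given $\widetilde{y}_1=\widetilde{y}_2$ equals the law of the angle between two i.i.d.\ uniform points in $\mathcal{O}_{+}$, whereas the conditional law given $\widetilde{y}_1\neq\widetilde{y}_2$ equals the law of $\pi$ minus that quantity. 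The gap to estimate therefore becomes
\begin{equation*}
    C_N = \mathbb{E}_{\bm{x}_1,\bm{x}_2\stackrel{\text{i.i.d.}}{\sim}\text{Unif}(\mathcal{O}_{+})}\!\Bigl[\tfrac{C_{\text{norm}}^2}{2\pi}\bigl(h(\theta)-h(\pi-\theta)\bigr)\Bigr].
\end{equation*}
Using $\cos(\pi-\theta)=-\cos\theta$ and $\sin(\pi-\theta)=\sin\theta$, all $\theta$-dependent pieces except the one linear in $\cos\theta$ cancel, leaving the clean identity $h(\theta)-h(\pi-\theta)=a\pi\cos\theta$. Combining with $\bm{x}_1^\top\bm{x}_2=C_{\text{norm}}^2\cos\theta$ and independence collapses the gap to $C_N=\tfrac{a}{2}\bigl\|\mathbb{E}_{\text{Unif}(\mathcal{O}_{+})}\bm{x}\bigr\|_2^2$.

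\textbf{Moment estimate and main obstacle.} Writing $\bm{x}=C_{\text{norm}}\bm{u}$ for $\bm{u}$ uniform on $\mathcal{O}_{+}\cap\mathbb{S}^{d-1}(1)$, coordinate-permutation symmetry gives $\mathbb{E}\bm{u}=(\mathbb{E}|u_1|)\bm{1}$, where $u_1$ is distributed as a single coordinate under the \emph{full} uniform law on $\mathbb{S}^{d-1}(1)$. The standard sphere identity $\mathbb{E}|u_1|=\Theta(1/\sqrt{d})$ (via a gamma-function calculation, or by noting that $u_1$ is approximately $\mathcal{N}(0,1/d)$ for large $d$) then yields $\|\mathbb{E}\bm{x}\|_2^2=d\,C_{\text{norm}}^2(\mathbb{E}|u_1|)^2=\Theta(C_{\text{norm}}^2)=\Theta(1)$, producing the claimed bound $C_N\geq\Theta(1)$. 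The main obstacle is the bookkeeping in the closed-form derivation of the two-layer NTK under the paper's initialization (an arc-cosine-kernel computation) and pinning down the constants $a,b$; once these are fixed, the symmetry reduction, the one-line cancellation $h(\theta)-h(\pi-\theta)=a\pi\cos\theta$, and the sphere moment are routine. A minor subtlety is that $h$ itself is not monotone on $[0,\pi]$ and the kernel can be negative near $\theta=\pi$, but this is harmless: we only use positivity of the \emph{expected} difference, which reduces directly to the positivity of $\|\mathbb{E}_{\text{Unif}(\mathcal{O}_+)}\bm{x}\|_2^2$.
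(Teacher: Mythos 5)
Your proof is correct and follows the same skeleton as the paper's for the first two steps: the closed form $K_{\tt NTK}(\bm{x}_1,\bm{x}_2)=\|\bm{x}_1\|_2\|\bm{x}_2\|_2\,\kappa(\cos\theta)$ for the two-layer ReLU NTK (the paper imports it from Bietti and Mairal), and the antipodal map $\bm{x}\mapsto-\bm{x}$ to convert the difference of the two conditional expectations into a single expectation over same-orthant pairs --- your one-line cancellation $h(\theta)-h(\pi-\theta)=a\pi\cos\theta$ is exactly the term-by-term cancellation the paper carries out. Where you genuinely depart is the final moment estimate, and your version is both sharper and more rigorous. The paper lower-bounds $\mathbb{E}[\cos\theta\mid\widetilde y_1=\widetilde y_2]$ by the \emph{unconditional} full-sphere quantity $\mathbb{E}|\cos\theta|=\Gamma(\tfrac d2)/(\sqrt{\pi}\,\Gamma(\tfrac{d+1}{2}))=\Theta(1/\sqrt d)$, justifying the comparison only by the remark that same-class inner products are nonnegative, which gives $\mathbb{E}[\cos\theta\mid\text{same}]\ge 0$ but not $\ge\mathbb{E}|\cos\theta|$; it then concludes $C_N=\Theta(1/\sqrt d)$, calling it constant because $d$ is fixed (and the remark in Section 3.2 indeed states $\Theta(1/\sqrt d)$). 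You instead evaluate the conditional expectation exactly: by independence it equals $\bigl\|\mathbb{E}_{\mathrm{Unif}(\mathcal{O}_+)}\bm{u}\bigr\|_2^2=d\,(\mathbb{E}|u_1|)^2=\Theta(1)$, using sign-flip invariance of the sphere measure to identify the orthant-conditional coordinate mean with $\mathbb{E}|u_1|$ over the full sphere. This buys a genuinely dimension-free $C_N=\Theta(1)$, which is what the proposition statement literally asserts, and it quantifies by how much the paper's bound is loose (a factor of $\sqrt d$). The only piece left implicit in your write-up is pinning down the positive constants $a,b$ in the arc-cosine computation; since any $a>0$ suffices for the conclusion, this is harmless.
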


\begin{proof}
    According to~\citet{bietti2019inductive}, we have for $2$-layer neural network:

    \begin{equation*}
        K_{\tt NTK}(\bm{x}_1,\bm{x}_2) = \left \| \bm{x}_1 \right \|_2 \left \| \bm{x}_2 \right \|_2 \kappa\left(\frac{\left \langle \bm{x}_1, \bm{x}_2 \right \rangle }{\left \| \bm{x}_1 \right \|_2 \left \| \bm{x}_2 \right \|_2}\right)\,,
    \end{equation*}
    where $\kappa (u):= u\kappa_0(u) + \kappa_1(u)$ with $\kappa_0 (u):= \frac{1}{\pi}(\pi - \arccos(u))$ and $\kappa_1 (u):= \frac{1}{\pi}(u(\pi - \arccos(u))+\sqrt{1-u^2})$.
    Denote $\cos(\theta) := \frac{\left \langle \bm{x}_1, \bm{x}_2 \right \rangle }{\left \| \bm{x}_1 \right \|_2 \left \| \bm{x}_2 \right \|_2}$, we have:

    \begin{equation*}
    \begin{split}
        K_{\tt NTK}(\bm{x}_1,\bm{x}_2) & = \left \| \bm{x}_1 \right \|_2 \left \| \bm{x}_2 \right \|_2 \kappa\big(\frac{\left \langle \bm{x}_1, \bm{x}_2 \right \rangle }{\left \| \bm{x}_1 \right \|_1 \left \| \bm{x}_2 \right \|_2}\big)\\
        & = \left \| \bm{x}_1 \right \|_2 \left \| \bm{x}_2 \right \|_2 \kappa\big(\cos(\theta)\big)\\
        & = \left \| \bm{x}_1 \right \|_2 \left \| \bm{x}_2 \right \|_2 \big(\cos(\theta)\kappa_0(\cos(\theta)) + \kappa_1(\cos(\theta))\big)\\
        & = \left \| \bm{x}_1 \right \|_2 \left \| \bm{x}_2 \right \|_2 \big(\cos(\theta)\frac{1}{\pi}(\pi - \theta) + \frac{1}{\pi}(\cos(\theta)(\pi - \theta)+\left | \sin(\theta) \right | )\big)\\
        & = \left \| \bm{x}_1 \right \|_2 \left \| \bm{x}_2 \right \|_2 \big(\cos(\theta)\frac{2}{\pi}(\pi - \theta) + \frac{\left | \sin(\theta) \right |}{\pi}\big)\,.
    \end{split}
    \end{equation*}
    
    According to $y=\begin{cases}
    1  & \text{ if } x_1 > 0 \\
    -1  & \text{ if } x_1 \leq 0
    \end{cases}$, we have if $\bm{x}_3 = -\bm{x}_2$ then $y_3 = -y_2$ and $\cos(\theta') := \frac{\left \langle \bm{x}_1, \bm{x}_3 \right \rangle }{\left \| \bm{x}_1 \right \|_2 \left \| \bm{x}_3 \right \|_2} = -\cos(\theta)$, so $\theta' = \pi - \theta$ and$\left | \sin(\theta') \right | = \left | \sin(\theta) \right |$.

    According to the symmetry of $\widetilde{P}$, we can compute that:

    \begin{equation*}
    \begin{split}
        & \mathbb{E}_{(\bm{x}_1,\widetilde{y}_1), (\bm{x}_2,\widetilde{y}_2) \sim \widetilde{P}}\big[\cos(\theta)\frac{2}{\pi}(\pi - \theta) + \frac{\left | \sin(\theta) \right |}{\pi}|\widetilde{y}_1 = \widetilde{y}_2\big]\\
        = & \mathbb{E}_{(\bm{x}_1,\widetilde{y}_1), (\bm{x}_3,\widetilde{y}_3) \sim \widetilde{P}}\big[\cos(\theta)\frac{2}{\pi}(\pi - \theta) + \frac{\left | \sin(\theta) \right |}{\pi}|\widetilde{y}_1 = \widetilde{y}_3\big]\\ 
        = & \mathbb{E}_{(\bm{x}_1,\widetilde{y}_1), (\bm{x}_3,\widetilde{y}_3) \sim \widetilde{P}}\big[-\cos(\theta')\frac{2}{\pi}\theta' + \frac{\left | \sin(\theta') \right |}{\pi}|\widetilde{y}_1 = \widetilde{y}_3\big]\\    
    \end{split}
    \end{equation*}

    According to the isotropy of $\widetilde{P}$, we have:
    
    \begin{equation*}
    \begin{split}
        & \mathbb{E}_{(\bm{x}_1,\widetilde{y}_1), (\bm{x}_2,\widetilde{y}_2) \sim \widetilde{P}}\big[K_{\tt NTK}(\bm{x}_1,\bm{x}_2)|\widetilde{y}_1 = \widetilde{y}_2\big]-  \mathbb{E}_{(\bm{x}_1,\widetilde{y}_1), (\bm{x}_2,\widetilde{y}_2) \sim \widetilde{P}}\big[K_{\tt NTK}(\bm{x}_1,\bm{x}_2)|\widetilde{y}_1 \neq \widetilde{y}_2\big]\\
        = & \mathbb{E}_{(\bm{x}_1,\widetilde{y}_1) \sim \widetilde{P}}\left \| \bm{x}_1 \right \|_2 \mathbb{E}_{(\bm{x}_2,\widetilde{y}_2) \sim \widetilde{P}}\left \| \bm{x}_2 \right \|_2 \\
        \times & \bigg( \mathbb{E}_{(\bm{x}_1,\widetilde{y}_1), (\bm{x}_2,\widetilde{y}_2) \sim \widetilde{P}}\big[\cos(\theta)\frac{2}{\pi}(\pi - \theta) + \frac{\left | \sin(\theta) \right |}{\pi}|\widetilde{y}_1 = \widetilde{y}_2\big]-  \mathbb{E}_{(\bm{x}_1,\widetilde{y}_1), (\bm{x}_2,\widetilde{y}_2) \sim \widetilde{P}}\big[\cos(\theta)\frac{2}{\pi}(\pi - \theta) + \frac{\left | \sin(\theta) \right |}{\pi}|\widetilde{y}_1 \neq \widetilde{y}_2\big] \bigg)\\
        = & \mathbb{E}_{(\bm{x}_1,\widetilde{y}_1) \sim \widetilde{P}}\left \| \bm{x}_1 \right \|_2 \mathbb{E}_{(\bm{x}_2,\widetilde{y}_2) \sim \widetilde{P}}\left \| \bm{x}_2 \right \|_2 \\
        \times & \bigg( \mathbb{E}_{(\bm{x}_1,\widetilde{y}_1), (\bm{x}_2,\widetilde{y}_2) \sim \widetilde{P}}\big[\cos(\theta)\frac{2}{\pi}(\pi - \theta) + \frac{\left | \sin(\theta) \right |}{\pi}|\widetilde{y}_1 = \widetilde{y}_2\big]-  \mathbb{E}_{(\bm{x}_1,\widetilde{y}_1), (\bm{x}_3,\widetilde{y}_3) \sim \widetilde{P}}\big[-\cos(\theta')\frac{2}{\pi}\theta' + \frac{\left | \sin(\theta') \right |}{\pi}|\widetilde{y}_1 = \widetilde{y}_3\big] \bigg)\\
        = & 2 \mathbb{E}_{(\bm{x}_1,\widetilde{y}_1) \sim \widetilde{P}}\left \| \bm{x}_1 \right \|_2 \mathbb{E}_{(\bm{x}_2,\widetilde{y}_2) \sim \widetilde{P}}\left \| \bm{x}_2 \right \|_2 \mathbb{E}_{(\bm{x}_1,\widetilde{y}_1), (\bm{x}_2,\widetilde{y}_2) \sim \widetilde{P}}\big[\cos(\theta)|\widetilde{y}_1 = \widetilde{y}_2\big]\\
        = & 2 C_{\text{norm}}^2\mathbb{E}_{(\bm{x}_1,\widetilde{y}_1), (\bm{x}_2,\widetilde{y}_2) \sim \widetilde{P}}\big[\cos(\theta)|\widetilde{y}_1 = \widetilde{y}_2\big]\\
        \geq & 2 C_{\text{norm}}^2\mathbb{E}_{(\bm{x}_1,\widetilde{y}_1), (\bm{x}_2,\widetilde{y}_2) \sim \widetilde{P}}\left | \cos(\theta) \right |\,,
    \end{split}
    \end{equation*}
    where the last inequality holds by the fact that the inner product of two data points from the same class is always non-negative in our problem setting. 
    We know that the distribution of the angle between the vectors uniformly distributed on a $d$-dimensional sphere is~\citep{JMLRcai13a}:
    \begin{equation*}
        p(\theta) = \frac{\Gamma(\frac{d}{2})}{\Gamma(\frac{d-1}{2})\sqrt{\pi}}\sin^{d-2} \theta, \quad \theta \in [0 ,\pi]\,.
    \end{equation*}

    Then we use substitution that $x = \cos \theta$, then we have the distribution of the cosine of the angle between the vectors uniformly distributed on a $d$-dimensional sphere is:
    \begin{equation*}
        p(x) = \frac{\Gamma(\frac{d}{2})}{\Gamma(\frac{d-1}{2})\sqrt{\pi}}(1-x^2)^{(d-3)/2}, \quad x \in [-1 ,1]\,.
    \end{equation*}

    Then we have:
    \begin{equation*}
    \begin{split}
        \mathbb{E}_{(\bm{x}_1,\widetilde{y}_1), (\bm{x}_2,\widetilde{y}_2) \sim \widetilde{P}}\left | \cos(\theta) \right | & = \int_{-1}^{1} \left | x \right |p(x) \mathrm{d} x\\
        & = 2\frac{\Gamma(\frac{d}{2})}{\Gamma(\frac{d-1}{2})\sqrt{\pi}}\int_{0}^{1} x(1-x^2)^{(d-3)/2} \mathrm{d} x\\
        & = \frac{2\Gamma(\frac{d}{2})}{\Gamma(\frac{d-1}{2})\sqrt{\pi}(d-1)}\\
        & = \frac{\Gamma(\frac{d}{2})}{\Gamma(\frac{d+1}{2})\sqrt{\pi}}\,.
    \end{split}
    \end{equation*}

    Then, we have:
    \begin{equation*}
        \mathbb{E}_{(\bm{x}_1,\widetilde{y}_1), (\bm{x}_2,\widetilde{y}_2) \sim \widetilde{P}}\big[K_{\tt NTK}(\bm{x}_1,\bm{x}_2)|\widetilde{y}_1 = \widetilde{y}_2\big]-  \mathbb{E}_{(\bm{x}_1,\widetilde{y}_1), (\bm{x}_2,\widetilde{y}_2) \sim \widetilde{P}}\big[K_{\tt NTK}(\bm{x}_1,\bm{x}_2)|\widetilde{y}_1 \neq \widetilde{y}_2\big] \geq \frac{2 C_{\text{norm}}^2\Gamma(\frac{d}{2})}{\Gamma(\frac{d+1}{2})\sqrt{\pi}} = \Theta \left(\frac{1}{\sqrt{d}}\right)\,.
    \end{equation*}
    When we fix the data dimension $d$, it is a constant order.
\end{proof}

\section{Proof for Lemmas in~\cref{sec:proofsketchclass}}
\label{sec:proof_thm1}

\subsection{Proof of~\cref{lemma:lipschitz_concentration}}

Let us restate~\cref{lemma:lipschitz_concentration} as below:

\lipschitzconcentration*

\begin{proof}
    According to the proof of~\citet[Lemma 9]{JMLR_v22_20_974} and~\citet[Lemma 3]{frei2022benign}, we have:

    \begin{equation}
    \begin{split}
        \mathbb{P}_{(\bm{x},y)\sim P}(y \neq \mathrm{sgn}(f(\bm{x};\bm{W}^{(t)}))) & = \mathbb{P}_{(\bm{x},y)\sim P}(y \mathrm{sgn}(f(\bm{x};\bm{W}^{(t)}))<0)\\
        & \leq \eta + \mathbb{P}_{(\bm{x},\widetilde{y})\sim \widetilde{P}}(\widetilde{y}f(\bm{x};\bm{W}^{(t)})<0)\,.
    \end{split}
    \label{eq:first_step}
    \end{equation}

    Denoting the Lipschitz constant of the neural network as $\text{Lip}_{f(\bm{x};\bm{W}^{(t)})}$, since $P_{\text{clust}}$ is $\lambda$-strongly log-concave, then according to~\citet[Theorem 3.16]{wainwright2019high}, for any $t>0$, we have:

    \begin{equation*}
        \mathbb{P} (|\widetilde{y}f(\bm{x};\bm{W}^{(t)}) - \mathbb{E}[\widetilde{y}f(\bm{x};\bm{W}^{(t)})]|\geq t) \leq 2\exp \bigg(-\frac{\lambda}{4} \bigg(\frac{t}{\text{Lip}_{f(\bm{x};\bm{W}^{(t)})}}\bigg)^2\bigg)\,.
    \end{equation*}

    Choosing $t := \mathbb{E}[\widetilde{y}f(\bm{x};\bm{W}^{(t)})]$, we have:

    \begin{equation*}
        \mathbb{P} (|\widetilde{y}f(\bm{x};\bm{W}^{(t)}) - \mathbb{E}[\widetilde{y}f(\bm{x};\bm{W}^{(t)})]|\geq  \mathbb{E}[\widetilde{y}f(\bm{x};\bm{W}^{(t)})]) \leq 2\exp \bigg(-\frac{\lambda}{4} \bigg(\frac{ \mathbb{E}[\widetilde{y}f(\bm{x};\bm{W}^{(t)})]}{\text{Lip}_{f(\bm{x};\bm{W}^{(t)})}}\bigg)^2\bigg)\,,
    \end{equation*}

    which implies:

    \begin{equation*}
    \begin{split}
        \mathbb{P}_{(\bm{x},\widetilde{y})\sim \widetilde{P}}(\widetilde{y}f(\bm{x};\bm{W}^{(t)})<0) & = \mathbb{P}(\widetilde{y}f(\bm{x};\bm{W}^{(t)}) - \mathbb{E}[\widetilde{y}f(\bm{x};\bm{W}^{(t)})] < -\mathbb{E}[\widetilde{y}f(\bm{x};\bm{W}^{(t)})])\\
        & \leq \exp \bigg(-\frac{\lambda}{4} \bigg(\frac{ \mathbb{E}[\widetilde{y}f(\bm{x};\bm{W}^{(t)})]}{\text{Lip}_{f(\bm{x};\bm{W}^{(t)})}}\bigg)^2\bigg)\,.
    \end{split}
    \end{equation*}
    Incorporating it into~\cref{eq:first_step}, we conclude the proof.

\end{proof}

\subsection{Proof of~\cref{lemma:almost_linear}}

Let us restate~\cref{lemma:almost_linear} as below:

\almostlinear*

\begin{proof}
    We can directly calculate that: 

    \begin{equation*}
    \begin{split}
        & f(\bm{x}; \bm{W}) - f(\bm{x}; \bm{W}') - \left \langle \nabla f(\bm{x}; \bm{W}' ), \bm{W} - \bm{W}'\right \rangle\\
        = & \bm{W}_L(\bm{h}_{i,L-1}-\bm{h}_{i,L-1}')-\sum_{l=1}^{L-1} \bm{W}_L' \bigg( \prod_{r = l+1}^{L-1} (\bm{D}_{i,r}'\bm{W}_r')\bigg) \bm{D}_{i,l}'(\bm{W}_l-\bm{W}_l')\bm{h}_{i,l-1}'\,.
    \end{split}
    \end{equation*}

    By~\citet[Claim 11.2]{allen2019convergence}, when $\bm{W}, \bm{W}' \in \mathcal{B} (\bm{W}^{(0)},\omega )$, there exist diagonal matrices $\bm{D}_{i,l}''\in \mathbb{R}^{m\times m}$ with entries in $\{ +1, -1\}$ such that $\left \| \bm{D}_{i,l}'' \right \|_0 \leq \mathcal{O}(m \omega^{2/3}L)$ and:
    \begin{equation*}
        \bm{h}_{i,L-1}-\bm{h}_{i,L-1}' = \sum_{l=1}^{L-1} \bigg( \prod_{r = l+1}^{L-1} (\bm{D}_{i,r} + \bm{D}_{i,r}'')\bm{W}_r\bigg)(\bm{D}_{i,l}+\bm{D}_{i,l}'')(\bm{W}_l-\bm{W}_l')\bm{h}_{i,l-1}',\quad \forall i \in [n]\,.
    \end{equation*}
    
    Therefore, we have:

    \begin{equation}
    \begin{split}
        & f(\bm{x}; \bm{W}) - f(\bm{x}; \bm{W}') - \left \langle \nabla f(\bm{x}; \bm{W}' ), \bm{W} - \bm{W}'\right \rangle \\
        = & \sum_{l=1}^{L-1} \bm{W}_L\bigg( \prod_{r = l+1}^{L-1} (\bm{D}_{i,r} + \bm{D}_{i,r}'')\bm{W}_r\bigg)(\bm{D}_{i,l}+\bm{D}_{i,l}'')(\bm{W}_l-\bm{W}_l')\bm{h}_{i,l-1}'\\
        & -\sum_{l=1}^{L-1} \bm{W}_L' \bigg( \prod_{r = l+1}^{L-1} (\bm{D}_{i,r}'\bm{W}_r')\bigg) \bm{D}_{i,l}'(\bm{W}_l-\bm{W}_l')\bm{h}_{i,l-1}'\,.
    \end{split}
    \label{eq:linear_bound_1}
    \end{equation}

    According to~\citet[Lemma B.1]{cao2019generalization}, for $\omega \leq \mathcal{O}(L^{-9/2}(\log m)^{-3})$, then with probability at least $1 - \mathcal{O}(nL^2)\exp(-\Omega(m\omega^{2/3}L))$, we have:
    
    \begin{equation}
        \frac{1}{2} \left \| \bm{x}_i \right \|_2\leq \left \| \bm{h}_{i,l-1}' \right \|_2 \leq \frac{3}{2} \left \| \bm{x}_i \right \|_2 \leq \frac{3}{2} C_{\text{norm}} = \Theta(1),\forall i \in [n], l \in [L-1]\,.
    \label{eq:feature_norm}
    \end{equation}

    According to~\citet[Lemma 8.7]{allen2019convergence}, for $s \in \big[\Omega\big(\frac{1}{\log m}\big), \mathcal{O}\big(\frac{m}{L^3 \log m}\big)\big]$, $\omega = \mathcal{O}(L^{-3/2})$, with probability at least $1-\exp(-\Omega(s\log m))$, we have:

    \begin{equation*}
        \left \| \bm{W}_L\bigg( \prod_{r = l+1}^{L-1} (\bm{D}_{i,r} + \bm{D}_{i,r}'')\bm{W}_r\bigg)(\bm{D}_{i,l}+\bm{D}_{i,l}'') - \bm{W}_L' \bigg( \prod_{r = l+1}^{L-1} (\bm{D}_{i,r}'\bm{W}_r')\bigg) \bm{D}_{i,l}'\right \|_2 \leq \mathcal{O}(\sqrt{L^3 s \log m + \omega^2 L^3 m})\,.
    \end{equation*}

    Then, taking $s := \Theta(m\omega^2)$, we have $m\omega^2 \leq \mathcal{O}\big(\frac{m}{L^3 \log m}\big)$, that is $\omega \leq \mathcal{O}(L^{-3/2}(\log m)^{-1/2})$, with probability at least $1-\exp(-\Omega(m\omega^2\log m))$, we have:

    \begin{equation}
        \left \| \bm{W}_L\bigg( \prod_{r = l+1}^{L-1} (\bm{D}_{i,r} + \bm{D}_{i,r}'')\bm{W}_r\bigg)(\bm{D}_{i,l}+\bm{D}_{i,l}'') - \bm{W}_L' \bigg( \prod_{r = l+1}^{L-1} (\bm{D}_{i,r}'\bm{W}_r')\bigg) \bm{D}_{i,l}'\right \|_2 \leq \mathcal{O}(\sqrt{\omega^2 L^3 m \log m})\,.
    \label{eq:gradient_lazy_norm_bound}
    \end{equation}

    Take~\cref{eq:feature_norm} and~\cref{eq:gradient_lazy_norm_bound} into~\cref{eq:linear_bound_1}, for $\omega \leq \mathcal{O}(L^{-9/2}(\log m)^{-3})$, then with probability at least $1 - \exp(-\Omega(m\omega^2\log m)) - \mathcal{O}(nL^2)\exp(-\Omega(m\omega^{2/3}L))$, we have:
    
    \begin{equation*}
    \begin{split}
        |f(\bm{x}; \bm{W}) - f(\bm{x}; \bm{W}') - \left \langle \nabla f(\bm{x}; \bm{W}' ), \bm{W} - \bm{W}'\right \rangle| \leq  \mathcal{O}(\sqrt{\omega^2 L^3 m \log m})\sum_{l=1}^{L-1}\left \| \bm{W}_l-\bm{W}_l' \right \|_2\,,
    \end{split}
    \end{equation*}
    which concludes the proof.
\end{proof}

\subsection{Proof of~\cref{lemma:change_every_step}}

Let us restate~\cref{lemma:change_every_step}:

\changeeverystep*

\begin{proof}
    By~\cref{lemma:almost_linear}, for $\omega = \mathcal{O} (L^{-9/2}(\log m)^{-3} )$, with probability at least $1- \exp(-\Omega(m\omega^2\log m))-\mathcal{O}(nL^2)\exp(-\Omega(m\omega^{2/3}L))$, we have:
    \begin{equation*}
        \left|f(\bm{x}; \bm{W}^{(t+1)}) - f(\bm{x}; \bm{W}^{(t)}) - \left \langle \nabla f(\bm{x}; \bm{W}^{(t)} ), \bm{W}^{(t+1)} - \bm{W}^{(t)}\right \rangle \right| \leq \mathcal{O}(\sqrt{\omega^2 L^3 m \log m})\sum_{l=1}^{L-1}\left \| \bm{W}^{(t+1)}_l - \bm{W}^{(t)}_{l} \right \|_2 \,.
    \end{equation*}

    Since $\widetilde{y} \in \left \{ \pm 1 \right \}$, we can calculate that:

    \begin{equation*}
    \begin{split}
        & \widetilde{y}[f(\bm{x}; \bm{W}^{(t+1)}) - f(\bm{x}; \bm{W}^{(t)})] \\
        \geq & \widetilde{y}\left \langle \nabla f(\bm{x}; \bm{W}^{(t)} ), \bm{W}^{(t+1)} - \bm{W}^{(t)}\right \rangle - \mathcal{O}(\sqrt{\omega^2 L^3 m \log m})\sum_{l=1}^{L-1}\left \| \bm{W}^{(t+1)}_l - \bm{W}^{(t)}_{l} \right \|_2 \\
        \overset{(a)} = & \widetilde{y} \alpha g_i^{(t)}
        \left \langle \nabla f(\bm{x}; \bm{W}^{(t)} ), y_i \nabla f(\bm{x}_i; \bm{W}^{(t)} ) \right \rangle - \mathcal{O}(\sqrt{\omega^2 L^3 m \log m})\sum_{l=1}^{L-1}\left \| \bm{W}^{(t+1)}_l - \bm{W}^{(t)}_{l} \right \|_2\\
        = & \alpha g_i^{(t)} \left \langle \widetilde{y} \nabla f(\bm{x}; \bm{W}^{(t)} ), y_i \nabla f(\bm{x}_i; \bm{W}^{(t)} ) \right \rangle - \mathcal{O}(\sqrt{\omega^2 L^3 m \log m})\sum_{l=1}^{L-1}\left \| \bm{W}^{(t+1)}_l - \bm{W}^{(t)}_{l} \right \|_2\,,
    \end{split}        
    \end{equation*}
    where $(\bm {x}_i, y_i )$ is the random selected training sample at step $t + 1$, and $(a)$ uses~\cref{alg:algorithm_SGD} and definition of $g_i^{(t)}$ that:
    \begin{equation*}
    \begin{split}
        \bm{W}^{(t+1)} - \bm{W}^{(t)} & = - \alpha\cdot \nabla \ell \big( y_i f(\bm x_i; \bm{W}^{(t)})  \big)\\
        & = - \alpha \ell' \big( y_i f(\bm x_i; \bm{W}^{(t)})  \big)\cdot \nabla\big( y_i f(\bm x_i; \bm{W}^{(t)})  \big) \\
        & = \alpha g_i^{(t)} y_i \nabla f(\bm x_i; \bm{W}^{(t)}) \,.
    \end{split}
    \end{equation*}
    Finally we conclude the proof.
\end{proof}

\subsection{Proof of~\cref{lemma:normalized_margin}}

Let us restate~\cref{lemma:normalized_margin} as below:

\normalizedmarginbound*

\begin{proof}
    According to the~\cref{lemma:change_every_step}, $\forall t \geq 0$, for $\omega = \mathcal{O} (L^{-9/2}(\log m)^{-3} )$, with probability at least $1- \exp(-\Omega(m\omega^2\log m))-\mathcal{O}(nL^2)\exp(-\Omega(m\omega^{2/3}L))$, we have:
    \begin{equation}
    \begin{split}
        & \mathbb{E}_{(\bm{x},\widetilde{y})\sim \widetilde{P}}[\widetilde{y}(f(\bm{x};\bm{W}^{(t+1)}) - f(\bm{x};\bm{W}^{(t)}))]\\
        \geq & \alpha g_i^{(t)} \mathbb{E}_{(\bm{x},\widetilde{y})\sim \widetilde{P}} \left \langle \widetilde{y} \nabla f(\bm{x}; \bm{W}^{(t)} ), y_i \nabla f(\bm{x}_i; \bm{W}^{(t)} ) \right \rangle - \mathcal{O}(\sqrt{\omega^2 L^3 m \log m})\sum_{l=1}^{L-1}\left \| \bm{W}^{(t+1)}_l - \bm{W}^{(t)}_{l} \right \|_2\\
        \geq & \alpha g_i^{(t)} \mathbb{E}_{(\bm{x},\widetilde{y})\sim \widetilde{P}} \left \langle \widetilde{y} \nabla f(\bm{x}; \bm{W}^{(t)} ), y_i \nabla f(\bm{x}_i; \bm{W}^{(t)} ) \right \rangle - \mathcal{O}(\sqrt{\omega^2 L^5 \log m})\\
        := & \alpha g_i^{(t)} \mathbb{E}_{(\bm{x},\widetilde{y})\sim \widetilde{P}} \left \langle \widetilde{y} \nabla f(\bm{x}; \bm{W}^{(t)} ), y_i \nabla f(\bm{x}_i; \bm{W}^{(t)} ) \right \rangle - \varepsilon \,,
    \end{split}
    \label{eq:expectation_yf_diff_t}
    \end{equation}

    where the second inequality use the result of lazy-training that $\bm{W}^{(t+1)} \in \mathcal{B} (\bm{W}^{(t)},\frac{1}{\sqrt{m}} )$ and $(\bm {x}_i, y_i )$ is the random selected training sample at step $t + 1$.

    By the definition of $\varepsilon$ we have: 
    \begin{equation*}
        \varepsilon = \mathcal{O}(\sqrt{\omega^2 L^3 \log m}) = \mathcal{O}(L^{-2}(\log m)^{-5/2})\,.
    \end{equation*}
    
    According to~\cref{assumption:kernel}, we have:
    \begin{equation}
    \begin{split}
        & \mathbb{E}_{(\bm{x},\widetilde{y})\sim \widetilde{P}} \left \langle \widetilde{y} \nabla f(\bm{x}; \bm{W}^{(t)} ), y_i \nabla f(\bm{x}_i; \bm{W}^{(t)} ) \right \rangle\\
        = & \mathbb{P}(i \in \mathcal{C}) \mathbb{E}_{(\bm{x},\widetilde{y})\sim \widetilde{P}} \bigg[\left \langle \widetilde{y} \nabla f(\bm{x}; \bm{W}^{(t)} ), y_i \nabla f(\bm{x}_i; \bm{W}^{(t)} ) \right \rangle|i \in \mathcal{C}\bigg] \\
        & + \mathbb{P}(i \in \mathcal{C}') \mathbb{E}_{(\bm{x},\widetilde{y})\sim \widetilde{P}} \bigg[\left \langle \widetilde{y} \nabla f(\bm{x}; \bm{W}^{(t)} ), y_i \nabla f(\bm{x}_i; \bm{W}^{(t)} ) \right \rangle|i \in \mathcal{C}' \bigg]\\
        = & \mathbb{P}(i \in \mathcal{C}) \mathbb{P}(\widetilde{y} = y_i)\mathbb{E}_{(\bm{x},\widetilde{y})\sim \widetilde{P}} \bigg[\left \langle \nabla f(\bm{x}; \bm{W}^{(t)} ), \nabla f(\bm{x}_i; \bm{W}^{(t)} ) \right \rangle|i \in \mathcal{C}, \widetilde{y} = y_i\bigg]\\
        & - \mathbb{P}(i \in \mathcal{C}) \mathbb{P}(\widetilde{y} \neq y_i)\mathbb{E}_{(\bm{x},\widetilde{y})\sim \widetilde{P}} \bigg[\left \langle \nabla f(\bm{x}; \bm{W}^{(t)} ), \nabla f(\bm{x}_i; \bm{W}^{(t)} ) \right \rangle|i \in \mathcal{C}, \widetilde{y} \neq y_i\bigg]\\
        & + \mathbb{P}(i \in \mathcal{C}') \mathbb{P}(\widetilde{y} = y_i)\mathbb{E}_{(\bm{x},\widetilde{y})\sim \widetilde{P}} \bigg[\left \langle \nabla f(\bm{x}; \bm{W}^{(t)} ), \nabla f(\bm{x}_i; \bm{W}^{(t)} ) \right \rangle|i \in \mathcal{C}', \widetilde{y} = y_i\bigg]\\
        & - \mathbb{P}(i \in \mathcal{C}') \mathbb{P}(\widetilde{y} \neq y_i)\mathbb{E}_{(\bm{x},\widetilde{y})\sim \widetilde{P}} \bigg[\left \langle \nabla f(\bm{x}; \bm{W}^{(t)} ), \nabla f(\bm{x}_i; \bm{W}^{(t)} ) \right \rangle|i \in \mathcal{C}', \widetilde{y} \neq y_i\bigg]\\
        \geq & (1 - \eta) \times \frac{1}{2} \times C_{N} + [\eta \times \frac{1}{2} \times (-C_{N}) ]\\
        = & \frac{1-2\eta}{2} C_{N}\,,
    \end{split}
    \label{eq:expectation_yf_t}
    \end{equation}

    where the $\mathcal{C}$ and $\mathcal{C}'$ represent the subset of training data for clean labels and noisy labels respectively. And $(\bm {x}_i, y_i )$ is the random selected training sample at step $t + 1$.

    According to~\citet[Lemma B.1]{cao2019generalization}. For $\omega \leq \mathcal{O}(L^{-9/2}(\log m)^{-3})$, then with probability at least $1 - \mathcal{O}(nL^2)\exp(-\Omega(m\omega^{2/3}L))$, $\frac{1}{2} \leq \left | f(\bm {x}_i;\bm{W}^{(t)})) \right | \leq \frac{3}{2} \left \| \bm{x}_i \right \|_2 \leq \frac{3}{2} C_{\text{norm}},\forall i \in [n]$.

    That means:
    \begin{equation}
        g_i^{(t)} = -\ell'(y_i f(\bm{x}_i; \bm{W}^{(t)} )) = \frac{1}{1+\exp\big(y_i f(\bm{x}_i; \bm{W}^{(t)}\big)} \geq \frac{1}{1+\exp(\frac{3}{2}C_{\text{norm}})}\,, \quad \forall i \in [n]\,.
    \label{eq:bound_git}
    \end{equation}
    
    Take~\cref{eq:expectation_yf_t} and~\cref{eq:bound_git} into~\cref{eq:expectation_yf_diff_t}, we have for $\omega \leq \mathcal{O}(L^{-9/2}(\log m)^{-3})$, then with probability at least $1- \exp(-\Omega(m\omega^2\log m)) - \mathcal{O}(nL^2)\exp(-\Omega(m\omega^{2/3}L))$:
    \begin{equation*}
    \begin{split}
        & \mathbb{E}_{(\bm{x},\widetilde{y})\sim \widetilde{P}}[\widetilde{y}(f(\bm{x};\bm{W}^{(t+1)}) - f(\bm{x};\bm{W}^{(t)}))]\\
        \geq & \alpha g_i^{(t)} \mathbb{E}_{(\bm{x},\widetilde{y})\sim \widetilde{P}} \left \langle \widetilde{y} \nabla f(\bm{x}; \bm{W}^{(t)} ), y_i \nabla f(\bm{x}_i; \bm{W}^{(t)} ) \right \rangle - \varepsilon \\
         \geq & \alpha g_i^{(t)} \frac{1-2\eta}{2} C_{N} - \varepsilon \\
         \geq & \frac{\alpha}{1+\exp(\frac{3}{2}C_{\text{norm}})} \frac{1-2\eta}{2} C_{N} - \varepsilon\\
         = &\Theta\big(\alpha(1-2\eta)C_{N}\big)\,,
    \end{split}
    \end{equation*}

    where the last inequality hold for $\alpha  \gtrsim \varepsilon = \mathcal{O}(L^{-2}(\log m)^{-5/2})$.
    
    Since the symmetry of the random Gaussian initialization and zero-mean of $\tilde{y}$, we have $\mathbb{E}_{(\bm{x},\widetilde{y})\sim \widetilde{P}}[\widetilde{y}f(\bm{x};\bm{W}^{(0)})] = 0$, then we have:

    \begin{equation*}
    \begin{split}
        & \mathbb{E}_{(\bm{x},\widetilde{y})\sim \widetilde{P}}[\widetilde{y}f(\bm{x};\bm{W}^{(t)})]\\
        = & \mathbb{E}_{(\bm{x},\widetilde{y})\sim \widetilde{P}}[\widetilde{y}f(\bm{x};\bm{W}^{(0)})] + \sum_{s=0}^{n-1}\mathbb{E}_{(\bm{x},\widetilde{y})\sim \widetilde{P}}[\widetilde{y}(f(\bm{x};\bm{W}^{(s+1)}) - f(\bm{x};\bm{W}^{(s)}))]\\
        \geq & 0 + \sum_{t=0}^{t-1} \Theta\big(\alpha(1-2\eta)C_{N}\big)\\
        = & \Theta\big(t\alpha(1-2\eta)C_{N}\big)\,.
    \end{split}
    \end{equation*}
\end{proof}

\section{Optimization result for the classification problem}
\label{sec:optimization}
Before presenting the optimization result, we first introduce the following assumption:

\begin{assumption}
\label{assumption:assumption_optimization}
    For $(\bm{x}_1,\widetilde{y}_1), (\bm{x}_2,\widetilde{y}_2) \sim \widetilde{P}$, if $y_1 \neq y_2$, then $\left \| \bm{x}_1 - \bm{x}_2 \right \|_2 \geq \phi$ for some $\phi > 0$.
\end{assumption}

Then we present the theorem and its proof.
\begin{theorem}
\label{thm:optimization}
    Given a DNN defined by~\cref{eq:deep_network} and trained by~\cref{alg:algorithm_SGD} with the training data satisfy~\cref{assumption:distribution_1} and~\cref{assumption:assumption_optimization}, then for the step size $\alpha =\mathcal{O}(n^{-3}L^{-9}m^{-1})$, the width $m = \widetilde{\Omega}(\text{poly}(n, \phi^{-1},L))\Omega(1/\delta)$ and the maximum number of iteration $t = \mathcal{O}\big(\text{poly}(n, \phi^{-1},L)\big)\mathcal{O}(1/\delta)$ then with high probability,~\cref{alg:algorithm_SGD} can find a point $\bm{W}^{(t)}$ such that $\hat{L}(\bm{W}^{(t)}) \leq \delta$.
\end{theorem}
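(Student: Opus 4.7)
The plan is to follow the standard over-parameterized optimization template of \citet{allen2019convergence} and \citet{zou2020gradient}, adapted to the single-example SGD updates in \cref{alg:algorithm_SGD} with logistic loss. Four ingredients are needed: (i) an invariant that keeps the iterates inside the lazy ball $\mathcal{B}(\bm W^{(0)},\omega)$ of \cref{lemma:almost_linear}; (ii) a minimum-eigenvalue lower bound for the training NTK Gram matrix coming from data separation; (iii) a semi-smoothness upper bound on $\hat L$; and (iv) a gradient-norm lower bound for $\hat L$ away from zero.

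First I would establish the staying-in-ball invariant by induction on $t$. Each SGD step shifts the weights by $\alpha g_i^{(t)} y_i \nabla f(\bm x_i;\bm W^{(t)})$, whose layerwise Frobenius norm is $\mathcal{O}(\sqrt m)$ for iterates inside the lazy ball (via the same intermediate estimates that underlie \cref{lemma:almost_linear}); since $|g_i^{(t)}|\le 1$, the cumulative drift after $t$ steps is $\mathcal{O}(\alpha t\sqrt m)$. With the theorem's choices $\alpha = \mathcal{O}(n^{-3} L^{-9} m^{-1})$ and $t = \mathrm{poly}(n,\phi^{-1},L)/\delta$, this drift stays well below $\omega = \mathcal{O}(L^{-9/2}(\log m)^{-3})$ as long as $m$ meets the stated polynomial lower bound, closing the induction and legitimizing the use of \cref{lemma:almost_linear} throughout training.

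Next, under \cref{assumption:assumption_optimization} the standard argument of \citet{allen2019convergence} (or \citet{du2019gradient}) gives $\lambda_{\min}(\bm K_{\tt NTK}) \ge \Omega(\phi/(n^2 L))$ at NTK initialization with high probability. Plugging this into \cref{lemma:almost_linear} produces, for every $\bm W\in\mathcal{B}(\bm W^{(0)},\omega)$, a gradient-norm lower bound $\|\nabla\hat L(\bm W)\|_F^2 \ge \Omega(m\lambda_{\min}(\bm K_{\tt NTK})/n)\,\hat L(\bm W)$ up to an almost-linear remainder that is of lower order for the chosen $\omega$ and $m$. A matching semi-smoothness upper bound --- again from \cref{lemma:almost_linear} together with the $1$-Lipschitz, $1$-smooth logistic loss --- yields the per-step descent inequality
\begin{equation*}
\mathbb{E}_i\,\hat L(\bm W^{(t+1)}) \;\le\; \bigl(1 - c\,\alpha\,\lambda_{\min}(\bm K_{\tt NTK})\bigr)\, \hat L(\bm W^{(t)}) \;+\; \mathcal{O}(\alpha^{2} m),
\end{equation*}
where the second term collects the single-example SGD variance. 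Unrolling over $T$ steps then gives $\mathbb{E}\,\hat L(\bm W^{(T)}) \le e^{-c\alpha\lambda_{\min}T}\hat L(\bm W^{(0)}) + \mathcal{O}(\alpha m/\lambda_{\min})$; tuning $\alpha$ so the noise floor is at most $\delta/2$ and $T$ so the exponential term is at most $\delta/2$ delivers $\hat L(\bm W^{(T)})\le \delta$, and these two conditions translate exactly into the polynomial dependences on $n,L,\phi^{-1},\delta^{-1}$ stated in the theorem.

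The hard part will be the simultaneous control of (a) the iterate drift out of the lazy ball and (b) the single-example variance of SGD, both of which amplify the polynomial dependences. It is precisely the smallness of $\lambda_{\min}(\bm K_{\tt NTK})=\Omega(\phi/(n^2L))$ coming from pairwise separation (as opposed to the larger bulk eigenvalues available under high-dimensional isotropy as in \cref{thm:min_eigen_NTK}) that forces the inverse powers of $n$ and $\phi$ to appear in both $m$ and $t$; the depth factor $L^{9}$ in the step size likewise traces back to the $L^{3}$ and $L^{9/2}$ already visible in \cref{lemma:almost_linear}. Turning the in-expectation bound into the high-probability guarantee claimed in the theorem requires a martingale concentration argument, together with a union bound across the polynomially many iterations and the failure events of \cref{lemma:almost_linear}.
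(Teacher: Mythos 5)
Your proposal reconstructs from scratch the convergence analysis that the paper simply outsources: the paper's entire proof consists of verifying that the logistic loss satisfies $\ell'<0$, $\lim_{z\to\infty}\ell(z)=\lim_{z\to\infty}\ell'(z)=0$, $-\ell'(z)\ge\min\{\tfrac12,\tfrac12\ell(z)\}$, and $|\ell''|\le\tfrac14$, and then invoking \citet[Theorem~4.1]{zou2020gradient} as a black box. Architecturally your four ingredients (lazy-ball invariant, Gram-matrix eigenvalue bound from the separation $\phi$, gradient lower bound, semi-smoothness) are exactly the internals of that cited theorem, so the route is the same mathematics, just unrolled.

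There is, however, one genuine technical error in your sketch. The PL-type inequality $\|\nabla\hat L(\bm W)\|_F^2\ge\Omega\bigl(m\lambda_{\min}/n\bigr)\,\hat L(\bm W)$ and the resulting linear descent recursion $\hat L(\bm W^{(t+1)})\le(1-c\alpha\lambda_{\min})\hat L(\bm W^{(t)})+\mathcal{O}(\alpha^2 m)$ do not hold for the logistic loss. The gradient lower bound one actually obtains in this setting is quadratic in the average derivative, of the form $\|\nabla\hat L\|_F^2\gtrsim\bigl(\tfrac{1}{n}\sum_i(-\ell_i')\bigr)^2$ times the over-parameterization and separation factors; passing to the loss via $-\ell'(z)\ge\min\{\tfrac12,\tfrac12\ell(z)\}$ yields $\|\nabla\hat L\|_F^2\gtrsim\hat L^2$ in the small-loss regime, not $\gtrsim\hat L$. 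Indeed, for large margins $\ell(z)\approx e^{-z}$ and $(\ell'(z))^2\approx\ell(z)^2\ll\ell(z)$, so no PL inequality is available precisely where it would be needed. The correct recursion is therefore of the form $\hat L^{(t+1)}\le\hat L^{(t)}-c\alpha(\hat L^{(t)})^2+\dots$, which unrolls to the sublinear rate $\hat L^{(T)}\lesssim 1/(c\alpha T)$ and hence to the iteration count $t=\mathcal{O}(\mathrm{poly}(n,\phi^{-1},L))\cdot\mathcal{O}(1/\delta)$ appearing in the theorem statement --- your linear-convergence argument would instead predict $\log(1/\delta)$ iterations, which is not what the theorem claims. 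If you replace the linear descent step with the quadratic one (or simply verify the loss conditions and cite \citet{zou2020gradient} as the paper does), the rest of your outline --- the drift control, the union bound over iterations, and the martingale concentration for single-sample SGD --- is sound.
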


\begin{proof}
    Recall our loss function is:
    
    \begin{equation*}
        \ell (z) = \log(1+\exp(-z))\,.
    \end{equation*}
    
    And we can derive that:
    
    \begin{equation*}
        \ell'(z) = -\frac{1}{1+e^{z}}, \quad \ell''(z) = \frac{e^{z}}{(1+e^{z})^2}\,.
    \end{equation*}

    It is easy to verify that:

    \begin{equation*}
        \ell'(z) < 0, \quad \lim_{z \to \infty}\ell(z) = 0, \quad \lim_{z \to \infty}\ell'(z) = 0, \quad -\ell'(z) \geq \min \left \{ \frac{1}{2}, \frac{1}{2}\ell(z) \right \}, \quad \left | \ell''(z) \right |\leq \frac{1}{4}\,.  
    \end{equation*}

    Then according to~\citet[Theorem 4.1]{zou2020gradient}, we conclude the proof.
\end{proof}

\section{Proof of~\cref{thm:min_eigen_NTK}}
\label{sec:proofthm2}
Before proving~\cref{thm:min_eigen_NTK}, we first introduce the following lemmas.

\begin{lemma}[Minimum eigenvalue of sample covariance matrix. Adapted from Lemma 1 in~\citet{NEURIPS2021_f7541864}]
\label{lemma:min_eigen_XXT}
Let $\bm{X} = [\bm{x}_1, \cdots, \bm{x}_n]^{\top} \in \mathbb{R}^{n\times d} $ be a matrix with i.i.d. columns that satisfy~Assumptions~\ref{assumption:distribution_sphere}, and let $\hat{\bm{\Sigma}} = \bm{X}^{\top}\bm{X}/n $, and $\bm{\Sigma} = \mathbb{E}[\bm{x}\bm{x}^{\top}]$. Then, for every $s \geq 0$, with probability at least $1-2e^{-n}$, we have:
\begin{equation*}
    \lambda_{\min}(\hat{\bm{\Sigma}}) \geq \lambda_{\min}(\bm{\Sigma})\bigg(\frac{3}{4}-\frac{c}{4}\sqrt{\frac{d}{n}}\bigg)^2, \quad \text{if}\quad  n \geq d\,,
\end{equation*}
and 
\begin{equation*}
    \lambda_{\min}(\hat{\bm{\Sigma}}) \geq \lambda_{\min}(\bm{\Sigma})\bigg(\sqrt{\frac{d}{n}}-\frac{c+6}{4}\bigg)^2, \quad \text{if}\quad  n < d\,,
\end{equation*}
where we have an absolute constant $c = 2^{3.5}\sqrt{\log(9)}$.
\end{lemma}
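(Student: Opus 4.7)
The statement is a standard non-asymptotic singular-value concentration bound for a random matrix with i.i.d.\ isotropic-after-whitening sub-Gaussian rows, so the plan is to (i) whiten to reduce to the isotropic case, (ii) invoke a Vershynin-style extreme singular value bound obtained via an $\varepsilon$-net argument, and (iii) square and rescale to read off the two cases $n\geq d$ and $n<d$. The probabilistic input is the uniform-on-$\mathbb{S}^{d-1}$ assumption, which is $\tfrac{1}{\sqrt{d}}$-sub-Gaussian with isotropic covariance $\bm{\Sigma}=\mathbb{I}_d/d$; the general statement in terms of $\lambda_{\min}(\bm{\Sigma})$ then follows by a similarity transform.

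\textbf{Step 1 (Whitening).} Assuming $\bm{\Sigma}\succ 0$ (which holds under the sphere assumption with $\lambda_{\min}(\bm{\Sigma})=1/d$), define $\bm{Y}=\bm{X}\bm{\Sigma}^{-1/2}\in\mathbb{R}^{n\times d}$. The rows $\bm{y}_i=\bm{\Sigma}^{-1/2}\bm{x}_i$ are i.i.d., mean-zero, isotropic (since $\mathbb{E}[\bm{y}\bm{y}^{\top}]=\bm{\Sigma}^{-1/2}\bm{\Sigma}\bm{\Sigma}^{-1/2}=\mathbb{I}_d$) and sub-Gaussian with parameter $K=\mathcal{O}(1)$ inherited from the uniform-on-sphere assumption. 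The identity $\bm{X}^{\top}\bm{X}=\bm{\Sigma}^{1/2}\bm{Y}^{\top}\bm{Y}\bm{\Sigma}^{1/2}$ combined with the Courant--Fischer variational characterization yields
\begin{equation*}
\lambda_{\min}(\hat{\bm{\Sigma}})=\tfrac{1}{n}\lambda_{\min}(\bm{\Sigma}^{1/2}\bm{Y}^{\top}\bm{Y}\bm{\Sigma}^{1/2})\geq \lambda_{\min}(\bm{\Sigma})\cdot \tfrac{1}{n}\lambda_{\min}(\bm{Y}^{\top}\bm{Y})\,,
\end{equation*}
so it suffices to lower bound $s_{\min}(\bm{Y})^{2}/n$.

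\textbf{Step 2 (Non-asymptotic extreme singular value bound).} Apply the standard deviation inequality for matrices with independent, isotropic, sub-Gaussian rows (Vershynin, Theorem 4.6.1): for every $t\geq 0$, with probability at least $1-2e^{-t^{2}}$,
\begin{equation*}
\sqrt{n}\,-\,C K^{2}\sqrt{d}\,-\,t\ \leq\ s_{\min}(\bm{Y})\ \leq\ s_{\max}(\bm{Y})\ \leq\ \sqrt{n}\,+\,C K^{2}\sqrt{d}\,+\,t\,,
\end{equation*}
where the constant $C$ is the one produced by the $\varepsilon=1/4$ net of $\mathbb{S}^{d-1}$ (cardinality at most $9^{d}$) together with Hoeffding-type concentration of $\|\bm{Y}\bm{u}\|_{2}^{2}$ around its mean. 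Choosing $t=\sqrt{n}$ so that $t^{2}=n$ gives the probability $1-2e^{-n}$ in the statement; the product $CK^{2}$ is then calibrated to $c/4=2^{1.5}\sqrt{\log 9}$ by tracking the exact constants in the covering argument (the $\log 9$ comes from the $9^{d}$ bound and the factor $2^{3.5}$ from combining the Hoeffding constant and the $\varepsilon=1/4$ slack in the net approximation).

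\textbf{Step 3 (Squaring, and the two regimes).} In the regime $n\geq d$ the lower tail gives $s_{\min}(\bm{Y})\geq 3\sqrt{n}/4-(c/4)\sqrt{d}$ (after absorbing $t=\sqrt{n}/4$ in the constants to match the factor $3/4$), so
\begin{equation*}
\tfrac{1}{n}\lambda_{\min}(\bm{Y}^{\top}\bm{Y})=\tfrac{1}{n}s_{\min}(\bm{Y})^{2}\geq \Bigl(\tfrac{3}{4}-\tfrac{c}{4}\sqrt{\tfrac{d}{n}}\Bigr)^{2}\,,
\end{equation*}
which combined with Step 1 gives the first inequality. In the regime $n<d$ the same two-sided deviation inequality is applied to the tall matrix $\bm{Y}^{\top}\in\mathbb{R}^{d\times n}$ after exploiting the rotational invariance of the sphere distribution (so the columns of $\bm{Y}$ are exchangeable isotropic sub-Gaussian vectors in $\mathbb{R}^{n}$ up to the normalization), producing the lower bound $s_{\min}(\bm{Y})\geq \sqrt{d}-\tfrac{c+6}{4}\sqrt{n}$; dividing by $\sqrt{n}$ and squaring yields the second case.

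\textbf{Main obstacle.} The non-trivial part is bookkeeping: getting the constants $3/4$, $c/4$, $(c+6)/4$ with $c=2^{3.5}\sqrt{\log 9}$ requires choosing the net resolution $\varepsilon=1/4$, the deviation parameter $t=\sqrt{n}$ (for the probability $1-2e^{-n}$), and the sub-Gaussian constant from the sphere concentration consistently. The $n<d$ case is the more delicate one because $\bm{X}^{\top}\bm{X}$ is literally rank-deficient; the statement must therefore be read as bounding $\lambda_{\min}$ on the range of the empirical covariance (equivalently, via $s_{\min}$ of the tall transpose), and the dualization to $\bm{Y}^{\top}$ must preserve the isotropy used in the net argument. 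Once these bookkeeping details are handled, the two displayed inequalities follow directly.
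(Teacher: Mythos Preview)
The paper's proof is essentially a one-line black-box citation: it verifies that under the sphere assumption one has $\max_i\|\bm x_i\|_{\psi_2}\leq 1/2$ and the almost-sure shell condition $\|\bm x_i\|_{\bm\Sigma^{\dagger}}=\sqrt d$, then plugs $s=n$ into the cited Lemma~1. You are instead reproving that cited lemma from scratch; your whitening Step~1 and the $n\geq d$ part of Steps~2--3 via Vershynin's Theorem~4.6.1 are the standard route and are fine.

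The gap is in your $n<d$ case. You propose to apply the sub-Gaussian row bound to the tall transpose $\bm Y^{\top}\in\mathbb R^{d\times n}$, but that theorem requires \emph{independent} rows, and the rows of $\bm Y^{\top}$ (i.e.\ the columns of $\bm Y$) are not independent: for a fixed sample $i$, the coordinates $y_{i,1},\dots,y_{i,d}$ are coupled by the exact sphere constraint $\sum_j y_{i,j}^2=d$, so different columns share dependent entries. Rotational invariance of the sphere distribution gives you exchangeability of those columns, not independence, and the net argument genuinely needs independence to union-bound the Hoeffding deviations. So the ``dualization'' as you wrote it does not go through. This is exactly why the paper's proof singles out the almost-sure condition $\|\bm x_i\|_{\bm\Sigma^{\dagger}}=\sqrt d$ (equivalently $\|\bm y_i\|_2=\sqrt d$ a.s.) as a separate hypothesis of the cited lemma---that shell condition, not a transposed net argument, is what produces the leading $\sqrt{d/n}$ term in the wide regime. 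If you want a self-contained fix for the sphere, use the Gaussian representation $\bm y_i=\sqrt d\,\bm g_i/\|\bm g_i\|_2$ with $\bm g_i\sim\mathcal N(\bm 0,\mathbb I_d)$: control the diagonal factors $\|\bm g_i\|_2/\sqrt d$ uniformly close to $1$ and then invoke the Gaussian singular-value bound $s_{\min}(\bm G)\geq \sqrt d-\sqrt n-t$ for the wide matrix $\bm G$, which after rescaling gives the $(\sqrt{d/n}-(c+6)/4)$ form.
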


\begin{proof}
    According to~\cref{assumption:distribution_sphere}, we have $ \max_i \left \| \bm{x}_i \right \|_{\psi_2}$ is bounded and $\left \| \bm{x}_i \right \|_{\bm{\Sigma}^{\dagger }} = \sqrt{d}$ almost sure for all $i \in [n]$. Without loss of generality, take $\max_i \left \| \bm{x}_i \right \|_{\psi_2} \leq \frac{1}{2}$ into~\citet[Lemma 1]{NEURIPS2021_f7541864}, then choosing $s := n$,which conclude the proof.
\end{proof}

Then we are ready to prove~\cref{thm:min_eigen_NTK}.

\begin{proof}[Proof of~\cref{thm:min_eigen_NTK}]
According to~\cref{lemma:NTK_matrix_recursive_form} and~\citet[Theorem 1]{zhu2022generalization}, we have:
\begin{equation*}
    \lambda_0 \geq 2\mu_{1}^{2}\lambda_{\min}(\bm{XX}^{\top})\,,
\end{equation*}
where $\mu_1$ is the $1$-st Hermite coefficient of the ReLU activation function.

We know that, $\bm{XX}^{\top}$ and $\bm{X}^{\top}\bm{X}$ have the same non-zero eigenvalues. Then according to~\cref{lemma:min_eigen_XXT} and~\cref{assumption:distribution_sphere},  with probability at least $1-2e^{-n}$, we have:

{\bf Case 1:} $n \geq d$
\begin{equation*}
\begin{split}
    \lambda_0 & \geq 2\mu_{1}^{2}\lambda_{\min}(\bm{XX}^{\top})\\
    & = 2\mu_{1}^{2} n\lambda_{\min}(\hat{\bm{\Sigma}}) \\
    & \geq 2\mu_{1}^{2}\frac{n}{d}\bigg(\frac{3}{4}-\frac{c}{4}\sqrt{\frac{d}{n}}\bigg)^2,\quad \text{if}\quad  n \geq d\,.
\end{split}
\end{equation*}
{\bf Case 2:} $n < d$
\begin{equation*}
\begin{split}
    \lambda_0 & \geq 2\mu_{1}^{2}\lambda_{\min}(\bm{XX}^{\top})\\
    & = 2\mu_{1}^{2}n\lambda_{\min}(\hat{\bm{\Sigma}}) \\
    & \geq 2\mu_{1}^{2}\frac{n}{d}\bigg(\sqrt{\frac{d}{n}}-\frac{c+6}{4}\bigg)^2, \quad \text{if}\quad  n < d\,.
\end{split}
\end{equation*}
where we have an absolute constant $c = 2^{3.5}\sqrt{\log(9)} \approx 16.77$.
\end{proof}

\section{Supplementary proofs for~\cref{thm:kernel}}
\label{sec:proofsec4}
In this section, we present the proofs of \cref{thm:kernel} in \cref{sec:kernel}.

\subsection{A Precise Form of the~\cref{thm:kernel}}
\label{ssec:precise_form_main_thm}

\begin{theorem}[Precise form of~\cref{thm:kernel}]
\label{thm:precise_form}

Let $\alpha$, $\beta$ and $\gamma$ be three non-negative parameters depends on the Laplace kernel, under~\cref{assumption:distribution_high_d},~\ref{assumption:distribution_noise} and~\ref{assumption:distribution_sphere}, let $0 < \delta < \frac{1}{2}$, $0 < \theta \leq 1/2$, the ground-truth function $f_{\rho}$ lies in a RKHS by~\cref{assumption:distribution_5} and $d$ large enough, suppose that, $\omega \leq poly(1/n, \lambda_0, 1/L, 1/\log(m), \epsilon, 1/\log(1/\delta'), \kappa)$, $m\geq poly(1/\omega)$ and $\kappa = \mathcal{O}(\frac{\epsilon }{\log(n/\delta')})$. then for any given $\varepsilon > 0$, it holds with probability at least $1-2\delta-\delta'-d^{-2}-2e^{-n}$.

\begin{equation*}
\begin{split}
    \mathbb{E} \left \| f_{\tt nn} -f_{\rho} \right \|_{L_{\rho_X}^2}^{2} \leq & n^{-\theta}\log^4 (\frac{2}{\delta})  + \frac{\sigma_{\epsilon}^2\beta}{d}\mathcal{N}_{\widetilde{\bm{X}}}^{\gamma} + \frac{\sigma_{\epsilon}^2 \log ^{2+4\varepsilon}d}{\gamma^2d^{4\theta-1}}  + \mathcal{O} \bigg(\big(\epsilon + \frac{\sqrt{n}}{\lambda_0}\omega^{1/3}L^{5/2}\sqrt{m \log m}+\frac{n^{3/2}}{\lambda_0^3\kappa}\big)^2\bigg)\,,
\end{split}
\end{equation*}

where:

\begin{equation*}
\lambda_0 \geq \begin{cases}
  & 2\mu_{1}^{2}\frac{n}{d}\bigg(\frac{3}{4}-\frac{c}{4}\sqrt{\frac{d}{n}}\bigg)^2, \quad \text{if} \quad n \geq d, \\
  & 2\mu_{1}^{2}\frac{n}{d}\bigg(\sqrt{\frac{d}{n}}-\frac{c+6}{4}\bigg)^2, \quad \text{if} \quad n < d\,.
\end{cases}
\end{equation*}

where we have an absolute constant $c = 2^{3.5}\sqrt{\log(9)}$, $\widetilde{\bm{X}} := \beta \bm{X}\bm{X}^{\top}/d + \alpha \bm{1}\bm{1}^{\top}, \mathcal{N}_{\widetilde{\bm{X}}}^{\gamma} = \sum_{i=0}^{n-1} \frac{\lambda_i(\widetilde{\bm{X}})}{(\lambda_i(\widetilde{\bm{X}} +\gamma ))^2}, \left \| f \right \|_{L_{\rho_X}^2}^{2}=\int_{X}\left | f(\bm{x}) \right |^2d\rho_{X}(\bm{x})$, and $\mu_1$ is the $1$-st Hermite coefficient of the ReLU activation function.

\end{theorem}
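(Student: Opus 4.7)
The plan is to split the excess risk via triangle inequality into an NN-to-kernel gap and a kernel-to-truth gap:
\begin{equation*}
\mathbb{E}\|f_{\tt nn}-f_\rho\|_{L^2_{\rho_X}}^2 \;\lesssim\; \mathbb{E}\|f_{\tt nn}-f_{\tt ker}\|_{L^2_{\rho_X}}^2 + \mathbb{E}\|f_{\tt ker}-f_\rho\|_{L^2_{\rho_X}}^2,
\end{equation*}
where $f_{\tt ker}$ is the ridgeless (or ridge-$\gamma$) NTK regression estimator with the limiting kernel $\bm{K}_{\tt NTK}$ from Lemma~\ref{lemma:NTK_matrix_recursive_form}. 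The first gap is an optimization/linearization error in the lazy regime, while the second is the classical statistical error of kernel regression in high dimension.

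For the NN-to-kernel gap, I would use the same linearization tools that underpin Lemma~\ref{lemma:almost_linear}: with weights staying in $\mathcal{B}(\bm{W}^{(0)},\omega)$ for $\omega\le\mathrm{poly}(1/n,\lambda_0,1/L,1/\log m,\epsilon,\kappa)$, the network is close to its first-order Taylor expansion around $\bm{W}^{(0)}$, i.e.\ close to kernel gradient flow on $K_{\tt NTK}$. Standard NTK coupling (as in Arora et al.\ and Allen-Zhu et al., cited in the paper) translates the bound on Lemma~\ref{lemma:almost_linear} into an $L^2_{\rho_X}$ bound of order $(\sqrt{n}/\lambda_0)\omega^{1/3}L^{5/2}\sqrt{m\log m}+n^{3/2}/(\lambda_0^3\kappa)$ on $\|f_{\tt nn}-f_{\tt ker}\|$, and squaring gives the $\epsilon^2+n\omega^{2/3}L^5m\log m/\lambda_0^2+n^3/(\lambda_0^6\kappa^2)$ piece of the theorem; the $\lambda_0$ here is invertible thanks to Theorem~\ref{thm:min_eigen_NTK}, and the failure probability $2e^{-n}$ is inherited from that theorem.

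For the kernel-to-truth gap, I would perform the usual bias/variance split: $f_{\tt ker}-f_\rho = \underbrace{(\bm{K}_{\tt NTK}(\bm{K}_{\tt NTK}+\gamma I)^{-1}-I)f_\rho|_{\bm X}\text{'s extension}}_{\text{bias}} + \underbrace{\bm{K}_{\tt NTK}(\bm{x},\bm X)^\top(\bm{K}_{\tt NTK}+\gamma I)^{-1}\bm\epsilon}_{\text{variance}}.$ The bias is handled by Assumption~\ref{assumption:distribution_5} (source condition $f_\rho\in\mathcal{H}_{\tt NTK}$) and standard integral-operator arguments \citep{cucker2007learning,rudi2017generalization}, yielding the $n^{-\theta}\log^4(2/\delta)$ term. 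The variance term is the main obstacle: in the high-dimensional regime of Assumption~\ref{assumption:distribution_high_d} I would invoke the linearization of dot-product kernels on the sphere from \citet{liang2020just,liu2021kernel}, writing $\bm{K}_{\tt NTK}=\widetilde{\bm{X}}+\bm\Delta$ with $\widetilde{\bm X}=\beta\bm{XX}^\top/d+\alpha\bm{11}^\top$ plus a concentrated remainder $\bm\Delta$ of operator norm $O(\log^{1+2\varepsilon}d/d^{2\theta-1/2})$ (this is where the $d^{-2}$ failure probability appears). Then $\mathbb{E}[\bm\epsilon^\top (\bm K+\gamma I)^{-1}\bm K^2(\bm K+\gamma I)^{-1}\bm\epsilon \mid \bm X]\le \sigma_\epsilon^2 \,\mathrm{tr}\bigl(\widetilde{\bm X}(\widetilde{\bm X}+\gamma I)^{-2}\bigr)/d+\text{remainder}$ gives the $\sigma_\epsilon^2\mathcal{N}_{\widetilde{\bm X}}^\gamma/d$ and $\sigma_\epsilon^2\log^{2+4\varepsilon}d/d^{4\theta-1}$ terms.

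The hard step is the high-dimensional linearization and the subsequent control of the quadratic form in $\bm\epsilon$ against the resolvent of a random kernel matrix: it requires carefully justifying the approximation $\bm{K}_{\tt NTK}\approx\widetilde{\bm X}$ uniformly over the relevant spectral regimes and then bounding $\mathrm{tr}\bigl((\widetilde{\bm X}+\gamma I)^{-2}\widetilde{\bm X}\bigr)$ by the effective dimension $\mathcal{N}_{\widetilde{\bm X}}^\gamma$. Once these are in place, a union bound over the four sources of randomness (sphere sampling, noise, initialization, lazy-training events) combined with Theorem~\ref{thm:min_eigen_NTK} assembles everything at probability $\ge 1-2\delta-\delta'-d^{-2}-2e^{-n}$, matching Theorem~\ref{thm:precise_form}; Theorem~\ref{thm:kernel} then follows by absorbing constants and taking square roots only where needed.
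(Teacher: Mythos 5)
Your proposal is correct and follows essentially the same route as the paper: the identical triangle-inequality split into an NN-to-NTK coupling term (handled by lazy-training/linearization arguments yielding exactly the $\epsilon + \frac{\sqrt{n}}{\lambda_0}\omega^{1/3}L^{5/2}\sqrt{m\log m} + \frac{n^{3/2}}{\lambda_0^3\kappa}$ bound, with $\lambda_0$ controlled by Theorem~\ref{thm:min_eigen_NTK}) and a kernel-regression statistical term, assembled by a union bound. The only cosmetic difference is that for the statistical term you sketch re-deriving the high-dimensional bias/variance analysis for the NTK as a dot-product kernel, whereas the paper first passes through the RKHS equivalence $\mathcal{H}_{\tt NTK}=\mathcal{H}_{\tt Lap}$ (Chen and Xu) so that the Laplace-kernel result of \citet{liu2021kernel} can be cited as a black box.
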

{\bf Remark:} The three non-negative parameters $\alpha$, $\beta$, and $\gamma$ depend on the linearization of the Laplace kernel in the high dimension setting, refer to~\citep{liu2021kernel} for details.

\subsection{Propositions}
\label{ssec:Propositions}

We present several propositions that are needed for our \cref{thm:kernel} as below.

\begin{Proposition}[Convergence to the NTK at initialization. Adapted from Theorem 3.1 in~\citet{arora2019exact}]
\label{prop:Convergence_to_the_NTK_at_initialization}
Fix $\epsilon > 0$ and $\delta \in (0, 1)$. Suppose that $m \geq \Omega(\frac{L^6}{\epsilon^4}\log(L/\delta))$. Then for any inputs $\left \| \bm{x}_1 \right \| \leq 1$, $\left \| \bm{x}_2 \right \| \leq 1$, with probability at least $1-\delta$ we have:
\begin{equation*}
    \left | \left \langle \frac{\partial f(\bm{x}_1; \bm{W})}{\partial \bm{W}} , \frac{\partial f(\bm{x}_2;\bm{W})}{\partial \bm{W}} \right \rangle - K_{\tt NTK}(\bm{x}_1, \bm{x}_2) \right | = \mathcal{O} (\epsilon L) \,.
\end{equation*}
\end{Proposition}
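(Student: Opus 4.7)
The plan is to follow the layer-by-layer concentration argument of~\citet{arora2019exact}, adapting it to our notation and network scaling~\eqref{eq:deep_network}--\eqref{eq:initialization}. First, I would expand the empirical inner product $\big\langle \partial f(\bm{x}_1;\bm W)/\partial\bm W,\partial f(\bm{x}_2;\bm W)/\partial\bm W\big\rangle$ by the chain rule into a sum over layers:
\begin{equation*}
\sum_{l=1}^{L} \big\langle \bm{b}_{1,l}\bm{h}_{1,l-1}^{\!\top},\,\bm{b}_{2,l}\bm{h}_{2,l-1}^{\!\top}\big\rangle_{\mathrm{F}}
= \sum_{l=1}^{L} \langle \bm{b}_{1,l},\bm{b}_{2,l}\rangle\,\langle\bm{h}_{1,l-1},\bm{h}_{2,l-1}\rangle\,,
\end{equation*}
where $\bm{h}_{i,l}$ are the forward features and $\bm{b}_{i,l}$ are the backward ``pre-gradients'' defined by $\bm{b}_{i,L}=1$ and $\bm{b}_{i,l}=\bm{D}_{i,l}\bm{W}_{l+1}^{\!\top}\bm{b}_{i,l+1}$. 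The matching infinite-width object $K_{\tt NTK}$ from Lemma~\ref{lemma:NTK_matrix_recursive_form} has the same structure with $\langle\bm{h}_{1,l-1},\bm{h}_{2,l-1}\rangle$ replaced by $\bm{G}^{(l-1)}_{12}$ and $\langle\bm{b}_{1,l},\bm{b}_{2,l}\rangle$ replaced by a product $\prod_{s>l}\dot{\bm{G}}^{(s)}_{12}$ coming from the Gaussian conditional expectation. So it suffices to control each of these two families of quantities.

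Next I would run a forward induction on $l$ to show that, for width $m=\Omega(L^6\epsilon^{-4}\log(L/\delta))$, the empirical Gram entries concentrate on their population counterparts, i.e.\
\begin{equation*}
\big|\langle\bm{h}_{1,l-1},\bm{h}_{2,l-1}\rangle - \bm{G}^{(l-1)}_{12}\big| \le \mathcal{O}(\epsilon)\,,\qquad \big|\|\bm{h}_{i,l-1}\|^2 - \bm{G}^{(l-1)}_{ii}\big|\le \mathcal{O}(\epsilon)
\end{equation*}
for both $i=1,2$ and all $l\le L$, with failure probability $\delta/(2L)$ per layer. The inductive step combines Gaussian/sub-Gaussian concentration of $\tfrac{2}{m}\sum_{k=1}^{m}\phi(\bm w_k^{\!\top}\bm h_{i,l-2})\phi(\bm w_k^{\!\top}\bm h_{j,l-2})$ around its expectation with a Lipschitz stability argument that absorbs the $\mathcal{O}(\epsilon)$ perturbation of the inputs to layer $l$ into an $\mathcal{O}(\epsilon)$ perturbation of the outputs; the $L^6/\epsilon^4$ scaling of $m$ is exactly what one needs to keep the per-layer error at $\mathcal{O}(\epsilon/L)$ after propagation, giving $\mathcal{O}(\epsilon)$ at the end. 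Then I would carry out the analogous backward induction on $\bm{b}_{i,l}$, using independence of $\bm{W}_L$ from the forward pass to seed the recursion and the same width condition to control the diagonal-sign matrices $\bm{D}_{i,l}$; here the key ingredient is that, at Gaussian initialization, $\bm{D}_{i,l}$ behaves like an independent Bernoulli mask conditional on the pre-activation angles, so $\langle\bm{b}_{1,l},\bm{b}_{2,l}\rangle$ concentrates on $\prod_{s>l}\dot{\bm{G}}^{(s)}_{12}$ up to $\mathcal{O}(\epsilon)$.

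Finally, combining the two inductions, each of the $L$ terms in the layerwise expansion agrees with its NTK counterpart up to $\mathcal{O}(\epsilon)$, so a triangle inequality and a union bound over the $2L$ concentration events yields the overall $\mathcal{O}(\epsilon L)$ bound with probability at least $1-\delta$. The hard part will be the backward pass: the sign matrices $\bm{D}_{i,l}$ are dependent on the forward activations, so one cannot treat them as fully independent of the weights used to build $\bm{b}_{i,l}$. The standard fix, which I would invoke, is to use an auxiliary decoupling/Gaussian-conditioning argument (as in~\citet{allen2019convergence} and~\citet{arora2019exact}) showing that with high probability only $\mathcal{O}(\sqrt{m\,\mathrm{poly}(L)\epsilon})$ neurons have pre-activations too close to zero to resist an $\mathcal{O}(\epsilon)$ perturbation; this allows the activation patterns to be treated as frozen during the concentration step, at the price of the stated polylogarithmic overhead in the width requirement.
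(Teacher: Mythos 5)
The paper does not actually prove this proposition: it is imported wholesale (as ``adapted from'') Theorem 3.1 of \citet{arora2019exact}, and no argument for it appears anywhere in the appendix. Your plan is therefore being compared against the proof in the cited source rather than against anything in this paper, and on that basis it is a faithful reconstruction: the layerwise factorization $\langle \bm{b}_{1,l}\bm{h}_{1,l-1}^{\top},\bm{b}_{2,l}\bm{h}_{2,l-1}^{\top}\rangle_{\mathrm{F}}=\langle\bm{b}_{1,l},\bm{b}_{2,l}\rangle\langle\bm{h}_{1,l-1},\bm{h}_{2,l-1}\rangle$, the forward induction with per-layer error $\mathcal{O}(\epsilon/L)$ driving the $L^6/\epsilon^4$ width requirement, the backward induction, and the final union bound over $\mathcal{O}(L)$ events giving $\mathcal{O}(\epsilon L)$ are exactly the skeleton of that proof. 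Two remarks. First, for the backward pass you invoke the ``few neurons have pre-activations near zero'' stability device in the style of \citet{allen2019convergence}; the actual mechanism in \citet{arora2019exact} is a Gaussian-conditioning argument showing that, conditioned on the forward features, the rows of $\bm{W}_{l+1}$ used to form $\bm{b}_{i,l}$ retain enough fresh randomness in the orthogonal complement of the activations. Both routes are standard and either closes the dependence issue you correctly flag as the hard part, so this is a difference of technique, not a gap. Second, the network in \cref{eq:deep_network}--\cref{eq:initialization} has an output layer with variance-$1$ entries rather than the explicit $\sqrt{2/m}$ post-activation scaling of \citet{arora2019exact}; the paper asserts the two parameterizations are equivalent but never checks it, so your opening step of ``adapting to our scaling'' is the one place where a careful write-up would need to do work that neither the paper nor the citation supplies.
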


\begin{Proposition}[Equivalence between trained neural network and kernel regression]
\label{prop:equivalence_ntk_network}
    Suppose that, $\omega \leq poly(1/n, \lambda_0, 1/L, 1/\log(m), \epsilon, 1/\log(1/\delta), \kappa)$, $m\geq poly(1/\omega)$, $\bm{x}_{te}$ satisfy~\cref{assumption:distribution_sphere} and $\kappa = \mathcal{O}(\frac{\epsilon }{\log(n/\delta)})$. Then w.p. at least $1-\delta$ over random initialization, we have:

    \begin{equation*}
        \left | f_{\tt nn}(\bm{x}_{te}) - f_{\tt NTK}(\bm{x}_{te}) \right | \leq \mathcal{O} \big(\epsilon + \frac{\sqrt{n}}{\lambda_0}\omega^{1/3}L^{5/2}\sqrt{m \log m}+\frac{n^{3/2}}{\lambda_0^3\kappa}\big)\,.
    \end{equation*}
\end{Proposition}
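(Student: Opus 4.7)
The plan is to bound $|f_{\tt nn}(\bm x_{te}) - f_{\tt NTK}(\bm x_{te})|$ by a triangle inequality through two intermediate quantities: the \emph{linearized} network $f_{\tt lin}$ obtained from a first-order Taylor expansion of $f(\cdot;\bm{W})$ around $\bm{W}^{(0)}$ trained by gradient descent, and the \emph{empirical} kernel regressor $\hat f_{\tt NTK}$ using the gradient-feature Gram matrix $\hat{\bm K} := [\langle \nabla f(\bm x_i;\bm W^{(0)}),\nabla f(\bm x_j;\bm W^{(0)})\rangle]_{ij}$ with ridge parameter $\kappa$. I would write
\begin{equation*}
|f_{\tt nn}(\bm x_{te}) - f_{\tt NTK}(\bm x_{te})| \le \underbrace{|f_{\tt nn}(\bm x_{te})-f_{\tt lin}(\bm x_{te})|}_{(A)} + \underbrace{|f_{\tt lin}(\bm x_{te})-\hat f_{\tt NTK}(\bm x_{te})|}_{(B)} + \underbrace{|\hat f_{\tt NTK}(\bm x_{te})-f_{\tt NTK}(\bm x_{te})|}_{(C)}\,,
\end{equation*}
and show that these three pieces contribute the three terms in the stated bound.

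For $(A)$, the strategy is to apply \cref{lemma:almost_linear} iterate-by-iterate. Since SGD (or GD) keeps the parameters in a $\mathcal{B}(\bm W^{(0)},\omega)$-neighborhood under the step-size regime chosen, the non-linearity error at each step is $\mathcal{O}(\sqrt{\omega^2 L^3 m\log m})\sum_l\|\bm W_l^{(t)}-\bm W_l^{(0)}\|_2$. Summing over the $\mathcal{O}(n/\lambda_0)$ steps needed to drive the training residual to scale $\kappa$, and bounding $\sum_l\|\bm W_l^{(t)}-\bm W_l^{(0)}\|_2 \lesssim L\sqrt{n}/(\sqrt{m}\lambda_0)$ via the standard over-parameterized convergence analysis, yields a bound of order $(\sqrt{n}/\lambda_0)\,\omega^{1/3}L^{5/2}\sqrt{m\log m}$, which is exactly the second term in the statement.

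For $(B)$, I would use the fact that the linearized network $f_{\tt lin}$ is, by definition, a linear regression in the features $\nabla f(\cdot;\bm W^{(0)})$; under gradient descent with early stopping playing the role of regularization $\kappa$, the induced predictor differs from the ridge solution $\hat{\bm K}(\hat{\bm K}+\kappa I)^{-1}\bm y$ by a term that is controlled by $\lambda_0$ and the inversion conditioning, giving a contribution of order $n^{3/2}/(\lambda_0^3\kappa)$. For $(C)$, I would invoke \cref{prop:Convergence_to_the_NTK_at_initialization} entrywise: for $m = \mathrm{poly}(1/\omega)$, each entry of $\hat{\bm K}$ is within $\mathcal{O}(\epsilon L)$ of the limiting NTK Gram matrix $\bm K_{\tt NTK}$, hence $\|\hat{\bm K}-\bm K_{\tt NTK}\|_{\mathrm{op}}\le n\cdot\mathcal{O}(\epsilon L)$. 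A standard resolvent perturbation identity $(\hat{\bm K}+\kappa I)^{-1}-(\bm K_{\tt NTK}+\kappa I)^{-1} = (\hat{\bm K}+\kappa I)^{-1}(\bm K_{\tt NTK}-\hat{\bm K})(\bm K_{\tt NTK}+\kappa I)^{-1}$ together with \cref{thm:min_eigen_NTK} (to lower-bound $\lambda_{\min}(\bm K_{\tt NTK})$ by $\lambda_0$) converts this into an $\mathcal{O}(\epsilon)$ pointwise gap at $\bm x_{te}$, after choosing $\kappa$ as prescribed.

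The hardest step will be $(A)$: one must carefully show that the GD trajectory on the true network remains so close to that of its linearization that the accumulated non-linearity error over $\mathcal{O}(n/\lambda_0)$ steps still scales only with $\omega^{1/3}L^{5/2}\sqrt{m\log m}$ and not an exponentially worse quantity. This requires a coupling argument: one bounds the per-step deviation, uses the contraction induced by $\lambda_{\min}(\hat{\bm K})\gtrsim \lambda_0$ to prevent the deviation from amplifying, and verifies that both trajectories remain inside the lazy-training ball so that \cref{lemma:almost_linear} continues to apply throughout training. All other steps are essentially perturbation theory on linear systems, and the scaling $\kappa = \mathcal{O}(\epsilon/\log(n/\delta))$ together with $m \ge \mathrm{poly}(1/\omega)$ is chosen precisely to make the failure probabilities and the three error terms aggregate into the stated bound.
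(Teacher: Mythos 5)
There is a genuine gap. Your three-way decomposition is reasonable in spirit (it mirrors what \citet{arora2019exact} do internally), but the terms in the stated bound do not arise where you place them, and your accounting does not reproduce the stated scalings. For $(A)$: combining \cref{lemma:almost_linear}'s error factor $\mathcal{O}(\sqrt{\omega^2L^3m\log m})$ with your estimates $\sum_l\|\bm W_l^{(t)}-\bm W_l^{(0)}\|_2\lesssim L\sqrt{n}/(\sqrt{m}\lambda_0)$ and $\mathcal{O}(n/\lambda_0)$ steps gives something of order $\frac{n^{3/2}}{\lambda_0^2}\,\omega L^{5/2}\sqrt{\log m}$ --- the exponent on $\omega$ is $1$, not $1/3$, and the $\sqrt{m}$ cancels. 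The $\omega^{1/3}\sqrt{m\log m}$ in the target bound comes from a different source entirely: the perturbation of the back-propagated vectors $\bm b_l$ (Lemma 8.7 of \citet{allen2019convergence}, restated as \cref{lemma:AZ_8_7}), which controls how much the \emph{kernel} $K^{(t)}_{\tt NTK}(\bm x_{te},\bm x_i)$ drifts during training, not the Taylor-remainder of the function value. For $(B)$: you read $\kappa$ as a ridge parameter, but in this construction (see \cref{lemma:lazy_training}, with convergence rate $\exp(-\tfrac12\kappa^2\lambda_0 t)$) $\kappa$ is the output-scaling at initialization; the $n^{3/2}/(\lambda_0^3\kappa)$ term is produced by integrating the time-varying kernel perturbation $\epsilon_{\bm H}$ over a training horizon of length $\Theta(\kappa^{-2}\lambda_0^{-1}\log(\cdot))$ and then maximizing $\epsilon_{\bm H}\mapsto\epsilon_{\bm H}\log\big(n/(\epsilon_{\bm H}\lambda_0\kappa)\big)$ at the value $n/(e\lambda_0\kappa)$. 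Your early-stopping-vs-ridge comparison does not generate this $1/\kappa$ dependence. For $(C)$, the resolvent bound you sketch yields roughly $n^{3/2}\epsilon L/\lambda_0^2$, which only becomes $\mathcal{O}(\epsilon)$ after a rescaling of $\epsilon$ absorbed into $m\ge\mathrm{poly}(1/\omega)$; this is fixable but should be stated.

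For comparison, the paper's proof is much more modular: it invokes the kernel-perturbation-to-output-perturbation lemma of \citet{arora2019exact} (\cref{lemma:Kernel_Perturbation_to_Output_Perturbation}) as a black box, so the only new work is to bound the kernel drift during training for the deep ReLU architecture, namely $\epsilon_{te},\epsilon_{\bm H}=\mathcal{O}(\omega^{1/3}L^{5/2}\sqrt{m\log m})$ via \cref{lemma:AZ_8_2,lemma:AZ_8_7,lemma:lazy_to_Gradient_Perturbation,lemma:Kernel_Perturbation_During_Training}, followed by the maximization of $x\log(n/(x\lambda_0\kappa))$. If you want to salvage your route, the essential missing ingredients are (i) the backward-vector perturbation bound that produces $\omega^{1/3}L^{5/2}\sqrt{m\log m}$, and (ii) the time-integrated coupling between the true and linearized dynamics that produces the $1/\kappa$ factor.
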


\begin{Proposition}[Adapted from Theorem 2 in~\citet{liu2021kernel}]
\label{prop:diff_kernel_ground_truth}

Let $\alpha$, $\beta$ and $\gamma$ be three non-negative parameters depends on the laplace kernel, under~\cref{assumption:distribution_high_d},~\ref{assumption:distribution_noise} and~\ref{assumption:distribution_sphere} let $0 < \delta < \frac{1}{2}$, $\theta = \frac{1}{2}-\frac{2}{8+m}$, ground-truth function $f_{\rho}$ lies in a RKHS and $d$ large enough, then for any given $\varepsilon > 0$, it holds with probability at least $1-2\delta -d^{-2}$.

\begin{equation*}
    \mathbb{E} \left \| f_{\tt NTK} -f_{\rho} \right \|_{L_{\rho_X}^2}^{2} \leq n^{-2\theta r}\log^4 (\frac{2}{\delta})  + \frac{\sigma_{\epsilon}^2\beta}{d}\mathcal{N}_{\widetilde{\bm{X}}}^{\gamma} + \frac{\sigma_{\epsilon}^2 \log ^{2+4\varepsilon}d}{\gamma^2d^{4\theta-1}}\,,
\end{equation*}

where $\widetilde{\bm{X}} := \beta \bm{X}\bm{X}^{\top}/d + \alpha \bm{1}\bm{1}^{\top}, \mathcal{N}_{\widetilde{\bm{X}}}^{\gamma} = \sum_{i=0}^{n-1} \frac{\lambda_i(\widetilde{\bm{X}})}{(\lambda_i(\widetilde{\bm{X}} +\gamma ))^2}, \left \| f \right \|_{L_{\rho_X}^2}^{2}=\int_{X}\left | f(\bm{x}) \right |^2d\rho_{X}(\bm{x})$.
\end{Proposition}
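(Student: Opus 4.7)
The plan is to adapt the argument of \citet[Theorem 2]{liu2021kernel} for kernel interpolation in the high-dimensional regime from a generic dot-product kernel to the limiting NTK of a deep ReLU network, so the overall bias--variance skeleton is preserved while the kernel-specific constants $\alpha,\beta,\gamma$ pick up the NTK dependence. Because $f_{\tt NTK}(\bm{x}) = \bm{K}_{\tt NTK}(\bm{x},\bm{X})^{\!\top}\bm{K}_{\tt NTK}^{-1}\bm{y}$ is an unregularized interpolator, the classical source--capacity template cannot be invoked directly; the proof instead exploits an implicit ridge that emerges from the high-dimensional concentration of $\bm{K}_{\tt NTK}$.

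First I would write the standard bias--variance split
\begin{equation*}
\mathbb{E}_\epsilon\|f_{\tt NTK}-f_\rho\|_{L^2_{\rho_X}}^2
= \|\mathbb{E}_\epsilon f_{\tt NTK}-f_\rho\|_{L^2_{\rho_X}}^2
+ \int \mathrm{Var}_\epsilon(f_{\tt NTK}(\bm{x}))\,d\rho_X(\bm{x}),
\end{equation*}
using the noise model $y = f_\rho(\bm{x})+\epsilon$ together with \cref{assumption:distribution_noise}. Next I would linearize $\bm{K}_{\tt NTK}$ via an El~Karoui-style Taylor expansion, which is justified by \cref{assumption:distribution_high_d} ($d\asymp n$) and \cref{assumption:distribution_sphere} (uniform on the sphere, so off-diagonal inner products $\bm{x}_i^{\!\top}\bm{x}_j$ concentrate near zero). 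The expansion produces $\bm{K}_{\tt NTK} = \widetilde{\bm{X}} + \gamma\bm{I} + \bm{E}$ with $\widetilde{\bm{X}} = \beta\bm{X}\bm{X}^{\!\top}/d + \alpha\bm{1}\bm{1}^{\!\top}$ and $\|\bm{E}\|_{\mathrm{op}} = o(1)$ with probability at least $1-d^{-2}$, where $\alpha,\beta,\gamma$ are read off the on- and off-diagonal values of the NTK via \cref{lemma:NTK_matrix_recursive_form}. A Woodbury/resolvent expansion of $\bm{K}_{\tt NTK}^{-1}$ against $(\widetilde{\bm{X}}+\gamma\bm{I})^{-1}$, combined with the isotropy $\mathbb{E}[\bm{x}\bm{x}^{\!\top}] = \bm{I}_d/d$, converts the variance integral into the effective-dimension expression $\sigma_\epsilon^2\beta/d\cdot\mathcal{N}_{\widetilde{\bm{X}}}^\gamma$, while the residual introduced by $\bm{E}$ furnishes the $\sigma_\epsilon^2\log^{2+4\varepsilon}d/(\gamma^2 d^{4\theta-1})$ correction via a matrix Bernstein bound on subexponential chaos in $\bm{X}$.

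For the bias I would use the source-condition interpretation of \cref{assumption:distribution_5}: writing $f_\rho = T^r g$ with $g\in L^2_{\rho_X}$, where $T$ is the NTK integral operator and $r$ is absorbed into the rate parameter $\theta$, the interpolation bias becomes a spectral filter acting on $T^r g$ that contracts at rate $n^{-2\theta r}$ by standard Caponnetto--De~Vito-style arguments for least-squares. Pairing this with a Bernstein tail bound for $\|T_n-T\|_{\mathrm{HS}}$ yields the $\log^4(2/\delta)$ factor on an event of probability $1-2\delta$. A union bound over the linearization event, the operator concentration event, and the noise concentration event delivers the stated overall probability $1-2\delta-d^{-2}$.

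The main obstacle is the kernel-linearization step: \citet{liu2021kernel} carried this out for smooth dot-product kernels such as the Laplace kernel, whereas the $L$-layer NTK is defined recursively through the arccos kernels $\kappa_0,\kappa_1$, whose derivatives are singular at $\bm{x}^{\!\top}\tilde{\bm{x}}=\pm 1$. One therefore needs to (i) verify that the Taylor expansion is valid around the concentration point $\bm{x}^{\!\top}\tilde{\bm{x}}\approx 0$ rather than at the singularities, (ii) obtain quantitative expressions of $(\alpha,\beta,\gamma)$ from the recursion in \cref{lemma:NTK_matrix_recursive_form}, and (iii) show that the operator-norm remainder $\|\bm{E}\|$ does not compound with depth $L$, which requires tracking how the arccos recursion amplifies higher-order Taylor coefficients through each composition.
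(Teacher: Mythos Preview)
Your plan is a faithful reconstruction of the \citet{liu2021kernel} template applied directly to the $L$-layer NTK, and with enough care the obstacles you list (arccos singularities away from the expansion point, depth-dependent remainder control) can be handled; so the strategy is viable. However, it is \emph{not} the route the paper takes, and the difference is structural rather than cosmetic.

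The paper's proof is a two-line reduction. It first invokes \citet{chen2021deep} (stated as \cref{lemma:equivalence_between_ntk_laplace}) to get $\mathcal{H}_{\tt NTK}=\mathcal{H}_{\tt Lap}$ on $\mathbb{S}^{d-1}$, and from this concludes that the minimum-norm interpolants in the two RKHSs coincide, i.e.\ $f_{\tt NTK}=f_{\tt Laplace}$. It then observes via \cref{eq:laplace_is_dot} that the Laplace kernel restricted to the sphere is a dot-product kernel, so \citet[Theorem~2]{liu2021kernel} applies \emph{verbatim} to $f_{\tt Laplace}$, and the bound is inherited by $f_{\tt NTK}$ with the constants $\alpha,\beta,\gamma$ coming from the Laplace kernel rather than from the arccos recursion.

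What each approach buys: the paper's route completely sidesteps your three obstacles---there is no need to Taylor-expand the NTK, no need to extract $(\alpha,\beta,\gamma)$ from \cref{lemma:NTK_matrix_recursive_form}, and no depth-compounding analysis---at the cost of relying on the RKHS-equivalence lemma and the (somewhat delicate) inference that equal RKHSs yield identical interpolants. Your route is self-contained and would produce NTK-specific constants with explicit $L$-dependence, which is genuinely more informative, but it reproves the machinery of \citet{liu2021kernel} from scratch for a harder kernel. If you pursue your plan, note that the Laplace kernel you cite as ``smooth'' is itself singular at $u=1$, so \citet{liu2021kernel} already accommodates exactly the type of endpoint singularity the arccos kernels exhibit; the real work is item~(iii), bounding how the recursion in \cref{lemma:NTK_matrix_recursive_form} propagates the Taylor remainder through $L$ layers.
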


\subsection{Proof of~\cref{prop:equivalence_ntk_network}}
\label{sec:proof_of_equivalence_ntk_network}

Before proving~\cref{prop:equivalence_ntk_network}, we need the following lemmas:

\begin{lemma}[Gradient Perturbation $\to$ Kernel Perturbation]
\label{lemma:Gradient_Perturbation_to_Kernel_Perturbation}
For any two data point $\bm x_1$, $\bm x_2$ that satisfy~\cref{assumption:distribution_sphere} and the neural network defined in \cref{eq:deep_network}, 
if $\left \|\frac{\partial f( \bm{x}_1; \bm{W}^{(t)})}{\partial \bm{W}} - \frac{\partial f( \bm{x}_1; \bm{W}^{(0)})}{\partial \bm{W}}\right \| \leq \epsilon$ and $\left \|\frac{\partial f( \bm{x}_2; \bm{W}^{(t)})}{\partial \bm{W}} - \frac{\partial f( \bm{x}_2; \bm{W}^{(0)})}{\partial \bm{W}}\right \| \leq \epsilon$, we have
\begin{equation*}
    \left | K_{\tt NTK}^{(t)}(\bm{x}_1, \bm{x}_2) - K_{\tt NTK}^{(0)}(\bm{x}_1, \bm{x}_2)    \right |  = \mathcal{O}(\epsilon)\,,
\end{equation*}

where the $\bm{K}_{\tt NTK}^{(t)}$ means the NTK kernel defined in~\cref{eq:NTK} for neural networks~\cref{eq:deep_network} at training time $t$.

\end{lemma}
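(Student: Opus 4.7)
The plan is to reduce the kernel difference to a difference of inner products and then bound it via Cauchy--Schwarz and the assumed gradient perturbation bound. Write $\bm g_j^{(s)} := \partial f(\bm{x}_j;\bm{W}^{(s)})/\partial \bm{W}$ for $j \in \{1,2\}$ and $s \in \{0,t\}$, so that (at the empirical level, before taking expectation over initialization) $K_{\tt NTK}^{(s)}(\bm{x}_1,\bm{x}_2) = \langle \bm g_1^{(s)}, \bm g_2^{(s)}\rangle$. The first step would be the standard telescoping identity
\begin{equation*}
\langle \bm g_1^{(t)}, \bm g_2^{(t)}\rangle - \langle \bm g_1^{(0)}, \bm g_2^{(0)}\rangle
= \langle \bm g_1^{(t)} - \bm g_1^{(0)}, \bm g_2^{(t)}\rangle + \langle \bm g_1^{(0)}, \bm g_2^{(t)} - \bm g_2^{(0)}\rangle\,.
\end{equation*}

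Applying Cauchy--Schwarz termwise gives
\begin{equation*}
\bigl| K_{\tt NTK}^{(t)}(\bm{x}_1,\bm{x}_2) - K_{\tt NTK}^{(0)}(\bm{x}_1,\bm{x}_2) \bigr|
\leq \bigl\|\bm g_1^{(t)}-\bm g_1^{(0)}\bigr\|\,\bigl\|\bm g_2^{(t)}\bigr\|
+ \bigl\|\bm g_1^{(0)}\bigr\|\,\bigl\|\bm g_2^{(t)}-\bm g_2^{(0)}\bigr\|\,,
\end{equation*}
so it remains only to bound the four factors on the right. The two perturbation norms are at most $\epsilon$ by hypothesis, so the remaining task is to show that $\|\bm g_j^{(0)}\|$ and $\|\bm g_j^{(t)}\|$ are $\mathcal{O}(1)$ for inputs on $\mathbb{S}^{d-1}(1)$.

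For the initialization factor, I would use \cref{prop:Convergence_to_the_NTK_at_initialization} (or, equivalently, the standard concentration of the empirical NTK around the limiting NTK under NTK initialization): with high probability,
\begin{equation*}
\bigl\|\bm g_j^{(0)}\bigr\|^{2} = \bigl\langle \bm g_j^{(0)}, \bm g_j^{(0)} \bigr\rangle
= K_{\tt NTK}(\bm{x}_j,\bm{x}_j) + \mathcal{O}(\epsilon L) = \mathcal{O}(1)\,,
\end{equation*}
since the diagonal of the limiting NTK on unit-norm inputs is bounded by a constant depending only on $L$. The factor $\|\bm g_j^{(t)}\|$ is then handled by a further triangle inequality $\|\bm g_j^{(t)}\| \leq \|\bm g_j^{(0)}\| + \|\bm g_j^{(t)}-\bm g_j^{(0)}\| \leq \mathcal{O}(1) + \epsilon = \mathcal{O}(1)$, where the perturbation term is absorbed since $\epsilon$ will be small in the regimes of interest. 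Plugging these $\mathcal{O}(1)$ factors back into the Cauchy--Schwarz inequality yields exactly $|K_{\tt NTK}^{(t)}(\bm{x}_1,\bm{x}_2) - K_{\tt NTK}^{(0)}(\bm{x}_1,\bm{x}_2)| = \mathcal{O}(\epsilon)$.

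The main obstacle is conceptually minor but requires care: establishing $\|\bm g_j^{(0)}\| = \mathcal{O}(1)$ at the \emph{empirical} (realized) initialization rather than in expectation, which is what justifies treating the diagonal entry $K_{\tt NTK}^{(0)}(\bm{x}_j,\bm{x}_j)$ as a constant. This is handled cleanly by invoking the overparameterization hypothesis $m \geq \mathrm{poly}(1/\omega)$ together with \cref{prop:Convergence_to_the_NTK_at_initialization}, which controls the deviation of $\langle \bm g_j^{(0)}, \bm g_j^{(0)}\rangle$ from the limiting NTK uniformly over the pair $(\bm{x}_1,\bm{x}_2)$. Everything else is a one-line application of bilinearity and Cauchy--Schwarz.
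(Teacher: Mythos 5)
Your proposal is correct and is essentially the argument the paper intends: decompose the difference of inner products bilinearly, apply Cauchy--Schwarz, use the hypothesis to bound the two gradient-perturbation factors by $\epsilon$, and bound the remaining gradient norms by $\mathcal{O}(1)$. The only cosmetic difference is the source of the $\Theta(1)$ bound on $\|\partial f(\bm{x};\bm{W}^{(0)})/\partial\bm{W}\|$ at initialization --- the paper cites Lemma 5 of \citet{zhu2022generalization} directly, while you derive it from \cref{prop:Convergence_to_the_NTK_at_initialization} plus boundedness of the limiting NTK diagonal, which is an equally valid (and arguably more self-contained) justification.
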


\begin{proof}
According to Lemma 5 in~\citet{zhu2022generalization} and our network~\cref{eq:deep_network}, we have:
\begin{equation*}
    \frac{\partial f( \bm{x}; \bm{W}^{(0)})}{\partial \bm{W}} = \Theta(1)\,.
\end{equation*}

Then we use triangle inequality, which concludes the proof.
\end{proof}

\begin{lemma}[Adapted from Lemma 8.2 in~\citet{allen2019convergence}]
\label{lemma:AZ_8_2}
Suppose that $\omega = \mathcal{O}(\frac{1}{L^{9/2}(\log m)^3})$, then w.p. at least $ 1-\exp(-\Omega(m\omega^{2/3}L))$ over random initialization, if $\left \| \bm{W}_{l} - \bm{W}_{l}' \right \|_2 \leq \omega, \forall l \in [L]$, we have $\left \| \bm{W}_l\bm{h}_{i,l-1} - \bm{W}_l'\bm{h}_{i,l-1}' \right \|_2 = \mathcal{O} (\omega L^{5/2}\sqrt{\log m}), \forall l \in [L]$.
\end{lemma}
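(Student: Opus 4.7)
The plan is to establish the bound by induction on the layer index $l$, tracking how the weight perturbation propagates through the network via the ReLU nonlinearity. At each layer, I would write
\begin{equation*}
\bm{W}_l\bm{h}_{i,l-1} - \bm{W}_l'\bm{h}_{i,l-1}' = (\bm{W}_l - \bm{W}_l')\bm{h}_{i,l-1} + \bm{W}_l'(\bm{h}_{i,l-1} - \bm{h}_{i,l-1}'),
\end{equation*}
and control the two pieces separately. For the first piece, the assumption gives $\|\bm{W}_l - \bm{W}_l'\|_2 \leq \omega$, so one only needs $\|\bm{h}_{i,l-1}\|_2 = \Theta(1)$ at initialization, a consequence of \cite{cao2019generalization} (Lemma B.1), which has already been invoked in the proofs of \cref{lemma:almost_linear} and \cref{lemma:change_every_step}. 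For the second piece, one uses $\|\bm{W}_l'\|_2 \leq \|\bm{W}_l^{(0)}\|_2 + \omega = \mathcal{O}(1)$ under NTK initialization (standard Gaussian operator-norm concentration) combined with the inductive hypothesis on $\|\bm{h}_{i,l-1} - \bm{h}_{i,l-1}'\|_2$.

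The main technical obstacle is relating $\|\bm{h}_{i,l} - \bm{h}_{i,l}'\|_2$ to $\|\bm{W}_l\bm{h}_{i,l-1} - \bm{W}_l'\bm{h}_{i,l-1}'\|_2$ through the ReLU. Since ReLU is $1$-Lipschitz we trivially get $\|\bm{h}_{i,l} - \bm{h}_{i,l}'\|_2 \leq \|\bm{W}_l\bm{h}_{i,l-1} - \bm{W}_l'\bm{h}_{i,l-1}'\|_2$, but a naive per-layer recursion accumulates a $\prod_l \mathcal{O}(1)$ factor that is too crude. The improvement to $L^{5/2}\sqrt{\log m}$ instead of an exponential-in-$L$ factor requires the Allen-Zhu style argument: decompose $\bm{D}_{i,l} = \bm{D}_{i,l}^{(0)} + \bm{D}_{i,l}''$, where the sparsity bound $\|\bm{D}_{i,l}''\|_0 \leq \mathcal{O}(m\omega^{2/3}L)$ controls the number of sign flips under the $\omega$-perturbation (high-probability event at the rate $1 - \exp(-\Omega(m\omega^{2/3}L))$). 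The randomness of the Gaussian $\bm{W}_l^{(0)}$ orthogonal to the sparse flip pattern then yields a norm bound of order $\omega L^{5/2}\sqrt{\log m}$ rather than a multiplicative blow-up.

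Carrying out the induction, the base case $l=1$ is immediate since $\bm{h}_{i,0} = \bm{h}_{i,0}' = \bm{x}_i$, so $\|\bm{W}_1 \bm{x}_i - \bm{W}_1' \bm{x}_i\|_2 \leq \omega \|\bm{x}_i\|_2 = \mathcal{O}(\omega)$ under \cref{assumption:distribution_1}. For the inductive step, unrolling the recursion using the decomposition above, one writes
\begin{equation*}
\bm{h}_{i,l} - \bm{h}_{i,l}' = \sum_{s=1}^{l}\bigl(\text{product of perturbed forward maps}\bigr)(\bm{W}_s - \bm{W}_s')\bm{h}_{i,s-1} + \text{sign-flip error},
\end{equation*}
and invokes \cite[Lemma 8.7]{allen2019convergence} (already used in the proof of \cref{lemma:almost_linear}) to bound each product in spectral norm by $\mathcal{O}(\sqrt{L\log m})$, giving the target bound after summing over $l$ layers. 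The final step is a union bound over $l \in [L]$ so that the stated failure probability still reads $1 - \exp(-\Omega(m\omega^{2/3}L))$, absorbing the extra $L$ factor into the constant inside $\Omega(\cdot)$ under $\omega \leq \mathcal{O}(L^{-9/2}(\log m)^{-3})$. The hard part will be making the sign-flip accounting tight enough that the recursion does not compound exponentially in $L$; this is precisely where the stability of the ReLU activation pattern near NTK initialization (the lazy-training phenomenon) is essential.
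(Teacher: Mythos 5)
The paper does not actually prove \cref{lemma:AZ_8_2} — it imports the statement verbatim from \citet[Lemma 8.2]{allen2019convergence} — and your sketch correctly reconstructs the strategy of that original proof: the layer-wise decomposition $(\bm{W}_l-\bm{W}_l')\bm{h}_{i,l-1}+\bm{W}_l'(\bm{h}_{i,l-1}-\bm{h}_{i,l-1}')$, the sparse sign-flip matrices with $\|\bm{D}_{i,l}''\|_0\leq\mathcal{O}(m\omega^{2/3}L)$ to avoid exponential compounding in $L$, and spectral bounds on the unrolled intermediate products. This is essentially the same argument as the cited source (the only quibble being that the spectral bound on the forward products $\prod(\bm{D}_{i,r}+\bm{D}_{i,r}'')\bm{W}_r$ comes from the forward-stability lemmas of \citet{allen2019convergence} rather than their Lemma 8.7, which concerns the backward vectors).
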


For notational simplicity, we define the notation $\bm{b}_l$:

\begin{equation*}
\bm{b}_l = \begin{cases}
  & 1 \quad \text{ if } l=L+1, \\
  & \bm{D}_l (\bm{W}_{l+1})^{\top} \bm{b}_{l+1} \quad \text{otherwise}.
\end{cases}
\end{equation*}

\begin{lemma}[Adapted from Lemma 8.7 in~\citet{allen2019convergence}]
\label{lemma:AZ_8_7}
Suppose that $\omega = \mathcal{O}(\frac{1}{L^{6}(\log m)^{3/2}})$, then with probability at least $1-\exp(-\Omega(\omega^{2/3} mL \log m))$ over random initialization, if $\left \| \bm{W}_{l} - \bm{W}_{l}' \right \|_2 \leq \omega, \forall l \in [L]$, we have $\left \| \bm{b}_l - \bm{b}_l'  \right \|_2 = \mathcal{O}(\omega^{1/3}L^2\sqrt{m \log m}), \forall l \in [L]$.
\end{lemma}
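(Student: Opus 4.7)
The plan is to induct downward on the layer index $l$, starting from the trivial base case $\bm{b}_{L+1}=\bm{b}_{L+1}'=1$, and at each step control $\|\bm{b}_l-\bm{b}_l'\|_2$ using the recursive definition. First I would write the three-way decomposition
\begin{equation*}
\bm{b}_l - \bm{b}_l'
= (\bm{D}_l-\bm{D}_l')\,\bm{W}_{l+1}^{\top}\bm{b}_{l+1}
\;+\; \bm{D}_l'(\bm{W}_{l+1}-\bm{W}_{l+1}')^{\top}\bm{b}_{l+1}
\;+\; \bm{D}_l'(\bm{W}_{l+1}')^{\top}(\bm{b}_{l+1}-\bm{b}_{l+1}')\,,
\end{equation*}
so that the perturbation is split cleanly into a sign-flip term, a weight-perturbation term, and a propagated backward-error term that invokes the inductive hypothesis.

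Next I would bound the three pieces. The middle piece is immediate: since $\|\bm{D}_l'\|_2\le 1$ and $\|\bm{W}_{l+1}-\bm{W}_{l+1}'\|_2\le\omega$, it contributes at most $\omega\|\bm{b}_{l+1}\|_2$, and standard Gaussian concentration at initialization together with \cref{lemma:AZ_8_2} shows $\|\bm{b}_{l+1}\|_2=\mathcal{O}(\sqrt{m\log m})$ with high probability. The third piece telescopes into the induction: using the operator-norm bound $\|\bm{D}_l'(\bm{W}_{l+1}')^{\top}\|_2=\mathcal{O}(1)$ for a perturbed NTK-initialized weight, it contributes a factor $(1+\mathcal{O}(1/L))$ per layer, hence at most a constant factor after $L$ layers.

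The first (sign-flip) piece is the main obstacle. By Claim~11.2 of \citet{allen2019convergence}, on the event that $\bm{W}\in\mathcal{B}(\bm{W}^{(0)},\omega)$, there are only $\mathcal{O}(m\omega^{2/3}L)$ sign flips, so $\bm{D}_l-\bm{D}_l'$ is effectively sparse and may be factored as $\bm{D}_l''$ with $\|\bm{D}_l''\|_0\le\mathcal{O}(m\omega^{2/3}L)$ and entries in $\{-1,0,+1\}$. Then I would isolate the random and non-random parts: conditioning on $\bm{b}_{l+1}$ (built from layers $>l$, hence from weights independent of $\bm{W}_{l+1}$ at initialization), the vector $\bm{W}_{l+1}^{(0)\top}\bm{b}_{l+1}$ is Gaussian with variance $\tfrac{2}{m}\|\bm{b}_{l+1}\|_2^2$. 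Restricting to the at most $s:=\mathcal{O}(m\omega^{2/3}L)$ coordinates selected by $\bm{D}_l''$ and taking a union bound over all $\binom{m}{s}\le\exp(s\log(em/s))$ possible supports yields
\begin{equation*}
\bigl\|(\bm{D}_l-\bm{D}_l')\,\bm{W}_{l+1}^{(0)\top}\bm{b}_{l+1}\bigr\|_2
= \mathcal{O}\!\Bigl(\omega^{1/3}L^{1/2}\sqrt{\log m}\;\|\bm{b}_{l+1}\|_2\Bigr)
\end{equation*}
with probability at least $1-\exp(-\Omega(m\omega^{2/3}L\log m))$; the additional contribution from $\bm{W}_{l+1}-\bm{W}_{l+1}^{(0)}$ in place of $\bm{W}_{l+1}^{(0)}$ is absorbed into the weight-perturbation term.

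Finally I would unroll the recursion. Plugging $\|\bm{b}_{l+1}\|_2=\mathcal{O}(\sqrt{m\log m})$ into the per-layer bound and summing the geometric-type recursion across $L$ layers gives
\begin{equation*}
\|\bm{b}_l-\bm{b}_l'\|_2 \;\le\; \mathcal{O}(L)\cdot\mathcal{O}(\omega^{1/3}L^{1/2}\sqrt{m\log m})\cdot(1+\mathcal{O}(1/L))^L
= \mathcal{O}\bigl(\omega^{1/3}L^{2}\sqrt{m\log m}\bigr)\,,
\end{equation*}
which matches the claimed bound. Taking a union bound over the $L$ layers of the inductive argument preserves the failure probability up to the stated $\exp(-\Omega(\omega^{2/3}mL\log m))$, provided $\omega=\mathcal{O}(L^{-6}(\log m)^{-3/2})$ so that the sparsity budget and Gaussian tail exponents line up. The delicate point throughout is that $\bm{b}_{l+1}$ is not independent of $\bm{W}_{l+1}^{(0)}$ after training, so one must either freeze the forward randomness used to construct $\bm{b}_{l+1}$ before revealing the relevant rows of $\bm{W}_{l+1}^{(0)}$, or pay a covering-net cost over a small-radius neighborhood of $\bm{b}_{l+1}^{(0)}$ — this is the step that most constrains the allowable $\omega$.
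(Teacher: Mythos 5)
The paper does not prove this lemma from first principles: its entire proof is the single step of invoking Lemma~8.7 of \citet{allen2019convergence} with the parameter choice $s:=m\omega^{2/3}L$, which (in the form $\mathcal{O}(\sqrt{L^3 s\log m+\omega^2L^3m})$ used elsewhere in this paper, e.g.\ in the proof of \cref{lemma:almost_linear}) immediately yields the bound $\mathcal{O}(\omega^{1/3}L^{2}\sqrt{m\log m})$ and the failure probability $\exp(-\Omega(s\log m))=\exp(-\Omega(m\omega^{2/3}L\log m))$. You instead attempt to reconstruct that cited proof, and your skeleton is the right one: the three-way decomposition into a sign-flip term, a weight-perturbation term, and a propagated backward error, the sparsity budget $s=\mathcal{O}(m\omega^{2/3}L)$ for the flipped diagonal entries, and the union bound over $\binom{m}{s}$ supports all mirror the argument in \citet{allen2019convergence}.

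There is, however, a genuine gap in your handling of the propagated term. For an $m\times m$ matrix with i.i.d.\ $\mathcal{N}(0,2/m)$ entries, $\|\bm{W}_{l+1}'\|_2$ concentrates near $2\sqrt{2}$, so $\|\bm{D}_l'(\bm{W}_{l+1}')^{\top}\|_2$ is a constant strictly greater than $1$, not $1+\mathcal{O}(1/L)$; telescoping your recursion layer by layer therefore accumulates a factor $c^{L}$ with $c>1$, i.e.\ exponential blow-up, and the claim that the propagation contributes "at most a constant factor after $L$ layers" is unjustified. The cancellation that rescues the argument is not per-layer: one must unroll the recursion into $\bm{b}_l-\bm{b}_l'=\sum_{k\geq l}\bigl(\prod_{j=l}^{k-1}\bm{D}_j'(\bm{W}_{j+1}')^{\top}\bigr)(\text{error at layer }k)$ and bound the operator norm of each \emph{whole} partial product by $\mathcal{O}(\sqrt{L})$, using the Lemma~7.2/8.2-type results of \citet{allen2019convergence} that exploit the randomness of the intermediate sign matrices. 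With that factor restored, $L$ terms $\times\,\mathcal{O}(\sqrt{L})$ product norm $\times\,\mathcal{O}(\omega^{1/3}L^{1/2}\sqrt{m\log m})$ per-layer error gives exactly the claimed $L^{2}$, whereas your accounting produces only $L^{3/2}$ --- the missing $\sqrt{L}$ is precisely the product-norm factor you dropped. A second, smaller issue: the dependence you flag at the end is present already at initialization, since $\bm{b}_{l+1}=\bm{D}_{l+1}(\bm{W}_{l+2})^{\top}\bm{b}_{l+2}$ depends on $\bm{W}_{l+1}^{(0)}$ through the sign matrix $\bm{D}_{l+1}$; so your conditioning step "$\bm{b}_{l+1}$ is built from weights independent of $\bm{W}_{l+1}$" is not valid as stated and must be replaced by the conditioning/$\varepsilon$-net machinery of the cited proof.
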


\begin{proof}
    According to~\citet[Lemma 8.7]{allen2019convergence}, choose $s:=m\omega^{2/3}L$, which concludes the proof.
\end{proof}

\begin{lemma}
\label{lemma:lazy_to_Gradient_Perturbation}
Suppose that $\omega = \mathcal{O}(\frac{1}{L^{6}(\log m)^3})$, then with probability at least $1-\exp(-\Omega(\omega^{2/3} mL))$ over random initialization, if $\left \| \bm{W}_{l} - \bm{W}_{l}' \right \|_2 \leq \omega, \forall l \in [L]$, we have:
\begin{equation*}
    \left \| \bm{b}_l'(\bm{W}_{l-1}'\bm{h}_{i,l-2}')^{\top} - \bm{b}_l(\bm{W}_{l-1}\bm{h}_{i,l-2})^{\top} \right \|_{\mathrm{F}} = \mathcal{O}(\omega^{1/3}L^{5/2}\sqrt{m \log m})\,, \quad \forall l \in [L] \,.
\end{equation*}
\end{lemma}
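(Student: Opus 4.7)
The statement is a first-order perturbation bound on the rank-one outer product $\bm{b}_l(\bm{W}_{l-1}\bm{h}_{i,l-2})^{\top}$, which is essentially the per-layer gradient $\partial f/\partial \bm{W}_l$. My approach is the usual triangle-inequality split into a ``$\bm{b}$-perturbation'' part and a ``pre-activation perturbation'' part, followed by invoking the two lemmas already stated in the excerpt.

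First I would write
\begin{align*}
& \|\bm{b}_l'(\bm{W}_{l-1}'\bm{h}_{i,l-2}')^{\top} - \bm{b}_l(\bm{W}_{l-1}\bm{h}_{i,l-2})^{\top}\|_{\mathrm{F}} \\
& \quad \leq \|\bm{b}_l' - \bm{b}_l\|_2 \, \|\bm{W}_{l-1}'\bm{h}_{i,l-2}'\|_2 + \|\bm{b}_l\|_2 \, \|\bm{W}_{l-1}'\bm{h}_{i,l-2}' - \bm{W}_{l-1}\bm{h}_{i,l-2}\|_2 \, ,
\end{align*}
since the Frobenius norm of an outer product $\bm{u}\bm{v}^{\top}$ equals $\|\bm{u}\|_2\|\bm{v}\|_2$. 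The two difference factors are precisely what \cref{lemma:AZ_8_2} and \cref{lemma:AZ_8_7} control, yielding respectively $\|\bm{W}_{l-1}'\bm{h}_{i,l-2}' - \bm{W}_{l-1}\bm{h}_{i,l-2}\|_2 = \mathcal{O}(\omega L^{5/2}\sqrt{\log m})$ and $\|\bm{b}_l' - \bm{b}_l\|_2 = \mathcal{O}(\omega^{1/3}L^2\sqrt{m\log m})$ on the corresponding high-probability events.

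The two remaining size factors are handled by the initialization. Under the NTK scheme in \cref{eq:initialization}, standard Gaussian concentration (in the spirit of Lemma B.1 of \citet{cao2019generalization} used elsewhere in the paper) gives $\|\bm{W}_{l-1}\bm{h}_{i,l-2}\|_2 = \Theta(\|\bm{x}_i\|_2) = \Theta(1)$, while iterating $\bm{b}_l = \bm{D}_l \bm{W}_{l+1}^{\top} \bm{b}_{l+1}$ downward from $\bm{W}_L \sim \mathcal{N}(0,1)^{1\times m}$ yields $\|\bm{b}_l\|_2 = \mathcal{O}(\sqrt{m})$. These bounds transfer from the initialization to any $\bm{W}'\in \mathcal{B}(\bm{W}^{(0)},\omega)$ because the perturbation $\omega$ is negligible compared to the operator norms of the layers (this is precisely the lazy regime, and is already implicit in \cref{lemma:AZ_8_2}). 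Combining, the first term of the triangle-inequality bound is $\mathcal{O}(\omega^{1/3}L^2\sqrt{m\log m})$ and the second is $\mathcal{O}(\omega L^{5/2}\sqrt{m\log m})$; since $\omega\leq 1$ gives $\omega \leq \omega^{1/3}$ and trivially $L^2\leq L^{5/2}$, both are dominated by the claimed $\mathcal{O}(\omega^{1/3} L^{5/2}\sqrt{m\log m})$.

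The probability statement follows by a union bound over the good events of \cref{lemma:AZ_8_2} and \cref{lemma:AZ_8_7}, which both hold with probability at least $1-\exp(-\Omega(m\omega^{2/3}L))$ after absorbing log factors, so the intersection has the same form. I do not expect a serious obstacle: the only real bookkeeping point is checking that the size factors $\|\bm{b}_l\|_2$ and $\|\bm{W}_{l-1}'\bm{h}_{i,l-2}'\|_2$ remain of the correct order on the perturbed weights, and this is by now a standard lazy-training fact used repeatedly in the preceding lemmas.
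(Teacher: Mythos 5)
Your proof is correct and follows essentially the same route as the paper's: the same triangle-inequality split of the outer-product difference into a $\bm{b}$-perturbation term and a pre-activation-perturbation term, each controlled by \cref{lemma:AZ_8_7} and \cref{lemma:AZ_8_2} respectively. In fact your write-up is more complete than the paper's, which omits the size factors $\|\bm{b}_l\|_2 = \mathcal{O}(\sqrt{m})$ and $\|\bm{W}_{l-1}'\bm{h}_{i,l-2}'\|_2 = \Theta(1)$ that you correctly supply.
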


\begin{proof}
We use~\cref{lemma:AZ_8_2,lemma:AZ_8_7} and the triangle inequality:
\begin{equation*}
\begin{split}
    & \left \| \bm{b}_l'(\bm{W}_{l-1}'\bm{h}_{i,l-2}')^{\top} - \bm{b}_l(\bm{W}_{l-1}\bm{h}_{i,l-2})^{\top} \right \|_{\mathrm{F}} \\
    \leq & \left \| \bm{b}_l'(\bm{W}_{l-1}'\bm{h}_{i,l-2}')^{\top} - \bm{b}_l(\bm{W}_{l-1}'\bm{h}_{i,l-2}')^{\top} \right \|_{\mathrm{F}}+\left \| \bm{b}_l(\bm{W}_{l-1}'\bm{h}_{i,l-2}')^{\top} - \bm{b}_l(\bm{W}_{l-1}\bm{h}_{i,l-2})^{\top} \right \|_{\mathrm{F}}\\
    \leq & \mathcal{O}(\omega^{1/3}L^{5/2}\sqrt{m \log m})\,.
\end{split}
\end{equation*}
\end{proof}

\begin{lemma}[Adapted from Lemma F.9 in~\citet{arora2019exact}]
\label{lemma:lazy_training}
Let $\omega \leq \text{poly}(\epsilon, L, \lambda_0, \frac{1}{\log(m)}, \frac{1}{\log(1/\delta)},\kappa,\frac{1}{n}).$ If $m \geq \text{poly}(1/\omega)$, then with probability at least $1-\delta$ over random initialization, we have:
\begin{equation*}
    \left \| \bm{W}_l^{(t)} - \bm{W}_l^{(0)} \right \|_{\mathrm{F}} \leq \omega\,, \quad \forall t \geq 0, \forall l \in [L]\,,
\end{equation*}
and
\begin{equation*}
    f_{\tt nn}^{(t)} (\bm{x}) - y \leq \exp(-\frac{1}{2}\kappa^2\lambda_0 t)(f_{\tt nn}^{(0)} (\bm{x})- y)\,.
\end{equation*}
\end{lemma}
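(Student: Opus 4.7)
\textbf{Proof proposal for \cref{lemma:lazy_training}.}
The plan is to set up a bootstrap (continuous) induction on the iteration index $t$, simultaneously controlling the weight displacement and the residuals $r_i^{(t)} := f_{\tt nn}^{(t)}(\bm{x}_i) - y_i$. The output scale $\kappa$ should be read as the standard NTK rescaling, so that the network output at initialization is of order $\kappa$ (hence small), and the effective gradient flow on the squared loss admits the linearization $\bm{r}^{(t+1)} \approx (\mathbb{I} - \alpha\kappa^{2}\bm{H}^{(t)})\bm{r}^{(t)}$, where $\bm{H}^{(t)}$ is the NTK Gram matrix formed by $\tfrac{\partial f}{\partial \bm{W}}$ at the current weights. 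Throughout, I would fix a target perturbation radius $\omega$ chosen polynomially small in $1/n,\lambda_0,1/L,1/\log m,\epsilon,1/\log(1/\delta),\kappa$, and an over-parameterization $m\ge\mathrm{poly}(1/\omega)$ large enough to absorb all the high-probability statements that follow.

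The inductive hypothesis at time $t$ is: (i) $\|\bm{W}_l^{(s)}-\bm{W}_l^{(0)}\|_{\mathrm F}\le\omega$ for every $s\le t$ and $l\in[L]$, and (ii) $\|\bm{r}^{(s)}\|_2^{2}\le(1-\tfrac12\alpha\kappa^{2}\lambda_0)^{s}\,\|\bm{r}^{(0)}\|_2^{2}$. Under (i), I would invoke \cref{lemma:lazy_to_Gradient_Perturbation} to control the Jacobian drift $\|\partial f(\bm{x};\bm{W}^{(s)})/\partial\bm{W}-\partial f(\bm{x};\bm{W}^{(0)})/\partial\bm{W}\|_{\mathrm F}=\mathcal{O}(\omega^{1/3}L^{5/2}\sqrt{m\log m})$, feed this into \cref{lemma:Gradient_Perturbation_to_Kernel_Perturbation} to conclude that the entries of $\bm{H}^{(s)}$ differ from those of $\bm{H}^{(0)}$ by $\mathcal{O}(\omega^{1/3}L^{5/2}\sqrt{m\log m})$, and combine with the initialization-level closeness from \cref{prop:Convergence_to_the_NTK_at_initialization} and the eigenvalue lower bound from \cref{thm:min_eigen_NTK} to obtain $\lambda_{\min}(\bm{H}^{(s)})\ge\tfrac12\lambda_0$ for all $s\le t$. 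This is exactly where the smallness requirement $\omega\le\mathrm{poly}(1/n,\lambda_0,\dots)$ is used.

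For part (ii) of the induction, I would expand one SGD/GD step on the squared loss and write
\begin{equation*}
\bm{r}^{(s+1)} = \bm{r}^{(s)} - \alpha\kappa^{2}\bm{H}^{(s)}\bm{r}^{(s)} + \bm{e}^{(s)},
\end{equation*}
where $\bm{e}^{(s)}$ collects the quadratic terms in $\bm{W}^{(s+1)}-\bm{W}^{(s)}$. The size of $\bm{e}^{(s)}$ is controlled by the near-linearity bound in \cref{lemma:almost_linear} applied between $\bm{W}^{(s)}$ and $\bm{W}^{(s+1)}$, yielding $\|\bm{e}^{(s)}\|_2=\mathcal{O}(\sqrt{\omega^{2}L^{3}m\log m})\cdot\alpha\kappa\|\bm{r}^{(s)}\|_2$, which is a lower-order perturbation of the linear term $\alpha\kappa^{2}\lambda_0\|\bm{r}^{(s)}\|_2$ as soon as $\omega$ is small enough. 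Squaring and using $\lambda_{\min}(\bm{H}^{(s)})\ge\tfrac12\lambda_0$ then gives the contraction $\|\bm{r}^{(s+1)}\|_2^{2}\le(1-\tfrac12\alpha\kappa^{2}\lambda_0)\|\bm{r}^{(s)}\|_2^{2}$, which, taking the continuous-time limit (or simply inducting and using $(1-x)^{t}\le e^{-xt}$), yields the claimed exponential bound $f_{\tt nn}^{(t)}(\bm{x})-y\le\exp(-\tfrac12\kappa^{2}\lambda_0 t)(f_{\tt nn}^{(0)}(\bm{x})-y)$ for training inputs, and the same rate extends to test inputs via Jacobian closeness.

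Finally I would close part (i) of the bootstrap by telescoping the update rule:
\begin{equation*}
\|\bm{W}_l^{(t+1)}-\bm{W}_l^{(0)}\|_{\mathrm F} \le \sum_{s=0}^{t}\alpha\kappa\Big\|\tfrac{\partial f(\bm{x}_{i_s};\bm{W}^{(s)})}{\partial\bm{W}_l}\Big\|_{\mathrm F}\|\bm{r}^{(s)}\|_\infty \le C\,\alpha\kappa\sum_{s=0}^{t}\exp\!\big(-\tfrac14\alpha\kappa^{2}\lambda_0 s\big)\|\bm{r}^{(0)}\|_2,
\end{equation*}
which sums to a geometric series of total mass $\mathcal{O}(\|\bm{r}^{(0)}\|_2/(\kappa\lambda_0))$. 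Since the initial residual is of order $\log(n/\delta)$ with high probability (Gaussian concentration on the output at initialization), the scaling $\kappa=\mathcal{O}(\epsilon/\log(n/\delta))$ together with the assumed smallness of $\omega$ makes this sum $\le\omega$, which re-establishes the inductive hypothesis.

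\textbf{Main obstacle.} The tricky part is closing the bootstrap: the residual decay relies on $\lambda_{\min}(\bm{H}^{(s)})\ge\lambda_0/2$, which requires a weight-perturbation bound, which in turn requires the residual decay. Quantitatively, the perturbation bound $\mathcal{O}(\omega^{1/3}L^{5/2}\sqrt{m\log m})$ on the NTK must stay below $\lambda_0/4$ while the accumulated weight displacement $\|\bm{W}^{(t)}-\bm{W}^{(0)}\|_{\mathrm F}$ must stay below $\omega$ uniformly in $t$; both constraints are what force the particular polynomial dependencies in the parameter choices, and verifying they are simultaneously consistent (for the stated $\omega$, $\kappa$, and $m$) is where the bulk of the bookkeeping lives.
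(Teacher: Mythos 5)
The paper does not actually prove \cref{lemma:lazy_training}: it is imported verbatim as an adaptation of Lemma F.9 of \citet{arora2019exact}, with no argument supplied in the appendix (unlike the neighbouring lemmas, which at least get one-line reductions to cited results). Your proposal therefore cannot match "the paper's proof" --- but it does faithfully reconstruct the proof strategy of the cited source: a simultaneous bootstrap on (i) the weight displacement staying within the $\omega$-ball and (ii) geometric contraction of the residual vector, with the kernel-eigenvalue stability $\lambda_{\min}(\bm{H}^{(s)})\ge\lambda_0/2$ (obtained from \cref{lemma:lazy_to_Gradient_Perturbation}, \cref{lemma:Gradient_Perturbation_to_Kernel_Perturbation}, and \cref{prop:Convergence_to_the_NTK_at_initialization}) as the hinge between the two, and the geometric series over $\sum_s \alpha\kappa\,e^{-\alpha\kappa^2\lambda_0 s/2}$ closing the loop. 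This is exactly how Lemmas F.4--F.9 of \citet{arora2019exact} are organized, so structurally your sketch is the right proof, and it also correctly identifies the circularity that the bootstrap resolves.

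Two quantitative points deserve care if you were to write this out fully. First, your discrete contraction gives $(1-\tfrac12\alpha\kappa^2\lambda_0)^t$, i.e.\ a rate $\exp(-\tfrac12\alpha\kappa^2\lambda_0 t)$ carrying the step size $\alpha$; the lemma as stated (following the gradient-flow form in the reference) has no $\alpha$, so you must either pass to continuous time as you parenthetically suggest or absorb $\alpha$ into the time parametrization explicitly. Second, your error term $\|\bm{e}^{(s)}\|_2=\mathcal{O}(\sqrt{\omega^2L^3m\log m})\cdot\alpha\kappa\|\bm{r}^{(s)}\|_2$ grows with $\sqrt{m}$, so "lower-order as soon as $\omega$ is small enough" requires $\omega\lesssim \kappa\lambda_0/\sqrt{L^3m\log m}$; since the stated hypothesis only allows $\omega\le\mathrm{poly}(\epsilon,L,\lambda_0,\dots)$ with no explicit $1/\sqrt{m}$, this consistency is really carried by the implicit $1/\sqrt{m}$ scaling of the displacement in \citet{arora2019exact} (which is what $m\ge\mathrm{poly}(1/\omega)$ is silently buying). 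You flag this as the "main obstacle," which is the correct diagnosis, but closing it is the substantive content of the cited Lemma F.9 rather than mere bookkeeping.
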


\begin{lemma}[Kernel Perturbation During Training]
\label{lemma:Kernel_Perturbation_During_Training}
Suppose that, $\omega \leq poly(1/n, \lambda_0, 1/L, 1/\log(m), \epsilon, 1/\log(1/\delta))$, $m\geq poly(1/\omega)$ and $\kappa \leq 1$. Then with probability at least $1-\delta$ over random initialization, we have for all $t \leq 0$, $\forall (\bm{x}_1,\bm{x}_2)$:
\begin{equation*}
    \left | K_{\tt NTK}^{(t)}(\bm{x}_1,\bm{x}_2) - K_{\tt NTK}^{(0)}(\bm{x}_1,\bm{x}_2) \right | \leq \mathcal{O}(\omega^{1/3}L^{5/2}\sqrt{m \log m})\,.
\end{equation*}
\end{lemma}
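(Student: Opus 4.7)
\textbf{Proof plan for \cref{lemma:Kernel_Perturbation_During_Training}.} The plan is to chain together the three preceding structural lemmas: the lazy-training weight bound (\cref{lemma:lazy_training}) controls how far the weights drift from initialization, \cref{lemma:lazy_to_Gradient_Perturbation} then converts that weight drift into a perturbation bound on the per-layer gradient $\partial f/\partial \bm W_l = \bm{b}_l (\bm{W}_{l-1}\bm{h}_{i,l-2})^{\!\top}$, and finally \cref{lemma:Gradient_Perturbation_to_Kernel_Perturbation} turns that gradient perturbation into an NTK perturbation.

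\textbf{Step 1: weight drift during SGD.} Given the hypothesis $\omega \leq \mathrm{poly}(1/n,\lambda_0,1/L,1/\log m,\epsilon,1/\log(1/\delta))$ and $m \geq \mathrm{poly}(1/\omega)$, I would invoke \cref{lemma:lazy_training} to guarantee that, with probability at least $1-\delta$ over the Gaussian initialization, the iterates satisfy
\begin{equation*}
    \bigl\| \bm W_l^{(t)} - \bm W_l^{(0)} \bigr\|_{\mathrm F} \leq \omega, \qquad \forall t \geq 0,\ \forall l \in [L].
\end{equation*}
In particular, $\bm W^{(t)} \in \mathcal B(\bm W^{(0)},\omega)$ for every $t$, which is exactly the hypothesis required by the two remaining lemmas. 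I would check that the polynomial smallness condition on $\omega$ implicit in \cref{lemma:Gradient_Perturbation_to_Kernel_Perturbation} and \cref{lemma:lazy_to_Gradient_Perturbation} (namely $\omega = \mathcal O(L^{-6}(\log m)^{-3})$) is absorbed into the $\mathrm{poly}(1/L,1/\log m)$ factor we already assumed.

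\textbf{Step 2: gradient-perturbation at each layer.} Applying \cref{lemma:lazy_to_Gradient_Perturbation} to the pair $(\bm W^{(0)},\bm W^{(t)})$ with the two fixed inputs $\bm x_1$ and $\bm x_2$, I get, with probability at least $1-\exp(-\Omega(\omega^{2/3}mL))$,
\begin{equation*}
    \Bigl\| \frac{\partial f(\bm x_j;\bm W^{(t)})}{\partial \bm W_l} - \frac{\partial f(\bm x_j;\bm W^{(0)})}{\partial \bm W_l} \Bigr\|_{\mathrm F} \leq \mathcal O\!\bigl(\omega^{1/3} L^{5/2}\sqrt{m\log m}\bigr),
\end{equation*}
for each $j\in\{1,2\}$ and each layer $l\in[L]$. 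Concatenating over layers, with the same rate (the $L^{5/2}$ factor already absorbs a $\sqrt{L}$ from the concatenation), the full gradient difference $\|\partial f(\bm x_j;\bm W^{(t)})/\partial\bm W - \partial f(\bm x_j;\bm W^{(0)})/\partial\bm W\|$ is bounded by $\epsilon' := \mathcal O(\omega^{1/3}L^{5/2}\sqrt{m\log m})$.

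\textbf{Step 3: from gradient-perturbation to kernel-perturbation and union bound.} Plugging $\epsilon'$ into \cref{lemma:Gradient_Perturbation_to_Kernel_Perturbation} immediately yields
\begin{equation*}
    \bigl| K_{\tt NTK}^{(t)}(\bm x_1,\bm x_2) - K_{\tt NTK}^{(0)}(\bm x_1,\bm x_2) \bigr| = \mathcal O(\epsilon') = \mathcal O\!\bigl(\omega^{1/3}L^{5/2}\sqrt{m\log m}\bigr),
\end{equation*}
which is exactly the claim. The main bookkeeping step is taking a union bound over the failure events of \cref{lemma:lazy_training} (probability $\delta$) and of \cref{lemma:lazy_to_Gradient_Perturbation} applied twice; the latter failures are exponentially small in $m$ and, by choosing $m\geq\mathrm{poly}(1/\omega)$ large enough, are dominated by $\delta$, preserving the overall $1-\delta$ probability.

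\textbf{Main obstacle.} The statement is almost a corollary of the three prior lemmas, so no new analytic machinery is needed; the chief subtlety is verifying that the uniformity in $t$ claimed by the conclusion is legitimate. Concretely, \cref{lemma:Gradient_Perturbation_to_Kernel_Perturbation} is pointwise in $(\bm x_1,\bm x_2)$, but the lemma statement quantifies $\forall(\bm x_1,\bm x_2)$ and $\forall t$. The cleanest fix is to absorb the $\forall t$ quantifier into the uniform weight bound from \cref{lemma:lazy_training} (which already holds for all $t$ simultaneously on one high-probability event), while the $\forall(\bm x_1,\bm x_2)$ quantifier is consistent with how \cref{lemma:Gradient_Perturbation_to_Kernel_Perturbation} is stated (for any two unit-norm inputs on the same high-probability event used to control the gradient at initialization). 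Any residual mismatch is handled by tightening the polynomial bound on $\omega$ to absorb a $\log(1/\delta)$ factor coming from the union bound.
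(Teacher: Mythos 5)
Your proposal follows exactly the same route as the paper's own (very terse) proof: invoke \cref{lemma:lazy_training} for the uniform-in-$t$ weight drift bound, pass it through \cref{lemma:lazy_to_Gradient_Perturbation} to get a gradient perturbation of order $\omega^{1/3}L^{5/2}\sqrt{m\log m}$, and convert that to a kernel perturbation via \cref{lemma:Gradient_Perturbation_to_Kernel_Perturbation}. Your version is in fact more careful than the paper's about the union bound and the uniformity quantifiers, but the argument is the same.
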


\begin{proof}

By~\cref{lemma:lazy_training}, we know that for $t \to \infty$, $\left \| \bm{W}_l^{(t)} - \bm{W}_l^{(0)} \right \|_{\mathrm{F}} \leq \omega$, by~\cref{lemma:lazy_to_Gradient_Perturbation}, we know that on the gradient, there is only a small perturbation, then the perturbation of kernel value is small by~\cref{lemma:Gradient_Perturbation_to_Kernel_Perturbation}.
\end{proof}

\begin{lemma}[Kernel Perturbation $\to$ Output Perturbation. Adapted from Lemma F.1 in~\citet{arora2019exact}]
\label{lemma:Kernel_Perturbation_to_Output_Perturbation}
Fix $\epsilon_{\bm{H}}\leq \frac{1}{2}\lambda_0$. Suppose $\left | f( \bm{x}_i; \bm{W}^{(0)}) \right |\leq \epsilon_{0}, \forall i \in [n]$, $\left | f( \bm{x}_{\text{te}}; \bm{W}^{(0)}) \right |\leq \epsilon_{0}$ and $f_{\tt nn}^{(0)}(\bm{x}) - y = \mathcal{O}(\sqrt{n})$. Furthermore, if $\forall t\geq0$, $\left \| \bm{K}_{\tt NTK}^{(t)}(\bm{x}_{\text{te}},\bm{X}) - \bm{K}_{\tt NTK}^{(0)}(\bm{x}_{\text{te}},\bm{X})  \right \|_2 \leq \epsilon_{te}$ and $\left \| \bm{K}_{\tt NTK}^{(0)} - \bm{K}_{\tt NTK}^{(t)} \right \|_2 \leq \epsilon_{\bm{H}}$, then we have:
\begin{equation*}
    \left | f_{\tt nn}(\bm{x}_{te}) - f_{\tt NTK}(\bm{x}_{te}) \right | \leq \mathcal{O} \left(\epsilon_0 + \frac{\sqrt{n}}{\lambda_0}\epsilon_{te}+\frac{\sqrt{n}}{\lambda_0^2}\log \left(\frac{n}{\epsilon_{\bm{H}}\lambda_0 \kappa}\right)\epsilon_{\bm{H}} \right)\,.
\end{equation*}
\end{lemma}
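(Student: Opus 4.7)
The plan is to adapt the strategy of Arora et al.\ (2019, Lemma F.1) by expressing both $f_{\tt nn}$ and $f_{\tt NTK}$ as continuous gradient-flow evolutions and controlling their gap through the kernel-perturbation hypotheses. First I would write the dynamics: under squared-loss gradient flow with NTK scaling $\kappa$, the training residual $r_{nn}(t) := f^{(t)}(\bm{X})-\bm{y}$ satisfies $\dot r_{nn}(t) = -\kappa^2 \bm{K}^{(t)}_{\tt NTK}(\bm{X},\bm{X})\, r_{nn}(t)$, while the test output evolves as $\dot f^{(t)}(\bm{x}_{te}) = -\kappa^2 \bm{K}^{(t)}_{\tt NTK}(\bm{x}_{te},\bm{X})\, r_{nn}(t)$. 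The analogous equations hold for $f_{\tt NTK}$ with the fixed Gram $\bm{K}^{(0)}_{\tt NTK}$ replacing $\bm{K}^{(t)}_{\tt NTK}$. Integrating in $t$ and subtracting yields
\begin{equation*}
f_{\tt nn}(\bm{x}_{te}) - f_{\tt NTK}(\bm{x}_{te}) = f^{(0)}(\bm{x}_{te}) - \kappa^2 \int_0^{\infty}\!\!\bigl[\bm{K}^{(t)}(\bm{x}_{te},\bm{X})\, r_{nn}(t) - \bm{K}^{(0)}(\bm{x}_{te},\bm{X})\, r_{NTK}(t)\bigr] dt,
\end{equation*}
where the $f^{(0)}(\bm{x}_{te})$ piece is already bounded in absolute value by $\epsilon_0$.

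Next I would add and subtract $\bm{K}^{(0)}(\bm{x}_{te},\bm{X})\, r_{nn}(t)$ inside the integrand, splitting it into a test-kernel-perturbation piece (controlled by $\epsilon_{te}$ times $\|r_{nn}(t)\|_2$) and a residual-gap piece $\bm{K}^{(0)}(\bm{x}_{te},\bm{X})[r_{nn}(t)-r_{NTK}(t)]$. From \cref{lemma:lazy_training} each individual residual obeys $\|r(t)\|_2 \le \mathcal{O}(\sqrt n)\,e^{-\kappa^2\lambda_0 t/2}$, since $\|r(0)\|_2 = \mathcal{O}(\sqrt n)$. Further, $e(t):=r_{nn}(t)-r_{NTK}(t)$ satisfies
\begin{equation*}
\dot e(t) = -\kappa^2\bm{K}^{(0)} e(t) - \kappa^2\bigl(\bm{K}^{(t)}-\bm{K}^{(0)}\bigr) r_{nn}(t),
\end{equation*}
with $e(0)=0$, so a Duhamel/Gronwall estimate using $\|\bm{K}^{(t)}-\bm{K}^{(0)}\|_2 \le \epsilon_{\bm{H}} \le \lambda_0/2$ gives $\|e(t)\|_2 \lesssim \sqrt n\,\epsilon_{\bm{H}}\, t\, e^{-\kappa^2\lambda_0 t/2}$ up to $t$ of order $1/(\kappa^2\lambda_0)$, with exponential decay thereafter.

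Finally I would split the integral at $T := C \log\!\bigl(n/(\epsilon_{\bm{H}}\lambda_0\kappa)\bigr)/(\kappa^2\lambda_0)$. On $[T,\infty)$ both $\|r_{nn}(t)\|_2$ and $\|e(t)\|_2$ are polynomially small by the choice of $T$, so their total contribution absorbs into the stated bound. On $[0,T]$, integrating the exponential decay gives $\kappa^2\int_0^T \epsilon_{te}\|r_{nn}(t)\|_2\,dt \lesssim \sqrt n\,\epsilon_{te}/\lambda_0$ for the test-kernel piece, and $\kappa^2\int_0^T \|r_{nn}(t)-r_{NTK}(t)\|_2\,dt \lesssim (\sqrt n\,\epsilon_{\bm{H}}/\lambda_0^2)\log(n/(\epsilon_{\bm{H}}\lambda_0\kappa))$ for the residual-gap piece; bounding the second integrand against $\|\bm{K}^{(0)}(\bm{x}_{te},\bm{X})\|_2 = \mathcal{O}(\sqrt n)$ and combining with $|f^{(0)}(\bm{x}_{te})| \le \epsilon_0$ produces exactly the claimed bound.

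The main obstacle is the sharpness of the residual-gap estimate: one must get the correct $\sqrt n/\lambda_0^2$ scaling with only a logarithmic overhead, which requires the Gronwall argument on $e(t)$ (initialized at $0$) and a careful choice of $T$ so that the $e^{-\kappa^2\lambda_0 T/2}$ tail suppresses $\epsilon_{\bm{H}}/(\lambda_0\kappa)$. A secondary subtlety is that the above argument is stated in continuous time; the discrete-time SGD trajectory introduces $\mathcal{O}(\omega^{1/3}L^{5/2}\sqrt{m\log m})$ discretization/linearization errors which must be folded in, but this is the usual lazy-training reduction already used in the supporting lemmas \cref{lemma:AZ_8_2,lemma:AZ_8_7,lemma:lazy_to_Gradient_Perturbation}.
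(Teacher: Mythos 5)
The paper does not prove this lemma itself --- it imports it directly from Lemma F.1 of \citet{arora2019exact} --- and your gradient-flow comparison (the residual ODEs for $r_{nn}$ and $r_{NTK}$, the Duhamel/Gronwall estimate on the gap $e(t)$ initialized at zero, and the cutoff time $T$ that produces the logarithmic factor) is exactly the argument used there. Your proposal is therefore correct and follows essentially the same route as the paper's (cited) proof.
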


Then we are ready to prove~\cref{prop:equivalence_ntk_network}.

\begin{proof}[Proof of~\cref{prop:equivalence_ntk_network}]
According to~\cref{lemma:Kernel_Perturbation_to_Output_Perturbation}, we have:
\begin{equation*}
    \left | f_{\tt nn}(\bm{x}_{te}) - f_{\tt NTK}(\bm{x}_{te}) \right | \leq \mathcal{O} \left(\epsilon + \frac{\sqrt{n}}{\lambda_0}\epsilon_{te}+\frac{\sqrt{n}}{\lambda_0^2}\log \left(\frac{n}{\epsilon_{\bm{H}}\lambda_0 \kappa}\right)\epsilon_{\bm{H}} \right)\,.
\end{equation*}

Note that the function $g(x) := x\log(\frac{n}{x\lambda_0\kappa})$ achieves its maximum $g(x)_{\max} = \frac{n}{e \lambda_0 \kappa}$.

Combine this and~\cref{lemma:Kernel_Perturbation_During_Training}, we have:

\begin{equation*}
    \left | f_{\tt nn}(\bm{x}_{te}) - f_{\tt NTK}(\bm{x}_{te}) \right | \leq \mathcal{O} \left(\epsilon + \frac{\sqrt{n}}{\lambda_0}\omega^{1/3}L^{5/2}\sqrt{m \log m}+\frac{n^{3/2}}{\lambda_0^3\kappa}\right)\,.
\end{equation*}

\end{proof}

\subsection{Proof of~\cref{prop:diff_kernel_ground_truth}}
\label{sec:proof_of_diff_kernel_ground_truth}

Before proving~\cref{prop:diff_kernel_ground_truth}, we first introduce the following lemmas:

\begin{lemma}[Adapted from Theorem 1 in~\citet{chen2021deep}]
\label{lemma:equivalence_between_ntk_laplace}
    Let $\mathcal{H}_{\tt Lap}$ and $\mathcal{H}_{\tt NTK}$ be the RKHS associated with the Laplace kernel and the neural tangent kernel of a $L$-layer fully connected ReLU network. Both kernels are restricted to the sphere $\mathbb{S}^{d-1}$. Then the two spaces include the same set of functions:
\begin{equation*}
    \mathcal{H}_{\tt Lap} = \mathcal{H}_{\tt NTK}\,.
\end{equation*}
\end{lemma}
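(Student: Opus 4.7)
The standard route, and the one I would follow (mirroring Chen \& Xu's original argument), is to compare the two kernels through their Mercer decompositions on $\mathbb{S}^{d-1}$ and show that their eigenvalues decay at the same polynomial rate. Two RKHSs coincide as sets (with equivalent norms) if and only if their kernels admit Mercer expansions in the same orthonormal basis with eigenvalue sequences $\lambda_k, \mu_k$ satisfying $c\,\mu_k \le \lambda_k \le C\,\mu_k$ for absolute constants $c,C>0$. Since both $K_{\tt Lap}(\bm x,\widetilde{\bm x})=e^{-c\|\bm x-\widetilde{\bm x}\|_2}$ and the $L$-layer ReLU NTK depend only on $u=\langle \bm x,\widetilde{\bm x}\rangle$ when restricted to the sphere (cf.\ \cref{eq:laplace_is_dot}), both are dot-product kernels, and hence share the orthonormal basis of spherical harmonics on $\mathbb{S}^{d-1}$. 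So the entire problem reduces to comparing the \emph{Funk--Hecke coefficients} of the two univariate functions $K_{\tt Lap}(u)=e^{-\tilde c\sqrt{1-u}}$ and $K_{\tt NTK}(u)$.

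The key steps, in order, are as follows. First, I would write the Funk--Hecke formula
\begin{equation*}
\lambda_k(K) \;=\; \omega_{d-2}\int_{-1}^{1} K(u)\, P_k^{(d)}(u)\,(1-u^2)^{(d-3)/2}\,du,
\end{equation*}
where $P_k^{(d)}$ are the Gegenbauer polynomials, so that the eigenvalues of $K_{\tt Lap}$ and of $K_{\tt NTK}$ on each fixed harmonic frequency $k$ have multiplicity identical to the dimension of the $k$-th spherical harmonic subspace. Second, I would analyze the asymptotics $\lambda_k(K_{\tt Lap})$ as $k\to\infty$: because $K_{\tt Lap}(u)$ has a square-root singularity at $u=1$ (it is Lipschitz but not $C^1$ at the diagonal), a Tauberian argument for Jacobi expansions yields the decay $\lambda_k(K_{\tt Lap})\asymp k^{-d}$. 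Third, I would establish the matching lower bound $\lambda_k(K_{\tt NTK})\gtrsim k^{-d}$ and the upper bound $\lambda_k(K_{\tt NTK})\lesssim k^{-d}$. For this, I would use the recursive formula of \cref{lemma:NTK_matrix_recursive_form} together with the arc-cosine kernel description: each layer contributes factors of the form $\kappa_0(u),\kappa_1(u)$, and these have exactly the same type of non-analytic behaviour at $u=1$ (a $(1-u)^{1/2}$ contribution coming from the non-smoothness of ReLU at the origin). A careful expansion $K_{\tt NTK}(u)=a+b(1-u)+c(1-u)^{3/2}+O((1-u)^2)$ around $u=1$, where the $c\neq 0$ coefficient is the obstruction to $C^\infty$ smoothness, together with the Jacobi-polynomial Tauberian estimate, delivers $\lambda_k(K_{\tt NTK})\asymp k^{-d}$.

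Finally, once both eigenvalue sequences are shown to be of the same order $k^{-d}$, the Mercer characterization of the RKHS norm as $\|f\|_\mathcal{H}^2=\sum_k \langle f,\phi_k\rangle^2/\lambda_k$ gives immediately $\|f\|_{\mathcal{H}_{\tt Lap}}\asymp \|f\|_{\mathcal{H}_{\tt NTK}}$, so the two spaces contain exactly the same functions. The only step that is not routine is the third: controlling $\lambda_k(K_{\tt NTK})$ from both sides. The lower bound requires showing that the recursive composition of arc-cosine kernels does not \emph{cancel} the non-analytic $(1-u)^{3/2}$ term at any layer (so the singularity responsible for the $k^{-d}$ decay survives the $L$-fold recursion), and the upper bound requires showing that no singularity \emph{stronger} than $(1-u)^{1/2}$ is introduced. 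This is the main obstacle and requires a careful layer-by-layer induction: in the induction, one shows that each $\bm{G}^{(l)}$ and $\dot{\bm{G}}^{(l)}$ is of the form analytic part plus $(1-u)^{1/2}\cdot(\text{analytic})$, which then propagates cleanly to $K_{\tt NTK}$ via the Hadamard products in \cref{lemma:NTK_matrix_recursive_form}.
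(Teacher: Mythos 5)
The paper does not actually prove this lemma---it is imported directly from Chen and Xu (2021) as a cited result---and your sketch is a faithful reconstruction of exactly the argument used in that source: reduce both zonal kernels to their Funk--Hecke coefficients in the common spherical-harmonic basis, show both eigenvalue sequences decay as $k^{-d}$ by tracking the $(1-u)^{1/2}$-type non-analyticity at the diagonal through the arc-cosine recursion, and conclude set equality via the Mercer characterization of the RKHS norm. You also correctly isolate the genuinely hard step (the layer-by-layer induction showing the singular term neither cancels nor worsens under the Hadamard products), so the proposal matches the intended proof.
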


Then we are ready to prove~\cref{prop:diff_kernel_ground_truth}.

\begin{proof}[Proof of~\cref{prop:diff_kernel_ground_truth}]
    According to~\cref{lemma:equivalence_between_ntk_laplace}, we have $\mathcal{H}_{\tt Lap} = \mathcal{H}_{\tt NTK}$, that means, the estimators of \cref{eq:ERM} under two RKHS corresponding to the NTK kernel and Laplace kernel are the same:
    \begin{equation}
        f_{\tt NTK}(\bm{x}_{te}) = f_{\tt Laplace}(\bm{x}_{te})\,,
    \label{eq:eq_ntk_laplace}
    \end{equation}
    where $f_{\tt NTK}$, $f_{\tt Laplace}$ are the estimators of \cref{eq:ERM} in $\mathcal{H}_{\tt NTK}$ and $\mathcal{H}_{\tt Laplace}$, respectively.

According to \cref{eq:laplace_is_dot}, when the Laplace kernel is restricted to the sphere $\mathbb{S}^{d-1}$, it is a dot product kernel. 

Combine~\cref{eq:eq_ntk_laplace} and~\citet[Theorem 2]{liu2021kernel}, we get the result. 
    
\end{proof}

\subsection{Proof of~\cref{thm:precise_form}}
\label{sec:proof_of_main_thm}

\begin{proof}

Using triangle inequality, we have: 
\begin{equation}
\begin{split}
    \mathbb{E} \left \| f_{\tt nn} -f_{\rho} \right \|_{L_{\rho_X}^2}^{2} \leq & \mathbb{E} \left \| f_{\tt nn} -f_{\tt ntk} \right \|_{L_{\rho_X}^2}^{2} + \mathbb{E} \left \| f_{\tt ntk} -f_{\rho} \right \|_{L_{\rho_X}^2}^{2}\\
    \leq & n^{-2\theta }\log^4 (\frac{2}{\delta})  + \frac{\sigma_{\epsilon}^2\beta}{d}\mathcal{N}_{\widetilde{\bm{X}}}^{\gamma} + \frac{\sigma_{\epsilon}^2 \log ^{2+4\varepsilon}d}{\gamma^2d^{4\theta-1}} \\
    + & \mathbb{E} \int_{X}\left | f_{\tt ntk}(\bm{x}) -f_{\rho}(\bm{x}) \right |^2d\rho_{X}(\bm{x})\\
    \leq & n^{-2\theta }\log^4 (\frac{2}{\delta})  + \frac{\sigma_{\epsilon}^2\beta}{d}\mathcal{N}_{\widetilde{\bm{X}}}^{\gamma} + \frac{\sigma_{\epsilon}^2 \log ^{2+4\varepsilon}d}{ \gamma^2d^{4\theta-1}} \\
    + & \mathcal{O} \bigg(\big(\epsilon + \frac{\sqrt{n}}{\lambda_0}\omega^{1/3}L^{5/2}\sqrt{m \log m}+\frac{n^{3/2}}{\lambda_0^3\kappa}\big)^2\bigg)\,,
\end{split}
\label{eq:diff_nn_rho}
\end{equation}

with probability at least $1-2\delta-\delta'-d^{-2}$, where the first inequality use~\cref{prop:diff_kernel_ground_truth} and second inequality use~\cref{prop:equivalence_ntk_network}.

Then According to~\cref{thm:min_eigen_NTK}, we have:

\begin{equation}
\lambda_0 \geq \begin{cases}
  & 2\mu_{1}^{2}\frac{n}{d}\bigg(\frac{3}{4}-\frac{c}{4}\sqrt{\frac{d}{n}}\bigg)^2, \quad \text{if} \quad n \geq d, \\
  & 2\mu_{1}^{2}\frac{n}{d}\bigg(\sqrt{\frac{d}{n}}-\frac{c+6}{4}\bigg)^2, \quad \text{if} \quad n < d\,.
\end{cases}
\label{eq:min_eigen}
\end{equation}

with probability at least $1-2e^{-n}$.

Combine~\cref{eq:diff_nn_rho} and \cref{eq:min_eigen}, we conclude the proof. 
\end{proof}

\end{document}